\pdfoutput=1
\documentclass{article}

% if you need to pass options to natbib, use, e.g.:
%     \PassOptionsToPackage{numbers, compress}{natbib}
% before loading neurips_2020

% ready for submission
% \usepackage{neurips_2020}

% to compile a preprint version, e.g., for submission to arXiv, add add the
% [preprint] option:
%     \usepackage[preprint]{neurips_2020}

% to compile a camera-ready version, add the [final] option, e.g.:
%     \usepackage[final]{neurips_2020}

% to avoid loading the natbib package, add option nonatbib:
\usepackage[final, nonatbib]{neurips_2023}
\usepackage[numbers, sort, compress]{natbib}
\usepackage[dvipsnames]{xcolor}
\usepackage[utf8]{inputenc} % allow utf-8 input
\usepackage[T1]{fontenc}    % use 8-bit T1 fonts
\usepackage{hyperref}       % hyperlinks
\usepackage{url}            % simple URL typesetting
\usepackage{booktabs}       % professional-quality tables
\usepackage{amsfonts}       % blackboard math symbols
\usepackage{nicefrac}       % compact symbols for 1/2, etc.
\usepackage{listings}
\usepackage{microtype}      % microtypography
\usepackage{mathtools}
\usepackage{refcount}
\usepackage{algorithm}
\newcommand{\rarrow}[2]{\xrightarrow{\makebox[#1]{$#2$}}}
\newcommand{\larrow}[2]{\xleftarrow{\makebox[#1]{$#2$}}}
\usepackage{algpseudocode}
\usepackage{amsmath,amssymb, amsthm}
\usepackage{physics}
\usepackage[subtle, mathdisplays=tight, charwidths=normal, leading=normal]{savetrees}
\usepackage{caption}
\usepackage[bottom]{footmisc}
\usepackage{subcaption}

%%%%% NEW MATH DEFINITIONS %%%%%

\usepackage{amsmath,amsfonts,bm, amsthm}

% Theorem
\newtheorem{theorem}{Theorem}
\newtheorem{assumption}{Assumption}
\newtheorem{lemma}{Lemma}

% Mark sections of captions for referring to divisions of figures

% Highlight a newly defined term

% Figure reference, lower-case.

% Figure reference, capital. For start of sentence

% Section reference, lower-case.

% Section reference, capital.

% Reference to two sections.

% Reference to three sections.

% Reference to an equation, lower-case.
\def\eqref#1{equation~\ref{#1}}
% Reference to an equation, upper case

% A raw reference to an equation---avoid using if possible

% Reference to a chapter, lower-case.

% Reference to an equation, upper case.

% Reference to a range of chapters

% Reference to an algorithm, lower-case.

% Reference to an algorithm, upper case.

% Reference to a part, lower case

% Reference to a part, upper case

\def\1{\bm{1}}

% Random variables

% rm is already a command, just don't name any random variables m

% Random vectors

% Elements of random vectors

% Random matrices

% Elements of random matrices

% Vectors

\def\vc{{\bm{c}}}

\def\vg{{\bm{g}}}
\def\vh{{\bm{h}}}

\def\vr{{\bm{r}}}

\def\vv{{\bm{v}}}
\def\vw{{\bm{w}}}
\def\vx{{\bm{x}}}
\def\vy{{\bm{y}}}
\def\vz{{\bm{z}}}

% Elements of vectors

% Matrix

% Tensor
\DeclareMathAlphabet{\mathsfit}{\encodingdefault}{\sfdefault}{m}{sl}
\SetMathAlphabet{\mathsfit}{bold}{\encodingdefault}{\sfdefault}{bx}{n}

% Graph

% Sets

% Don't use a set called E, because this would be the same as our symbol
% for expectation.

\def\sZ{{\mathbb{Z}}}

% Entries of a matrix

% entries of a tensor
% Same font as tensor, without \bm wrapper

% The true underlying data generating distribution

% The empirical distribution defined by the training set

% The model distribution

% Stochastic autoencoder distributions

 % Laplace distribution

\newcommand{\R}{\mathbb{R}}

% Wolfram Mathworld says $L^2$ is for function spaces and $\ell^2$ is for vectors
% But then they seem to use $L^2$ for vectors throughout the site, and so does
% wikipedia.

 % See usage in notation.tex. Chosen to match Daphne's book.

%\DeclareMathOperator{\Tr}{Tr}

%%% FORMATTING

\newcommand{\custompar}[1]{\noindent{\bf #1.}\:}

%%% NAMES

\newcommand{\grab}{GraB}
\newcommand{\cgrab}{\grab}
\newcommand{\dgrab}{CD-\grab}

\newcommand{\shuffle}{RR}
\newcommand{\dshuffle}{D-\shuffle}
\newcommand{\so}{SO}

%%% MATH
\newcommand{\epochs}{T}
\newcommand{\workers}{m}
\newcommand{\workerexamples}{n}
\newcommand{\examples}{N}

\newcommand{\windex}{i}
\newcommand{\exindex}{j}
\newcommand{\eindex}{t}
\newcommand{\loss}{f}

\newcommand{\dataex}{\vx}
\newcommand{\ex}{\vz}
\newcommand{\barex}{\bar{\vz}}
\newcommand{\exij}{\ex_{\windex,\exindex}}

\newcommand{\exj}{\ex_\exindex}
\newcommand{\weights}{\vw}
\newcommand{\weightst}{\weights_\eindex}
\newcommand{\perm}{\pi}

\newcommand{\sgn}{s}
\newcommand{\signj}{\sgn_\exindex}
\newcommand{\wexsign}{\sgn_\exindex^\windex}
\newcommand{\wexsignprev}{\sgn_{\exindex-1}^\windex}

\newcommand{\g}{\vg}
\newcommand{\wexgrad}{\g_\exindex^\windex}
\newcommand{\wexgradprev}{\g_{\exindex-1}^\windex}

\newcommand{\dstep}[1]{\textcolor{blue}{\textbf{\texttt{#1}}}}
\usepackage{float}
\usepackage{enumitem}

\setlength{\skip\footins}{0.25cm}

\newcommand*\samethanks[1][\value{footnote}]{\footnotemark[#1]}

\title{Coordinating Distributed~Example Orders for Provably Accelerated Training}

% The \author macro works with any number of authors. There are two commands
% used to separate the names and addresses of multiple authors: \And and \AND.
%
% Using \And between authors leaves it to LaTeX to determine where to break the
% lines. Using \AND forces a line break at that point. So, if LaTeX puts 3 of 4
% authors names on the first line, and the last on the second line, try using
% \AND instead of \And before the third author name.

\author{%
 A. Feder Cooper\thanks{Equal contribution} \\
  \And Wentao Guo\samethanks\\
  \And Khiem Pham\samethanks\\
  \AND Tiancheng Yuan\\
  \And Charlie F. Ruan\\
  \And Yucheng Lu\\
  \And Christopher De Sa\\
  \vspace{-.2cm}
  \and Cornell University\\
  \texttt{\{afc78, wg247, dkp45, ty373, cfr54, yl2967, cmd353\}@cornell.edu}
}
  
  % examples of more authors
  % \And
  % Coauthor \\
  % Affiliation \\
  % Address \\
  % \texttt{email} \\
  % \AND
  % Coauthor \\
  % Affiliation \\
  % Address \\
  % \texttt{email} \\
  % \And
  % Coauthor \\
  % Affiliation \\
  % Address \\
  % \texttt{email} \\
  % \And
  % Coauthor \\
  % Affiliation \\
  % Address \\
  % \texttt{email} \\

\begin{document}

\maketitle

\begin{abstract}
Recent research on online Gradient Balancing (GraB) has revealed that there exist permutation-based example orderings for SGD that are guaranteed to outperform random reshuffling (RR). Whereas RR arbitrarily permutes training examples, GraB leverages stale gradients from prior epochs to order examples --- achieving a provably faster convergence rate than RR. However, GraB is limited by design: while it demonstrates an impressive ability to scale-up training on \emph{centralized} data, it does not naturally extend to modern \emph{distributed} ML workloads. We therefore propose \emph{Coordinated Distributed GraB} (CD-GraB), which uses insights from prior work on kernel thinning to translate the benefits of provably faster permutation-based example ordering to distributed settings. With negligible overhead, CD-GraB exhibits a linear speedup in convergence rate over centralized GraB and outperforms distributed RR on a variety of benchmark tasks.\looseness=-1

\end{abstract}

\vspace{-.1cm}
\section{Introduction}\label{sec:intro}
\vspace{-.1cm}

Random reshuffling, which samples training-data examples without replacement, has become the \emph{de facto} example-ordering method in modern deep-learning libraries~\citep{pytorchshuffle}, given that it tends to accelerate optimizer convergence in practice. However, some recent theoretical work has identified cases in which random reshuffling can lead to data orderings that have a poor effect on convergence~\citep{desa2020shuffle, yun2021can, rajput2021permutationbased}. This has encouraged a line of research to investigate if there exist provably better permutation-based orderings that afford greater scalability in training~\citep{lu2021general,mohtashami2022characterizing,lu2022grab}. Notably, \citet{lu2022grab} connects permuted-order SGD to the \emph{herding problem}~\citep{harvey2014near}, and proposes the herding-based online Gradient Balancing algorithm (\grab), which converges provably faster than random reshuffling, and does so  with little memory or computational overhead. In fact, in follow-on work, \citet{cha2023tighter} proves that \grab{} is optimal: in theory, \grab{} is the fastest possible permutation-based example ordering algorithm.

These results are very exciting, suggesting that \grab{} should unseat random reshuffling as the example ordering method-of-choice for SGD; however, they only hold with respect to a \emph{single} machine. \grab{} is optimal in settings with \emph{centralized} data, but does not naturally translate to problems of modern-ML scale, which demand that training workloads be distributed across \emph{multiple parallel} workers that each only have access to a subset of the training data. This drawback raises an important question:

\vspace{-.2cm}
\begin{center}
    \textit{Can we simultaneously achieve the scalability benefits of distributed training and \\ provably faster permutation-based example ordering for SGD --- both in theory and in practice?}
\end{center}
\vspace{-.2cm}

In this work, we show that it is indeed possible to attain these twin objectives. 
To do so, we suggest the online \textbf{C}oordinated \textbf{D}istributed \textbf{Gra}diant \textbf{B}alance algorithm (\dgrab), which leverages insights from kernel thinning to elevate the herding framework of centralized \grab{} (\cgrab) to the parallel setting. 
Felicitously, as a side effect, this choice of formulation brings about positive practical performance benefits (that can also improve the empirical behavior of centralized \cgrab). 
Using the exact same assumptions as the original \cgrab{} paper, \textbf{we show analytically that coordinating example orders across parallel workers leads a linear speedup in convergence rate}. 
For $\epochs$ epochs and $\workers$ parallel workers, each with access to $\workerexamples$ examples, \dgrab's convergence rate is $\tilde{O}((\workers\workerexamples\epochs)^{-2/3})$ on smooth, non-convex objectives and $\tilde{O}((\workers\workerexamples\epochs)^{-2})$ under the Polyak-\L ojasiewicz (P.L.) condition.\footnote{In this paper, we use $\tilde O$ by convention to hide logarithmic factors in the problem parameters.} 

We  run a series of experiments to verify these improvements in practice, implementing \dgrab{} on a single node that distributes computation across multiple GPUs. 
We also run an ablation study in order to disentangle the benefits of parallelism from the positive side effects of using kernel thinning to formulate the \dgrab{} algorithm. 
Similar to how centralized \cgrab{} demonstrates improved generalization over centralized random reshuffling (\shuffle), we observe that \dgrab{} exhibits improved generalization over distributed random reshuffling (\dshuffle). 
Altogether, the success of our work suggests a new distributed training paradigm to explore in future work, which we call the \emph{Order Server} (Section~\ref{sec:conclusion}). In summary, we:
\begin{itemize}[topsep=0pt, leftmargin=.5cm]
    \item Propose the online \textbf{C}oordinated \textbf{D}istributed \textbf{Gra}dient \textbf{B}alancing (\dgrab) algorithm, which enables provably accelerated training using SGD in the parallel setting (Section~\ref{sec:dgrab});
    \item Prove that the convergence rate for \dgrab{} exhibits a linear speedup over \cgrab, using the exact same assumptions as the original \cgrab{} paper (Section~\ref{sec:theory}); 
    \item Produce extensive empirical validation of \dgrab's improved scalability on a variety of tasks in deep learning and on large-scale logistic regression  (Section~\ref{sec:experiments}). 
\end{itemize}

\vspace{-.1cm}
\section{Preliminaries and Related Work}\label{sec:prelimrw}
\vspace{-.1cm}

In this section, we discuss the preliminaries and prior scholarship on permutation-based example ordering, with particular attention paid to the centralized online Gradient Balancing Algorithm (\cgrab)~\cite{lu2022grab}. 
This lays the groundwork for how our coordinated, distributed \grab{} algorithm (Section~\ref{sec:dgrab}) imparts the efficiency guarantees of \cgrab{} to the parallelized regime (Section~\ref{sec:theory}).

\custompar{Ordering data examples during training} Training a model can be formulated as minimizing a differentiable loss function $\loss:\R^d\rightarrow\R$ over $\examples$ data examples. The goal of this minimization is to obtain the target model weights $\weights^* = \arg\min_{\weights}\loss(\weights)$, where $\loss(\weights) = \frac{1}{\examples} \sum_{\exindex=1}^{\examples} \loss(\weights; \exindex)$,
for which $\loss(\weights; \exindex)$ denotes the loss incurred on the $\exindex$-th example. A typical training process iteratively updates the model parameters $\weights$ by scanning over the $\examples$ data examples repeatedly, with $t$-th scan (or epoch) following
\vspace*{.2cm}
\begin{align}
\label{equ:grab:main_update}
    %\textstyle
    \weights_{\eindex}^{\exindex+1} = \weights_\eindex^{\exindex} - \alpha \nabla \loss(\weights_\eindex^\exindex; \pi_{\eindex}(\exindex)), \hspace{.5em} \forall \exindex \in [\examples], 
\end{align}

where $\alpha$ denotes the learning rate, and $\perm_t:[\examples] \rightarrow [\examples]$ denotes a permutation ordering\footnote{While without-replacement orderings are most common in large-scale learning~\citep{bottou2012stochastic}, 
ordering strategies need not be permutations, e.g.,  
with-replacement sampling~\citep{schmidt2017minimizing,needell2014stochastic,lu2021variance}  
or curriculum learning~\citep{graves2017automated,matiisen2019teacher,soviany2022curriculum}.} adopted in the $t$-th epoch from which the examples are chosen to compute gradients, $\weights_t^1$ denotes the  initial model weights for the $\eindex$-th epoch, and $\weights_\eindex^{\exindex}$ denotes the model weights after $\exindex-1$ gradient updates in the $\eindex$-th epoch.\footnote{Note that we write (\ref{equ:grab:main_update}) in terms of per-example-$j$ gradients.}

The choice of ordering $\perm$ can have a significant effect on optimizer performance. Two popular methods, which can demonstrate convergence speedups in practice, are 1) random reshuffling (\shuffle)~\citep{ying2017performance}, for which the permutations are random and differ over epochs, and 2) Shuffle Once (SO)~\citep{bertsekas2011incremental,gurbuzbalaban2019convergence}, for which a random permutation is computed once and remains fixed for all epochs. \citet{recht2012toward} conducted the first theoretical investigation of \shuffle, while subsequent works like \citet{yun2021can} and \citet{desa2020shuffle} have given counterexamples in which \shuffle{} leads to orderings that have a poor effect on convergence. Altogether, many studies indicate that \shuffle{} and \so{} only provide efficiency benefits under certain conditions~\citep{haochen2019random,gurbuzbalaban2021random,mishchenko2020random}.%

These limitations of \shuffle{} and \so{} have motivated research to identify permutations that outperform random ones. \citet{rajput2021permutationbased} introduces an \shuffle{} variant that achieves improved convergence for quadratics by reversing the ordering every other epoch. Other non-\shuffle-based methods pick efficient orderings based on correlations between adjacently selected examples.  
%leverage the relationship between correlations among adjacently selected examples and convergence. 
In a recent line of work, \citet{lu2021general} proves that faster convergence is possible for SGD when the averages of consecutive stochastic gradients converge faster to the full gradient. 
Based on this result, in follow-on work
\citet{lu2022grab} proposes the centralized online Gradient Balancing algorithm (\grab), which outperforms \shuffle, and upon which we base this work.

\vspace{-.1cm}
\subsection{\cgrab: Optimal, online, permutation-based example ordering for centralized ML}\label{sec:cgrab}
\vspace{-.1cm}

\grab{} is a permutation-based example-ordering algorithm that identifies provably better-than-random orderings \emph{in centralized, single-node settings} for SGD. 
\grab{} finds such orderings by leveraging information in stale stochastic gradients from previous epochs to guide  ordering in the next epoch. 
More formally, for smooth, non-convex objectives, \citet{lu2022grab} proves that any permutation $\perm^*$ that guarantees
\vspace*{.2cm}
\begin{align}
\label{equ:grab:grad_error}
    \textstyle
    \max_{k\in[\examples]} \left\| \sum_{\exindex=1}^k\nabla \loss(\weights;\perm^*(\exindex)) - \nabla \loss(\weights)
 \right\|_\infty = \tilde{O}(1) \hspace{.75em} \text{(} \nabla \loss(\weights) \text{ is the average gradient)},
\end{align}

will yield a convergence rate of $\tilde{O}((\examples\epochs)^{-2/3})$ (for epochs $\epochs$) for SGD, which is superior to the $O(\examples^{-1/3}\epochs^{-2/3})$ convergence rate of random reshuffling~\cite{mishchenko2020random}. 

\custompar{\cgrab's connection to herding and balancing} To find such a permutation $\perm^*$, \citet{lu2022grab} connect (\ref{equ:grab:grad_error}) to the \emph{herding problem} and vector \emph{balancing}~\citep{harvey2014near, welling2009herding}. Understanding why \grab{} does not naturally extend to the distributed setting --- and our main contributions (Sections~\ref{sec:dgrab} and~\ref{sec:theory}) --- requires some additional details on the fundamentals of herding: 

Given $\examples$ vectors\footnote{Herding does not have an optimization context. Here, $\examples$ does \emph{not} refer to the number of data examples used in training (\ref{equ:grab:main_update}); rather, $\examples \in \sZ^+$ describes the size of a set of arbitrary vectors. We slightly abuse notation because we execute the herding subroutine on exactly $\examples$ gradients (Section~\ref{sec:dgrab}), which happen to equal the number of $\examples$ examples.} 
$\{\exj\}_{\exindex=1}^\examples$ ($\exj \in \R^d$), $\norm{\exj}_2 \le 1$ ($\forall \exindex$), herding identifies %the goal of herding is to find 
a permutation $\perm^*$ such that
\vspace{.2cm}
\begin{align}
\hspace{-.31cm}
\label{equ:herding:objective}
    \textstyle
    \max_{k \in [\examples]} \norm{\sum_{\exindex=1}^k \left( \ex_{\perm^*(\exindex)} - \barex \right)}_\infty = \tilde{O}(1), \hspace{.5cm} \text{ where } \barex = \frac{1}{\examples}\sum_{\exindex=1}^\examples \exj.
\end{align} 
% \vspace{-.5cm}

It is clear that (\ref{equ:herding:objective}) generalizes  (\ref{equ:grab:grad_error}), which %we can view as 
is a specific case of herding in an optimization setting. 

\citeauthor{harvey2014near} solve (\ref{equ:herding:objective}) with a method called \emph{balancing}~\citep{harvey2014near}. Balancing uses a \emph{signed} version of the herding problem to optimize any given permutation $\perm$ to reduce the bound in (\ref{equ:herding:objective}). That is, balancing formulates the signed herding problem
\begin{align}
\label{equ:herding:signed_objective}
    \textstyle
    \max_{k \in [\examples]} \norm{\sum_{\exindex=1}^k \sgn_{\perm(\exindex)} \left( \ex_{\perm(\exindex)} - \barex \right) }_\infty, \hspace{1em} \text{where} \hspace{.5em} \{\sgn_\exindex\}_{\exindex=1}^\examples \in\{+1, -1\}.
\end{align}
Given a group of such signs $\{\sgn_\exindex\}_{\exindex=1}^\examples$ and an arbitrary permutation $\perm$, \citeauthor{harvey2014near} prove that Algorithm~\ref{alg:reorder} produces a new permutation $\perm'$ such that
{\small\begin{align*}
    \textstyle
    \max \limits_{k \in [\examples]} \norm{\sum_{\exindex=1}^k \left( \ex_{\perm'(\exindex)} - \barex \right)}_\infty  \, 
    \leq  \; \frac{1}{2} \max \limits_{k \in [\examples]} \norm{\sum_{\exindex=1}^k \sgn_{\perm(\exindex)}\left( \ex_{\perm(\exindex)} - \barex \right)}_\infty + \frac{1}{2}\max \limits_{k \in [\examples]} \norm{\sum_{\exindex=1}^k \left( \ex_{\perm(\exindex)} - \barex \right)}_\infty.
\end{align*}}%

%%% NOTE: wrapfig pkg not working properly in the NeurIPS '23 style guide; using this minipage hack instead
\vspace{-.2cm}
\begin{minipage}{.425\linewidth}
This says that, with new permutation $\perm'$, the objective of (\ref{equ:herding:objective}) now approaches the bound of (\ref{equ:herding:signed_objective}). Importantly, recent advances show that it is quite cheap to find a group of signs, such that (\ref{equ:herding:signed_objective}) is on the order of $\tilde{O}(1)$ (e.g., \citet{alweiss2021discrepancy}, in  Algorithm~\ref{alg:pairbalance}). We are therefore able to call Algorithm~\ref{alg:reorder} repeatedly, which will eventually obtain the $\perm^*$ that solves the $\tilde{O}(1)$ herding objective in (\ref{equ:herding:objective}). 
\end{minipage}
\hfill
\begin{minipage}{0.542\textwidth}
\vspace{-.4cm}
    \begin{algorithm}[H]
    	\caption{Reordering Vectors based on Balanced Signs [\citet{harvey2014near}]}\label{alg:reorder}
        \footnotesize
    	\begin{algorithmic}[0]
    	\State \textbf{input:} a group of signs $\{\sgn_\exindex\}_{\exindex=1}^\examples$, initial order $\perm$
    	\State \textbf{initialize:} two order-sensitive lists $L_{\text{pos}}\leftarrow [\hspace{0.1em}]$, $L_{\text{neg}}\leftarrow [\hspace{0.1em}]$.
    	\For{$\exindex = 1\dots\examples$}
    	    \State $L_{\text{pos}}.\textsf{\scriptsize{append}}(\perm(\exindex))$ \textbf{if} $\signj$ is $+1$ \textbf{else} $L_{\text{neg}}.\textsf{\scriptsize{append}}(\perm(\exindex))$.
    	\EndFor
    	\State \textbf{return:} new order $\perm'\coloneqq\textsf{\scriptsize{concat}}(L_{\text{pos}}, \textsf{\scriptsize{reverse}}(L_{\text{neg}}))$.
    	\end{algorithmic}
    \end{algorithm}
\end{minipage}

\custompar{\cgrab's application of herding to gradient balancing}\citet{lu2022grab} applies this framework of herding and balancing to develop  \grab{}, i.e., to minimize (\ref{equ:grab:grad_error}). The main challenge for the success of this approach is to find the right gradients $\exj$ in the optimization context of (\ref{equ:grab:grad_error}). Notably, the herding and balancing framework requires the vector mean $\barex$ in advance. To satisfy this requirement, \grab{} ``centers'' the gradient vectors using a \emph{stale mean}. That is, %if we denote $\perm_\eindex$ to be the permutation adopted in the $t$-th epoch, 
\grab{} runs the herding algorithm on vectors that are defined as
\vspace{.2cm}
\begin{align}
\label{equ:grab:stale_mean}
    \textstyle
    \exj = \nabla \loss(\weights_\eindex^\exindex;\perm_\eindex(\exindex)) - \frac{1}{\examples}\sum_{p=1}^{\examples}\nabla \loss(\weights_{\eindex-1}^p;\perm_{\eindex-1}(p)),
\end{align}

where $\weightst^p$ denotes the model weights after $p-1$ updates in the $t$-th epoch, and $\perm_\eindex$ denotes the permutation adopted in the $t$-th epoch. \citet{lu2022grab} proves that this definition of $\exj$ preserves the benefits of balancing with negligible noise or overhead. The only overhead comes from storing the running average of the gradients in epoch $\eindex -1$ to ``center'' the gradients in the subsequent epoch $\eindex$.

With this approach, \citet{lu2022grab} proves that \grab{} demonstrates more efficient convergence than \shuffle{} for SGD. Better still, \citet{cha2023tighter} demonstrates that \grab{} is in fact the \emph{optimal} permutation-based ordering method for SGD: %in theory, 
it is not possible to produce a permutation-based ordering in the centralized setting that achieves a faster convergence rate for SGD. 

Despite \grab's clear benefits over \shuffle, it assumes local access to all examples. This assumption does not hold for popular, modern, parallel settings (e.g., parameter server~\citep{li2014ps}), in which workers only have access to subsets of examples. 
No present work has attempted to investigate \grab's applicability to this setting.
While some work has studied distributed \shuffle{} (\dshuffle)~\citep{yun2021minibatch,huang2021distributed,malinovsky2022server,sadiev2022federated}, it remains an open question if \grab's efficiency benefits for SGD can be conferred to the modern-scale, distributed-ML setup. 

\vspace{-.1cm}
\section{\dgrab: A Provably Efficient Ordering Algorithm for Distributed Training}\label{sec:dgrab}
\vspace{-.1cm}

Our main contribution is to elevate \grab{} to the parallel regime, so that distributed training can enjoy the efficiency benefits of provably better example ordering. Based on the preliminaries, we can now explain why this is not a straightforward task: \textbf{While \grab{} achieves the optimal convergence rate for SGD on centralized data, it does not naturally translate to a distributed setting} (Section~\ref{sec:dgrab:issues}). Our key insights for resolving these problems are to reformulate the herding framework in \citet{lu2022grab} to work in parallel, and to leverage insights from 
kernel thinning~\citep{dwivedi2021kernel, dwivedi2022generalized, barp2022targeted} to derive the \emph{online} $\mathsf{PairBalance}$ algorithm, which solves this parallelized herding objective (Section~\ref{sec:dgrab:solution}). Lastly, we present the full-stack \dgrab{} algorithm that makes our solution work in practice (Section~\ref{sec:dgrab:algo}). The server implements online $\mathsf{PairBalance}$, which coordinates gradient information from the distributed workers in training epoch $\eindex$ in order to determine a provably efficient example order for the next epoch $\eindex + 1$ (Section~\ref{sec:theory}). 

\vspace{-.1cm}
\subsection{Issues with \grab{} in the distributed setting}\label{sec:dgrab:issues}
\vspace{-.1cm}

To clarify the issues with distributing \grab, we first need to define the distributed training setup more precisely. 
We consider the standard data-parallel %training 
regime with $\workers$ parallel workers, where each worker keeps a copy of the model weights $\weights\in\R^d$ and maintains $\workerexamples= \examples / \workers$ local %data 
examples.\footnote{Without loss of generality, 
we assume the $\examples$ examples are divided evenly among the $\workers$ workers %(i.e., $\examples\bmod\workers = 0$) 
and $\workerexamples$ is even.} As in many data-parallel training applications,\footnote{One such popular 
paradigm is federated learning%, in which edge devices collaboratively train a model via small local updates
~\citep[e.g.]{mcmahan2017communication}. Federated learning 
typically involves highly imbalanced loads, heterogeneous data, partial user participation, and additional privacy-preserving mechanisms. 
These characteristics are orthogonal to what we consider here for example order. 
If we were to allow for such data organization, we would need to assume non-global communication per iteration or additional constraints on how global communication occurs.
For \dgrab, we focus on the regime of using parallelism to accelerate training.} 
 such as geo-distributed model training \citep{yuan2022decentralized}, we assume \emph{the data examples cannot be shared or moved across workers}.
More formally, this setup can be expressed as
\vspace{.2cm}\begin{equation}
\label{equ:d-grab:objective}
\textstyle
\min_{\weights \in \R^d} \left[ \loss(\weights) = \frac{1}{\workers}\sum_{\windex=1}^{\workers} \loss^\windex(\weights) \right] \quad \text{with} \quad \loss^\windex(\weights) = \frac{1}{\workerexamples}\sum_{\exindex=1}^{\workerexamples} \loss^\windex(\weights; \exindex),
\end{equation}

where $\loss^\windex(\weights; \exindex): \R^d \rightarrow \R$, $\exindex \in [\workerexamples]$, denotes the loss incurred on the $\exindex$-th example on the $\windex$-th worker for model weights $\weights$. We can now consider running (\ref{equ:grab:main_update}) using this setup, for which each worker scans over their $\workerexamples$ local-data examples using (potentially) different permutations. We denote $\perm_{\eindex,\windex}: [\workerexamples] \rightarrow [\workerexamples]$ as the permutation-based ordering adopted on the $\windex$-th worker in the $\eindex$-th training epoch. Adjusting (\ref{equ:grab:main_update}) to accommodate the setup in (\ref{equ:d-grab:objective}), the update to the model can be summarized as
\vspace{.2cm}\begin{align}
\label{equ:d-grab:main_update}
    \textstyle
    \weights_{\eindex}^{\exindex+1} = \weights_\eindex^{\exindex} - \frac{\alpha}{\workers}\sum_{\windex=1}^\workers \nabla \loss^\windex(\weights_\eindex^\exindex; \pi_{\eindex,\windex}(\exindex)), \hspace{.5em} \forall \exindex \in [\workerexamples]. 
\end{align}

That is, in epoch $\eindex$, each worker $\windex$ selects their respective, local $\exindex$-th example according to $\{\perm_{\eindex,\windex}\}_{\windex=1}^\workerexamples$ in order to compute stochastic gradients (Appendix). 

\textbf{Following this setup, Algorithm~\ref{alg:reorder} no longer guarantees the $\tilde{O}(1)$ bound to the herding problem~(\ref{equ:herding:objective})}, a bound that is valid only when \emph{all} data examples can be permuted \emph{freely}~\citep{harvey2014near}. This constraint is fine for centralized \grab, but, in distributed training, %parallel 
workers only have access to a \emph{subset} of examples. Distributed training requires that \emph{worker-specific permutations only involve the examples in their respective local subsets}. 
Further, recall that \grab{} uses stale means to center gradients (\ref{equ:grab:stale_mean}) in order to solve the herding objective. This, too, causes problems in distributed training. In practice, it is typical to employ larger learning rates $\alpha$ for greater scalability~\citep{smith2018don}; larger $\alpha$ increases the discrepancy between averaged gradients in adjacent epochs, which, in turn, would make \grab's use of stale means unreliable. 

\vspace{-.1cm}
\subsection{Our efficient solution: parallel herding and pair balancing}\label{sec:dgrab:solution}
\vspace{-.1cm}

To address the limitations presented in the prior section, which preclude the direct application of \grab{} to distributed training, we will need to \textbf{1) reformulate the herding problem to fit the parallel setting, and 2)  redesign how to do gradient balancing}, such that it both solves our new herding formulation and allows for  reliability with higher learning rates. We now present our solution to both these problems; we introduce the \emph{parallel herding} problem and the online $\mathsf{PairBalance}$ subroutine that solves it.

\custompar{Parallel Herding} To extend herding to the parallel setting, consider the following setup: There are $\workers$ workers, which each have local access to $\workerexamples$ vectors. %individually. 
Let $\ex_{\windex,\exindex} \in \R^d$ denote the vector indexed by $\exindex$ on the $\windex$-th worker. Assuming $\| \ex_{\windex,\exindex} \|_2 \le 1 \;\; (\forall \windex \in [\workers], \forall \exindex \in [\workerexamples])$, the goal of parallel herding is to find $\workers$ permutations, $\perm_1, \perm_2, \ldots, \perm_\workers$ where $\perm_\windex:[\workerexamples]\rightarrow[\workerexamples] \;\; (\forall \windex \in[\workers])$, so as to minimize:
\vspace{.1cm}\begin{align}
  \textstyle
  \max_{k \in [\workerexamples]} \; \left\| \sum_{\exindex=1}^k \sum_{\windex=1}^\workers \left( \ex_{\windex, \perm_\windex(\exindex)} - \barex \right) \right\|_{\infty}, \hspace{1em}\text{with}\hspace{1em}\barex=\frac{1}{\workers\workerexamples}\sum_{\windex=1}^{\workers}\sum_{\exindex=1}^{\workerexamples}\ex_{\windex,\exindex}.
\label{equ:paraherding:objective}
\end{align}

When directly comparing (\ref{equ:paraherding:objective}) with (\ref{equ:herding:objective}), it is clear that parallel herding differs in two notable ways from the original herding problem. First, each permutation $\perm_\windex:[\workerexamples]\rightarrow[\workerexamples] \;\; (\forall \windex \in[\workers])$ only decides the ordering of the $\workerexamples$ vectors that are associated with worker $\windex$. Second, the prefix sum taken in the objective norm is accumulated over all the workers (the inner sum from $\windex=1\ldots\workers$). This formulation naturally captures the setting in a distributed environment: \textbf{workers need to decide permutations collaboratively, and the worker-specific vectors are processed simultaneously rather than sequentially}.

Given that this formulation fits the distributed setting, we next need to show that parallel herding does in fact address the limitations posed by centralized \grab: that it is possible recover the original  $\tilde{O}(1)$  herding bound, and that we can solve the issue of unreliable stale gradients (Section~\ref{sec:dgrab:issues}). The solution that we present in the remainder of this section is a new vector balancing subroutine: online $\mathsf{PairBalance}$.  To give an intuition, as its name suggests, online $\mathsf{PairBalance}$ leverages insights from kernel thinning to \emph{balance} vector differences over vector \emph{pairs}. This also eliminates the need to perform vector centering, and thus solves the stale mean problem. %and also minimizes the parallel herding objective to be $\tilde{O}(1)$.

\custompar{Using kernel thinning to solve parallel herding} We call our solution to the parallel herding objective (\ref{equ:paraherding:objective}) \emph{pair balancing}, which we derive from key insights %from recent research 
in \emph{kernel thinning}~\citep{dwivedi2021kernel,dwivedi2022generalized,barp2022targeted}. 
In particular, \citeauthor{dwivedi2021kernel} show that it is possible to solve the herding objective in $\tilde{O}(1)$ \textbf{by only examining differences on  \emph{pairs of examples}}~\citep{dwivedi2021kernel}. They derive an algorithm that generalizes \citet[subroutine in Algorithm~\ref{alg:pairbalance}]{alweiss2021discrepancy}, which solves herding in $\tilde{O}(1)$ (Section~\ref{sec:prelimrw}), and does so by operating only on vector-pair differences.\footnote{\citeauthor{dwivedi2021kernel} minimize the maximum mean discrepancy (MMD) between a selected coreset and an empirical distribution. They %and 
develop a new self-balancing Hilbert walk on differences of \emph{pairs of examples} %in order 
to select exactly half of the dataset points, and solve coreset selection by iteratively halving the input vector sequence into balanced coresets then selecting and refining a candidate coreset to minimize MMD with the input sequence.\looseness=-1}  This comes with a very useful property: eliminating the requirement of knowing the maximum vector norm ahead of time and centering the vectors (i.e., making all the vectors sum to zero) in order to solve the herding problem. This is the key to solving the parallel herding objective (\ref{equ:paraherding:objective}) in $\tilde{O}(1)$, and elevating the benefits of \grab{} to a distributed setting. 

\begin{figure}[t!]
\hspace{-.25cm}
\vspace{-.1cm}
\begin{minipage}{.485\textwidth}
\vspace{-.7cm}
\begin{algorithm}[H]
\caption{$\mathsf{PairBalance}$}\label{alg:pairbalance}
\footnotesize
\begin{algorithmic}[0]
  \Statex $\rhd$ %Both 
  The inputs, outputs and subroutine for this algorithm %, including its subroutine, 
  are order-sensitive\vspace{.2cm}
  \Statex \textbf{input:} current running sum $\vr$, %two 
  paired vectors $\vz_1$, $\vz_2$\vspace{.2cm}
  \State \textbf{compute:} $\sgn, \vr \leftarrow \mathsf{\small{RandomizedBalance}}(\vr, \vz_1 - \vz_2)$\looseness=-1
  \State \textbf{return:}  $\sgn$ (sign for $\vz_1$), 
  \Statex \hspace{.75cm} $-\sgn$ (sign for $\vz_2$), 
  \Statex \hspace{1cm} $\vr$ (updated running sum) \vspace{.4cm}
  \Statex $\rhd$ Adapted from~\citet{alweiss2021discrepancy}
  \State \textbf{define subroutine:} $\mathsf{\small{RandomizedBalance}}(\vr, \vc)$
  %\State \hspace{.2cm} \textbf{compute:} $p\leftarrow \frac{1}{2} - \frac{\langle \vr,\vc\rangle}{60}$
  \State \hspace{.2cm} \textbf{compute:} $p\leftarrow \frac{1 - \langle \vr,\vc\rangle}{2}$
  \State \hspace{.2cm} \textbf{compute:} $\sgn\leftarrow +1$ \hspace{.2em} with probability \hspace{.2em} $p$; 
  \Statex \hspace{1.55cm} $\sgn\leftarrow -1$ \hspace{.2em} with probability \hspace{.2em} $1-p$
  \State \hspace{.2cm} \textbf{update:}  $\vr\leftarrow \vr+\sgn\vc$
  \State \hspace{.2cm} \textbf{return:}  $\sgn$, $\vr$
\end{algorithmic}
\end{algorithm}
\end{minipage}
\hspace{.25cm}
\begin{minipage}{.58\textwidth}
  \includegraphics[width=\linewidth]{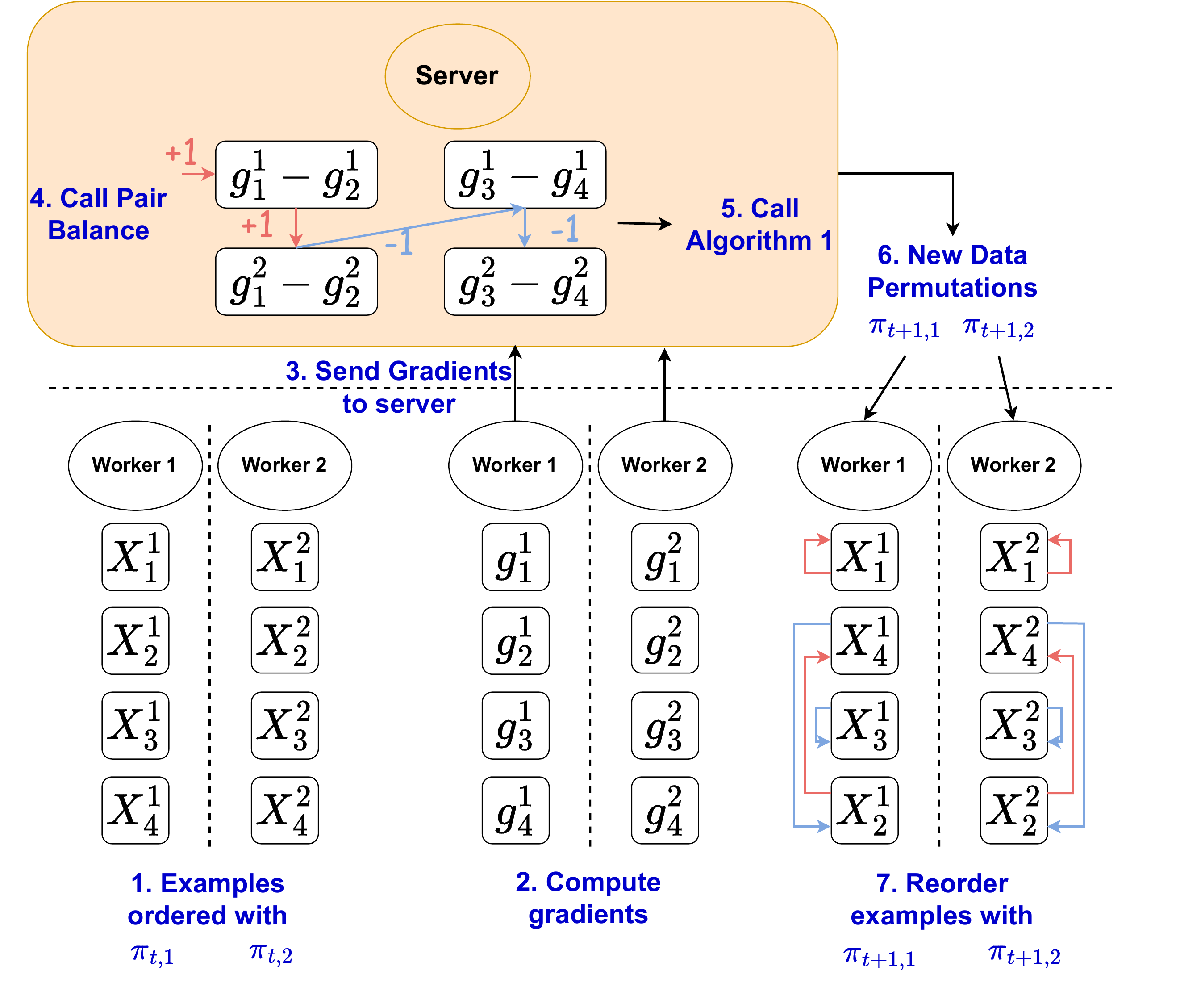}
\end{minipage}
\vspace{-.1cm}
\caption{\textbf{Left:} The $\mathsf{PairBalance}$ algorithm, which the server runs online.  \textbf{Right:} %Illustrating 
\dgrab{} running on one server (top) and two workers (bottom). %using $\mathsf{PairBalance}$ to perform vector balancing. %The dotted lines indicate boundary points; 
The workers do not share data examples.}
\vspace{-.5cm}
\label{fig:diagram}
\end{figure}

Following \citet{dwivedi2021kernel}, we will balance over paired vectors, and will do so in an \emph{online} fashion (Section~\ref{sec:dgrab:algo}). This eliminates \grab's requirement of using a stale mean to center gradient vectors (Section~\ref{sec:cgrab}), but still minimizes the parallel herding objective to $\tilde{O}(1)$. We defer proving this result to Section~\ref{sec:theory}, and first describe our concrete algorithm.  Online $\mathsf{PairBalance}$ applies Algorithm~\ref{alg:reorder} on the``flattened'' and ``paired'' sequence of all of the workers' paired-difference gradients, i.e., \looseness=-1
\vspace{.2cm}
\begin{align*}
\textstyle
    \vy_{\workerexamples(k-1)+\windex} = \ex_{\windex,2k-1} - \ex_{\windex,2k}, \hspace{1em} \forall k\in[\frac{\workerexamples}{2}], \hspace{1em} \windex=1\ldots\workers.
\end{align*}

That is, we fit these ordered-paired differences $\{\vy_\windex\}_{\windex=1}^{\workers\workerexamples/2}$ into the herding and balancing framework  (Algorithm~\ref{alg:reorder}): 
if sign $\sgn$ is associated with $\vy_{\workerexamples(k-1)+\windex}$, then $\ex_{\windex,2k-1}$ and $\ex_{\windex,2k}$ receive 
$\sgn$ and $-\sgn$, respectively.\looseness=-1 

\vspace{-.1cm}
\subsection{The full-stack \dgrab{} algorithm}\label{sec:dgrab:algo}
\vspace{-.1cm}

Having solved the parallel herding problem with pair balancing, we now demonstrate how to bring everything together in an optimization context to \emph{coordinate distributed gradient balancing} for distributed training. That is, we can now introduce our full-stack \dgrab{} algorithm, which trains models in a distributed setting (Section~\ref{sec:dgrab:issues}) while efficiently ordering the examples by using $\mathsf{PairBalance}$ (Section~\ref{sec:dgrab:solution}, Algorithm~\ref{alg:pairbalance}) 
in an online manner. 

We describe \dgrab{} at two levels of abstraction: a high-level illustration (Figure~\ref{fig:diagram}, steps \dstep{1-7}) and a detailed pair of worker-server algorithm statements (Figure~\ref{alg:dgrab}). %Together, both show how \dgrab{} enables distributing training across parallel workers: 
Since the workers only have access to a subset of the training data, in parallel they compute local, per-example stochastic gradients  and send them to the server. The server simultaneously calls $\mathsf{PairBalance}$ online, which coordinates information from all the workers' gradients (i.e., using adjacent example-specific gradients) to determine the next epoch's worker-specific permutations. In more detail:

In epoch $\eindex$, (Figure~\ref{fig:diagram},  step \dstep{1}) the two workers have permutations $\perm_{\eindex, 1}$ and $\perm_{\eindex,2}$, respectively. Each worker computes per-example gradients $\g_\exindex^\windex$ (\dstep{2}; Algorithm~\ref{alg:dgrab:workers}:4), and sends them to the server (\dstep{3}; Algorithm~\ref{alg:dgrab:workers}:5). The server we implement functions as a parameter server~\cite{li2014ps}: It computes the average of the workers' per-example gradients (Algorithm~\ref{alg:dgrab:server}:6), and sends it back to all workers (Algorithm~\ref{alg:dgrab:server}:7) so that they can update their local models (Algorithm~\ref{alg:dgrab:workers}:6-7). Simultaneously, as the server receives gradients (Algorithm~\ref{alg:dgrab:server}:5), it calls  $\mathsf{PairBalance}$ (Algorithm~\ref{alg:pairbalance}) on adjacent vectors (\dstep{4}; Algorithm~\ref{alg:dgrab:server}:4-13). $\mathsf{PairBalance}$ produces signs to supply to the reordering algorithm  (Algorithm~\ref{alg:reorder}), which, using the current worker permutations $\perm_{\eindex,\windex}$, produces the new per-worker permutations for the next epoch (\dstep{5};  Algorithm~\ref{alg:dgrab:server}:14). In Figure~\ref{fig:diagram}, these correspond to $\perm_{\eindex+1, 1}$ and $\perm_{\eindex+1, 2}$, which the server then sends back to the respective workers (\dstep{6};  Algorithm~\ref{alg:dgrab:server}:15). Lastly, before the start of the next epoch, the workers reorder their examples according to the new permutations (\dstep{7};  Algorithm~\ref{alg:dgrab:workers}:9).

\begin{figure}[t]
%\vspace{-.3cm}
\hspace{-.1cm}
\begin{minipage}[t]{.47\linewidth}
\begin{algorithm}[H]
\caption{\dgrab{} Workers}\label{alg:dgrab:workers}
\footnotesize
\begin{algorithmic}[1]
    \Statex \textbf{require:} $\workers$ workers, $\workerexamples \coloneqq \frac{\examples}{\workers}$ ex. per worker
    \Statex \textbf{input:} initial $\weights_1^1$, epochs $\epochs$, learning rate $\alpha$%\vspace{.1cm}
    \Statex
    \State \textbf{receive:} initial permutations %$\{\perm_{1,\windex}\}_{\windex=1}^\workers$
    \For{epoch $\eindex \coloneqq 1 \ldots \epochs$}
        \Statex \hspace{.5cm}$\rhd$ Run in parallel for workers $\windex=1 \ldots \workers$
            \For{example $j \coloneqq 1 \ldots \workerexamples$}
            \State \textbf{compute:} $\wexgrad \leftarrow \nabla\loss^\windex(\weights_\eindex^\exindex, \perm_{\eindex, \windex}(\exindex))$ \vspace{.05cm}
            \Statex \hspace{.36\linewidth}$\rarrow{0.61\textwidth}{\text{$j$-th stochastic grad. }\wexgrad}$ \vspace{-.7cm}
            \State \textbf{send:} $\wexgrad$
            \vspace{.15cm}
            \Statex \hspace{.42\linewidth}$\larrow{0.56\textwidth}{\text{avg. $\exindex$-th stochastic grad. } {\bar{\vg}_\exindex}}$ \vspace{-.7cm}
            \State \textbf{receive:} $\bar{\vg}_\exindex$ \vspace{.1cm}
            \State \textbf{update:} $\weights_\eindex^{\exindex + 1} \leftarrow \weights_\eindex^\exindex - \alpha \bar{\vg}_\exindex$
            \Statex
        \EndFor
        \Statex \vspace{1.25cm}
        \State \textbf{receive:} next permutation 
        \State \textbf{update:} $\weights_{\eindex+1}^1 \coloneqq \weights_{\eindex}^{\workerexamples + 1}$
  \EndFor
  \State \textbf{return:} $\weights_{\epochs+1} \coloneqq \weights_{\epochs+1}^1$
\end{algorithmic}
\end{algorithm}
\end{minipage}
\hfill
\begin{minipage}[t]{.5\linewidth}
\begin{algorithm}[H]
\caption{\dgrab{} Parameter Server}\label{alg:dgrab:server}
\footnotesize
\begin{algorithmic}[1]
    \Statex \textbf{require:} $\workers$ workers, $\workerexamples \coloneqq \frac{\examples}{\workers}$ ex. per worker
    \Statex \textbf{input:} epochs $\epochs$ \vspace{-.12cm}
    \Statex
    \Statex \hspace{-.52\linewidth}$\larrow{0.38\textwidth}{\{\perm_{1,\windex}\}_{\windex=1}^\workers}$ 
    \vspace{-.35cm}
    \State \textbf{send:} initial permutations $\{\perm_{1,\windex}\}_{\windex=1}^\workers$
  \For{epoch $\eindex \coloneqq 1 \ldots \epochs$} %\vspace{-.12cm}
  %\Statex \hspace{-3cm}$\rarrow{0.3\textwidth}{\text{Send x to server}}$ \hspace{.3cm}\todo{} %$\{\nabla\loss^\windex(\weightst,\perm_{\eindex, \windex}(\exindex))\}_{\windex=1}^\workers$: \vspace{.1cm}
        % \State \textbf{initialize:} empty list $y \coloneqq [\frac{\workers\workerexamples}{2}]$
        % \State \textbf{initialize:} $\{\perm_{\eindex + 1,\windex}\}_{\windex=1}^\workers$ for next epoch
        \State \textbf{initialize:} running sum $\vh=\bm{0}$; empty list $\mathcal{S}$ 
        \For{ example $j \coloneqq 1 \ldots \workerexamples$}\vspace{.7cm}
            \State \textbf{receive:} $\{\wexgrad\}_{\windex=1}^\workers$ from all workers $\windex$
            \State \textbf{compute:} avg. gradient: $\bar{\vg}_\exindex\leftarrow\frac{1}{\workers}\sum_{\windex=1}^\workers \wexgrad$
            \State \textbf{send:} $\bar{\vg}_\exindex$ to all the workers
            \For{worker $\windex \coloneqq 1 \ldots \workers$}
                % \State \todo{For ex in j loop? Comment that this is happening over time / as j come in}
                % \State \todo{call single line PairBalance}
                % \State \todo{end for loop}
                \State \hspace{-.2cm} \textbf{if } $\exindex\bmod 2 = 0$: 
                \State $\vh,\wexsignprev, \wexsign \leftarrow \mathsf{PairBalance}(\vh, \wexgradprev, \wexgrad)$
                \State $\mathcal{S}.\textsf{\scriptsize{append}}(\wexsignprev)$; $\mathcal{S}.\textsf{\scriptsize{append}}(\wexsign)$
            \EndFor
        \EndFor
        \Statex \hspace{.5cm}$\rhd$ Call Alg.~\ref{alg:reorder} for $\windex=1\ldots \workers$ on  $\perm_{\eindex,\windex} \text{ and } \mathcal{S}$%\{\wexsign\}_{\exindex=1}^{\workerexamples}\big)$
        \looseness=-1%\vspace{.1cm}
        \State \textbf{compute:} next permutations $\{\perm_{\eindex + 1,\windex}\}_{\windex=1}^\workers$\looseness=-1 \vspace{-.15cm}
        \Statex  \hspace{-.48\linewidth}$\larrow{0.31\textwidth}{\perm_{\eindex + 1,\windex}}$\vspace{-.35cm}
        \State \textbf{send:} $\{\perm_{\eindex + 1,\windex}\}_{\windex=1}^\workers$ to each worker $\windex$
  %\State \hspace{-.5cm} $\{\perm_{\eindex+1,\windex}\}_{\windex=1}^\workers \leftarrow \texttt{PairBalance}(\{\perm_{\eindex, \windex}(\exindex))\}_{\windex=1}^\workers)$
  \EndFor
\end{algorithmic}
\end{algorithm}
\end{minipage}
\caption{\dgrab{} worker and server (here, a parameter server~\citep{li2014ps}) algorithms.} %The algorithms run by the workers (Algorithm~\ref{alg:dgrab:workers}) and server (Algorithm~\ref{alg:dgrab:server}; here, a parameter server).}
\label{alg:dgrab}
\vspace{-.55cm}
\end{figure}

\vspace{-.1cm}
\section{Convergence Analysis}\label{sec:theory}
\vspace{-.1cm}

We next demonstrate formally that our \dgrab{} algorithm (Section~\ref{sec:dgrab:algo}) confers the efficiency benefits of centralized \grab{}  (Section~\ref{sec:cgrab}) to the distributed setting. In brief, our main theoretical results show that \textbf{\dgrab{} enjoys a linear speedup in convergence rate} under two sets of conditions: smoothness (Theorem~\ref{thm:dgrab:smooth}) and the Polyak-\L ojasiewicz (P.L.) condition (Theorem~\ref{thm:dgrab:PL}). \textbf{Both results guarantee that \dgrab{} is faster than distributed random reshuffling (\dshuffle)}. 
Our proofs rely on Corollary 7 from \citet{dwivedi2021kernel}, which shows that, with high probability, $\mathsf{RandomizedBalance}$ (subroutine in Algorithm~\ref{alg:pairbalance}, from \citet{alweiss2021discrepancy}) guarantees a 
$\tilde{O}(1)$ bound to the signed herding objective (\ref{equ:herding:signed_objective}).\footnote{Corollary 7 from \citet{dwivedi2021kernel} improves the result of Theorem 1.1 from \citet{alweiss2021discrepancy}.} %We summarize the results we depend onFormally, following the improvement of \citet{alweiss2021discrepancy} proven in Corollary 7 of \citet{dwivedi2021kernel}: 

To begin, we restate this result to cohere with our framework, for which the vectors $\exj$ are gradients in an optimization context: 

\begin{theorem}[\textbf{Corollary 7, \citet{dwivedi2021kernel}}]
\label{statement:alweiss}
    Consider any vectors $\{\vz_\exindex\}_{\exindex=1}^\examples$ ($\exj \in \R^d$) with $\norm{\vz_\exindex}_2 \le 1$ supplied as input to the $\mathsf{RandomizedBalance}$ subroutine in Algorithm~\ref{alg:pairbalance}. 
    Then for any $\delta > 0$, with probability at least $1 - \delta$, $\mathsf{RandomizedBalance}$ outputs a sequence of signs $\{s_\exindex\}_{\exindex=1}^\examples\in \{-1,1\}$ that satisfy $\textstyle \max_{k\in[\examples]}\norm{\sum\nolimits_{\exindex=1}^k s_\exindex\vz_\exindex}_{\infty} \le \tilde{A}$, where 
    $\tilde{A}=\sqrt{2\log(\frac{4d}{\delta})\log(\frac{4N}{\delta})}=\tilde{O}(1)$.

    % There exists a constant $\tilde{A}=\sqrt{2\log(\frac{4d}{\delta})\log(\frac{4N}{\delta})}=\tilde{O}(1)$ such that, with probability $1 - \delta$, $\mathsf{RandomizedBalance}$ outputs a sequence of signs $\{s_\exindex\}_{\exindex=1}^\examples\in \{-1,1\}$ that satisfy $\textstyle \max_{k\in[\examples]}\norm{\sum\nolimits_{\exindex=1}^k s_\exindex\vz_\exindex}_{\infty} \le \tilde{A}.$ 

    % For the $\mathsf{RandomizedBalance}$ subroutine in Algorithm~\ref{alg:pairbalance},%~\citep{alweiss2021discrepancy}, 
    % there exists a constant $\tilde{A}=\tilde{O}(1)$ such that for any input vectors $\{\vz_\exindex\}_{\exindex=1}^\examples$ ($\exj \in \R^d$) with $\norm{\vz_\exindex}_2 \le 1$, $\mathsf{RandomizedBalance}$ outputs a sequence of signs $\{s_\exindex\}_{\exindex=1}^\examples\in \{-1,1\}$ that fulfill 
    % $\textstyle \max_{k\in[\examples]}\norm{\sum\nolimits_{\exindex=1}^k s_\exindex\vz_\exindex}_{\infty} \le \tilde{A}.$ \hspace{1em}\text{with} \hspace{1em}\bar{\vz}=\frac{1}{\examples}\sum\nolimits_{i=1}^{\examples}\vz_i.$
\end{theorem}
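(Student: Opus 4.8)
The plan is to recognize $\mathsf{RandomizedBalance}$ as the self-balancing random walk of \citet{alweiss2021discrepancy} and to analyze its running sum directly. Writing $\vr_k \coloneqq \sum_{\exindex=1}^{k} s_\exindex \vz_\exindex$ for the signed prefix sum after $k$ steps (exactly the quantity $\vr$ that the subroutine threads through its iterations), the target is the maximal bound $\max_{k \in [\examples]} \norm{\vr_k}_\infty \le \tilde A$. I would first reduce this all-prefixes, $\ell_\infty$ statement to a one-dimensional tail bound: it suffices to control, for each fixed coordinate $i \in [d]$ and each sign, the event that $\pm r_{k,i}$ ever exceeds $\tilde A$ over $k \in [\examples]$, and then union bound over the $2d$ coordinate-sign pairs (the constant inside $\log(4d/\delta)$ absorbing the slack). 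Crucially, the ``for some prefix $k$'' quantifier is handled by a martingale maximal inequality rather than a separate union over the $\examples$ times.

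The heart of the argument is an exponential-moment computation that exposes the balancing drift. Conditioning on the past $\gF_{k-1}$ and setting $a_k \coloneqq \langle \vr_{k-1}, \vz_k\rangle$, the rule ``$s_\exindex = +1$ with probability $p = (1-a_k)/2$'' gives $\E[s_\exindex \mid \gF_{k-1}] = -a_k$, a \emph{restoring} drift that opposes the current running sum. A direct calculation then yields, for every $\lambda \in \R$ and coordinate $i$,
\[
  \E\!\left[e^{\lambda\, s_\exindex z_{k,i}} \mid \gF_{k-1}\right] = \cosh(\lambda z_{k,i}) - a_k \sinh(\lambda z_{k,i}).
\]
I would assemble these into the global potential $\Phi_k \coloneqq \sum_{i=1}^{d}\cosh(\lambda r_{k,i})$, which simultaneously controls every coordinate over all prefixes (since $2\Phi_k \ge e^{\lambda |r_{k,i}|}$ for every $i$). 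Using $\cosh(\lambda z_{k,i}) \le e^{\lambda^2 z_{k,i}^2/2}$ together with $\norm{\vz_k}_2^2 \le 1$ bounds the ``noise'' part of $\E[\Phi_k \mid \gF_{k-1}]$, while the drift contributions $-a_k \sum_i \sinh(\lambda z_{k,i})\sinh(\lambda r_{k-1,i})$ assemble into a non-positive term (to leading order $-\lambda^2 a_k^2$). The upshot I would establish is that, for $\lambda$ below a threshold, $\E[\Phi_k \mid \gF_{k-1}]$ does not grow once $\Phi_{k-1}$ is large --- i.e., $\Phi_k$ behaves as a supermartingale above a fixed level.

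From this, a Ville/Doob maximal inequality applied to $\Phi$ gives a bound of the shape $\Pr[\max_{k}\max_i |r_{k,i}| > A] \le 2d\cdot\exp(c\lambda^2 \log(\examples/\delta) - \lambda A)$, and optimizing the free parameter $\lambda$ produces the stated $\tilde A = \sqrt{2\log(4d/\delta)\log(4\examples/\delta)}$; this optimization, and the sharp constant, is precisely the refinement that Corollary~7 of \citet{dwivedi2021kernel} contributes over \citet{alweiss2021discrepancy}.

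The main obstacle is the drift/supermartingale step. A naive martingale (ignoring the drift) only sees the quadratic variation $\sum_k z_{k,i}^2$, which can be as large as $\examples$, and would give a useless $\sqrt{\examples\log d}$ bound; the entire gain to $\tilde O(1)$ hinges on showing that the restoring drift pins the walk at the logarithmic equilibrium scale, which also explains why the two logarithms enter multiplicatively rather than additively. Making the cross-terms' non-positivity rigorous --- rather than ``to leading order'' --- and simultaneously tracking the threshold on $\lambda$ for which $p \in [0,1]$ stays valid (i.e., $|a_k| \le 1$, which the algorithm enforces by clamping $p$, a step that only strengthens the restoring drift) is the delicate part, and is where I would spend the most care.
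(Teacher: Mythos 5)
Before anything else: the paper never proves this statement. It is imported wholesale as Corollary~7 of \citet{dwivedi2021kernel} (which sharpens Theorem~1.1 of \citet{alweiss2021discrepancy}), and Appendix~\ref{app:sec:alweiss} explicitly defers to that proof, which rests on Dwivedi and Mackey's Theorem~3. So your blind attempt has to be judged against those original arguments, and judged that way it contains a genuine gap at its central step.

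The gap is the supermartingale claim for $\Phi_k = \sum_{i=1}^d \cosh(\lambda r_{k,i})$. The exact one-step identity is
\begin{equation*}
\E\left[\Phi_k \mid \gF_{k-1}\right] \;=\; \sum_{i=1}^{d} \cosh(\lambda r_{k-1,i})\cosh(\lambda z_{k,i}) \;-\; a_k \sum_{i=1}^{d} \sinh(\lambda r_{k-1,i})\sinh(\lambda z_{k,i}),
\end{equation*}
so the drift contribution is non-positive only when $a_k = \sum_i r_{k-1,i} z_{k,i}$ and $\sum_i \sinh(\lambda r_{k-1,i})\sinh(\lambda z_{k,i})$ share a sign. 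The coordinate-wise $\sinh$ destroys exactly this alignment once $\lambda \norm{\vr_{k-1}}_\infty$ is of order one --- and that is the regime that matters, since the $\lambda$ that optimizes your own tail bound makes $\lambda \tilde{A}$ of order $\log(4d/\delta) \ge 1$, so the linearization $\sinh(\lambda r_{k-1,i}) \approx \lambda r_{k-1,i}$ behind your ``leading order $-\lambda^2 a_k^2$'' is invalid precisely where the bound is supposed to bite. Concretely, take $d = 2$, $\vr_{k-1} = (5, -10)$, $\vz_k = (0.8, 0.3)$, so that $\norm{\vz_k}_2 \le 1$, $a_k = 1$, and $p = 0$ is a legal probability. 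The step is then deterministic: the inner product is ``restored'' from $1$ to $0.27$, yet $\norm{\vr_k}_\infty$ grows from $10$ to $10.3$, and at $\lambda = 1$ the potential jumps from $\cosh(5) + \cosh(10) \approx 1.11\times 10^4$ to $\cosh(4.2) + \cosh(10.3) \approx 1.49 \times 10^4$ --- an increase of roughly a third, at a configuration where $\Phi_{k-1}$ is already large. So $\Phi$ is not a supermartingale above any fixed level: the walk's drift restores the $\ell_2$ functional $\langle \vr_{k-1}, \vz_k\rangle$ and can do so by pushing the largest coordinate further out. This is a geometric mismatch between an $\ell_2$ drift and a coordinate-wise potential, not a matter of ``more care'' with the cross terms.

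This mismatch is exactly why \citet{alweiss2021discrepancy} and \citet{dwivedi2021kernel} do not argue through such a potential. Their proofs establish, by induction on $k$, that the \emph{law} of the whole vector $\vr_k$ is sub-Gaussian --- bounds on the directional moment generating functions $\E[\exp(\langle \vu, \vr_k\rangle)]$ for every direction $\vu$ --- with the troublesome cross term handled distributionally through the inductive sub-Gaussian hypothesis; configurations like the one above are exponentially rare under the walk's law, which is something a distributional induction can see and a pointwise potential cannot. The $\ell_\infty$ prefix-max statement then follows by union bounds, and part of the $\log(4N/\delta)$ factor arises from ruling out the algorithm's failure event over the $N$ steps: the actual algorithm (Algorithm~\ref{app:alg:alweiss}) \emph{fails} when $|\langle \vr, \vz_{\text{diff}}\rangle| > w$, it does not clamp $p$ as you assume. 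Your outer reduction --- a union bound over the $2d$ coordinate-sign pairs plus a maximal inequality over prefixes --- is a reasonable shell, but the engine inside it has to be replaced by the directional-mgf induction.
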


% For any vectors v1, v2, · · · , vt ∈ Rn with ‖vi‖2 ≤ 1 for all i ∈ [t], algorithm
% Balance(v1, · · · , vt, δ) maintains ‖wi‖∞ = O (log(nt/δ)) for all i ∈ [t] with probability 1 − δ.

To integrate this result with our parallel setting, we need some additional assumptions that are standard in the literature on distributed optimization --- that the variance of the per-example gradients on each worker is uniformly bounded (Assumption~\ref{ass:inner-deviation}), and that the  variance between worker-specific gradients is similarly bounded (Assumption~\ref{ass:outer-deviation}). 
More precisely, following the distributed setup in (\ref{equ:d-grab:main_update}), we denote the  global loss gradient to be $\nabla \loss(\weights)$, each $\windex$-th worker's local loss gradient to be $\nabla \loss^\windex(\weights)$ ($\forall \windex \in [\workers]$), and each $\windex$-th worker's per-example loss gradients to be $\nabla \loss^\windex(\weights; \exindex)$ ($\forall \exindex \in [\workerexamples]$). We assume: 

% where $\loss^\windex(\weights; \exindex): \R^d \rightarrow \R$, $\exindex \in [\workerexamples]$, denotes the loss incurred on the $\exindex$-th example on the $\windex$-th worker for model weights $\weights$.

%\todo{Need a roadmap -- going to use Alweiss / integrate into framework to show that maintains the bound; make clear that the vectors here are gradients / optimization context; add some assumptions; say in words; and defer the rest to the Appendix}

\begin{assumption}[\textbf{Bounded Gradient Variance}]
\label{ass:inner-deviation}
For all $\windex \in [\workers]$ there exists a constant $\sigma > 0$ such that for all $\exindex \in [\workerexamples]$ and for all $\weights \in \R^d$, it holds that $\norm{\nabla \loss^\windex(\weights; \exindex) - \nabla \loss^\windex(\weights)}_2^2 \le \sigma^2$.
\end{assumption}

\begin{assumption}[\textbf{Bounded Data Heterogeneity}]
\label{ass:outer-deviation}
There exists a constant $\varsigma > 0$ such that $\forall \windex \in[\workers]$,
$\norm{\nabla \loss^\windex(\weights) - \nabla \loss(\weights)}_2^2 \le \varsigma^2.$
\end{assumption}

%These assumptions \textbf{only} work with uniform boundedness, but not in-expectation boundedness, a requirement that comes from herding. In return, we get deterministic convergence rate upper-bounds.
Lastly, we include one additional assumption from the 
original \grab{} paper~\citep{lu2022grab}: we assume a cross norm $L_{2,\infty}$ (which can be easily adapted to %commonly adopted 
$L_2$-smoothness by setting $L_{2,\infty}$ to be $\sqrt{d}L_2$).

\begin{assumption}[\textbf{Smoothness}]
\label{ass:smoothness}
% There exist constants $L_\infty> 0 \text{ and } L_{2,\infty}>0 \text{ such that for any }\vw,\vv\in\mathbb{R}^d$, any $\windex \in [\workers]$, and any $\exindex \in [\workerexamples]$, it holds that \todo{Confirm used Linf anywhere, otherwise remove}
% \[
% \textstyle
%     \norm{\nabla \loss^\windex(\weights; \exindex) - \nabla \loss^\windex(\vv; \exindex)}_\infty \le L_\infty \|\vw - \vv\|_\infty
%     \hspace{1em}
%     \text{and}
%     \hspace{1em}
%     \norm{\nabla f^i(\vw; j) - \nabla f^i(\vv; j)}_2 \le L_{2,\infty}\|\vw - \vv\|_\infty
% \looseness=-1
% \]
There exists constant $L_{2,\infty}>0 \text{ such that for any }\vw,\vv\in\mathbb{R}^d$, any $\windex \in [\workers]$, and any $\exindex \in [\workerexamples]$, it holds that $\norm{\nabla f^i(\vw; j) - \nabla f^i(\vv; j)}_2 \le L_{2,\infty}\|\vw - \vv\|_\infty$.
\end{assumption}
%\custompar{Remarks on assumptions} Assumption~\ref{ass:inner-deviation} and \ref{ass:outer-deviation} are standard in the distributed optimization community. 
%Per Assumption~\ref{ass:inner-deviation}, the deviation of the gradient of each local loss function $\nabla \loss^\windex(\weights,\exindex)$ from the average local loss gradient $\nabla f^\windex(\vw)$ (or, worker gradient) is uniformly bounded. Assumption~\ref{ass:outer-deviation} similarly bounds the deviation of the workers' gradients from the global gradient $\nabla f(\vw)$. 
%  For Assumption~\ref{ass:smoothness}, we follow the original \grab{} paper~\citep{lu2022grab} and assume a cross norm $L_{2,\infty}$.\footnote{This assumption can be easily adapted to %commonly adopted 
% $L_2$-smoothness by setting $L_{2,\infty}$ to be $\sqrt{d}L_2$.}

Given these assumptions, we can prove a convergence guarantee for \dgrab:  
% beginning with a few assumptions:\looseness=-1
% The convergence bound of \dgrab{} is as follows:

\begin{theorem}
\label{thm:dgrab:smooth}
%In \dgrab{}, 
% Following from Theorem~\ref{statement:alweiss}, 
%we assume 
Suppose that Assumptions~\ref{ass:inner-deviation},\ref{ass:outer-deviation} and \ref{ass:smoothness} hold. 
For any $\delta > 0$, if we set learning rate $\alpha$ to be
\begin{align*}
    \alpha = \min\left\{\frac{1}{16 L_{2,\infty} (2\workerexamples + \tilde{A}/\workers)}, \left(\frac{4 F_1 \workers^2}{42 L_{2,\infty}^2 (\varsigma + \sigma)^2\tilde{A}^2 \workerexamples \epochs + 18L_{2,\infty}^2  \workers^2\workerexamples^3 \sigma^2}\right)^{1/3}\right\}, %\hspace{1em} \text{then} %it holds that}
\end{align*}
where $F_1=f(\weights_1) - \inf_{\weights\in\mathbb{R}^d}f(\weights)$ and $\tilde A$ comes from Theorem~\ref{statement:alweiss}. Then, with probability at least $1 - T\delta$, 
\begin{align*}
\frac{1}{\epochs}\sum_{t=1}^{\epochs} \norm{\nabla f(\vw_t)}_2^2 &\le \frac{9 (F_1 L_{2,\infty}(\varsigma + \sigma)\tilde{A})^{2/3}}{(\workers \workerexamples \epochs)^{2/3}} + \frac{(72 F_1 L_{2,\infty}\sigma)^{2/3} + 64F_1 L_{2,\infty} (2 + \tilde{A}/(\workers \workerexamples))}{\epochs}\\
&= \tilde{O}\left(\frac{1}{(mnT)^{2/3}} + \frac{1}{\epochs}\right). %, \hspace{1em} \text{where } F_1=f(\weights_1) - \inf_{\weights\in\mathbb{R}^d}f(\weights).
\end{align*} %where $F_1=f(\weights_1) - \inf_{\weights\in\mathbb{R}^d}f(\weights)$.
\vspace{-.3cm}
\end{theorem}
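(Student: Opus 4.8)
The plan is to run a per-epoch smoothness descent argument, to control the deviation of the aggregated permuted update from an exact full-gradient step using the parallel herding guarantee of $\mathsf{PairBalance}$, and then to telescope across epochs and tune $\alpha$. Because Assumption~\ref{ass:smoothness} is phrased in the mixed $L_{2,\infty}$ norm, the descent inequality I would use is $f(\vw)\le f(\vv)+\langle\nabla\loss(\vv),\vw-\vv\rangle+\tfrac{1}{2}L_{2,\infty}\norm{\vw-\vv}_\infty\norm{\vw-\vv}_2$, obtained by integrating $\nabla\loss$ along the segment $[\vv,\vw]$ and invoking Assumption~\ref{ass:smoothness}. Writing the net per-epoch displacement $\weights_{\eindex+1}^1-\weightst^1=-\tfrac{\alpha}{\workers}\sum_{\exindex=1}^{\workerexamples}\sum_{\windex=1}^{\workers}\nabla\loss^\windex(\weightst^\exindex;\perm_{\eindex,\windex}(\exindex))$, and using that summing each worker's per-example gradients over a full permutation reproduces $\workerexamples\,\nabla\loss(\weightst^1)$ up to gradient drift, I would decompose this displacement into the ideal step $-\alpha\workerexamples\nabla\loss(\weightst^1)$ plus an error term collecting (i) the within-epoch parameter drift and (ii) the ordering fluctuation.

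The central quantity is the prefix sum $\sum_{k=1}^{\exindex}\sum_{\windex}\bigl(\nabla\loss^\windex(\weightst^1;\perm_{\eindex,\windex}(k))-\nabla\loss(\weightst^1)\bigr)$ measured in $\infty$-norm. This is exactly the parallel herding objective~(\ref{equ:paraherding:objective}) in an optimization context. After rescaling the centered gradients to unit norm --- Assumptions~\ref{ass:inner-deviation} and~\ref{ass:outer-deviation} give $\norm{\nabla\loss^\windex(\weightst^1;\exindex)-\nabla\loss(\weightst^1)}_2\le\varsigma+\sigma$ --- Theorem~\ref{statement:alweiss} yields an $\tilde{O}(\tilde A)$ bound on the signed prefix sums, and the reordering contraction of Algorithm~\ref{alg:reorder}, iterated across epochs, converts these into an $\tilde{O}(\tilde A)$ bound on the unsigned prefix sums of~(\ref{equ:paraherding:objective}). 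Feeding this back, the within-epoch drift obeys $\norm{\weightst^{\exindex+1}-\weightst^1}_\infty\lesssim\alpha\exindex\norm{\nabla\loss(\weightst^1)}_\infty+\tfrac{\alpha(\varsigma+\sigma)\tilde A}{\workers}$, which I would substitute into both the cross term $\langle\nabla\loss(\weightst^1),\weights_{\eindex+1}^1-\weightst^1\rangle$ and the quadratic term of the descent lemma.

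The main obstacle is staleness: $\perm_{\eindex,\windex}$ is produced in epoch $\eindex-1$ from the gradients $\nabla\loss^\windex(\weights_{\eindex-1}^{\cdot};\cdot)$, so the herding guarantee is literally about the \emph{previous} epoch's gradients, whereas the descent inequality needs prefix-sum control for the \emph{current} epoch's gradients. I would bridge this gap by writing the current centered gradients as the previous ones plus a correction, and bounding that correction via Assumption~\ref{ass:smoothness} by $L_{2,\infty}\norm{\weightst^\exindex-\weights_{\eindex-1}^\exindex}_\infty$; crucially, because $\mathsf{PairBalance}$ balances \emph{pair differences} rather than relying on a stale mean, this correction enters only additively and does not blow up at large $\alpha$. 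The accumulated correction is itself controlled by the displacement bound, yielding a self-consistent inequality in $\norm{\nabla\loss(\weightst)}_2$ and $\alpha$ that I would close by absorbing the higher-order-in-$\alpha$ terms into the left-hand side under the first branch of the stated learning rate, $\alpha\le\tfrac{1}{16L_{2,\infty}(2\workerexamples+\tilde A/\workers)}$.

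Finally, I would sum the per-epoch inequalities over $\eindex=1,\dots,\epochs$, telescope $f(\weightst^1)-f(\weights_{\eindex+1}^1)$ to $F_1$, and divide through by $\alpha\workerexamples\epochs$, arriving at a bound of the form $\tfrac{1}{\epochs}\sum_\eindex\norm{\nabla\loss(\weightst)}_2^2\lesssim\tfrac{F_1}{\alpha\workerexamples\epochs}+(\text{higher-order-in-}\alpha\text{ error})$, where the error collects a heterogeneity/herding contribution (carrying $(\varsigma+\sigma)^2\tilde A^2$) and a within-worker variance contribution (carrying $\sigma^2$). Substituting the prescribed $\alpha$ --- whose two branches minimize this expression subject to the smoothness ceiling --- and applying subadditivity of the cube root to the resulting $\tfrac{F_1}{\alpha\workerexamples\epochs}$ term splits it into the displayed summands: the heterogeneity piece produces the $(\workers\workerexamples\epochs)^{-2/3}$ rate, with the factor $\workers$ (hence the linear speedup) entering through the $\workers^2$ in the numerator of $\alpha$, which traces back to~(\ref{equ:paraherding:objective}) aggregating all $\workers$ workers per step, while the variance and smoothness pieces produce the $O(1/\epochs)$ terms. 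Since each epoch invokes $\mathsf{RandomizedBalance}$ once and Theorem~\ref{statement:alweiss} holds with probability $1-\delta$ per call, a union bound over the $\epochs$ epochs gives the overall probability $1-\epochs\delta$.
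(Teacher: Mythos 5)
Your proposal is correct and follows essentially the same route as the paper's proof: a per-epoch smoothness descent bound driven by the within-epoch drift $\Delta_t$, control of that drift through the parallel-herding/$\mathsf{PairBalance}$ guarantee (Theorem~\ref{statement:alweiss} via Lemma~\ref{lem:pair-balance}), a staleness bridge that relates the current epoch's prefix sums to the previous epoch's via Assumption~\ref{ass:smoothness} (the paper's Lemma~\ref{lem:grad-balance}), a self-consistent recursion in $\Delta_t$ closed under the cap $\alpha\le\frac{1}{16L_{2,\infty}(2\workerexamples+\tilde A/\workers)}$ (Lemma~\ref{lem:Delta-1}), and finally telescoping, learning-rate tuning, and a union bound over the $\epochs$ epochs. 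The only differences are cosmetic (e.g., your mixed-norm descent inequality and the ideal-step-plus-error decomposition versus the paper's bound on $\norm{\nabla f(\vw_t)-(\vw_t-\vw_{t+1})/(\alpha\workerexamples)}_2$), not structural.
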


We can also prove an accelerated rate for \dgrab{} if we additionally assume the P.L. condition:
\begin{assumption}[\textbf{P.L. Condition}]
\label{ass:PL}
We say the loss function $f$ fulfills the P.L. condition if there exists $\mu>0$ such that for any $\weights\in\R^d$, $\frac{1}{2}\|\nabla f(\vw)\|_2^2 \geq \mu(f(\vw) - \inf_{\vv\in\mathbb{R}^d}f(\vv)).$
\end{assumption}

\begin{theorem}
\label{thm:dgrab:PL}
%In \dgrab{}, 
Suppose that Assumptions~\ref{ass:inner-deviation}, ~\ref{ass:outer-deviation},~\ref{ass:smoothness}, and \ref{ass:PL} hold. %,\ref{ass:outer-deviation}, \ref{ass:smoothness} and \ref{ass:PL}
For any $\delta > 0$, we set constants $\tilde W$ and $C_3$ to be 
\[
    C_3 = \frac{(F_1+\sigma^2/L_{2,\infty})\mu^2}{224L_{2,\infty}^2(\varsigma + \sigma)^2\tilde{A}^2}
    \hspace{1em} \text{ and } \hspace{1em}
    \tilde{W} = W_0(T^2\workers^2\workerexamples^2C_3),
\]
where $\tilde A$ comes from Theorem~\ref{statement:alweiss}, $F_1$ is from Theorem~\ref{thm:dgrab:smooth}, and $W_0$ is the Lambert-W function. 
If we set learning rate $\alpha = \frac{2\tilde{W}}{T\workerexamples\mu}$ and if the number of epochs $T$ satisfies
\[
\epochs \ge 10 + \frac{1}{\mu}32 L_{2,\infty}(2+\tilde{A}/(\workers \workerexamples))W_0((\workers \workerexamples \epochs)^2C_3) = \tilde O(1),
\]
then, with probability at least $1 - T\delta$, 
\text{ it holds that}
%\end{align*}
\begin{align*}
F_{\epochs+1} &\le \frac{1}{(\workers \workerexamples \epochs)^2}\left(\frac{(F_1 + L_{2,\infty}^2\sigma^2)\tilde{W}}{C_3} + \frac{112L_{2,\infty}^2(\varsigma + \sigma)^2\tilde{A}^2{\tilde{W}}^2}{\mu^3}\right)
= \tilde{O}\left(\frac{1}{(\workers\workerexamples \epochs)^{2}}\right),
\end{align*}
where $F_{\epochs+1}=f(\weights_{\epochs+1}) - \inf_{\weights\in\mathbb{R}^d}f(\weights)$.
%\vspace{-.2cm}
\end{theorem}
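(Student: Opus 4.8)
The plan is to reuse the single-epoch progress bound underlying the smooth case (Theorem~\ref{thm:dgrab:smooth}) and then invoke the P.L.\ inequality to convert it into a \emph{contractive} recursion in the suboptimality $F_\eindex := f(\weights_\eindex^1) - \inf_\vw f(\vw)$, rather than the telescoping sum used in the smooth analysis. The recursion I aim for has the form
\[
F_{\eindex+1} \le \big(1 - \Theta(\alpha\workerexamples\mu)\big)\,F_\eindex + \alpha^3 C + (\text{lower-order variance terms}),
\]
where $\alpha\workerexamples$ is the \emph{effective} per-epoch learning rate: an epoch performs $\workerexamples$ averaged inner steps, and since summing the per-example gradients over all $\workers\workerexamples$ examples reproduces $\workerexamples\,\nabla\loss$, one epoch acts like a single gradient step of size $\alpha\workerexamples$. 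The P.L.\ condition (Assumption~\ref{ass:PL}) is what supplies the contraction, by replacing the $\|\nabla\loss\|_2^2$ term produced by the descent lemma with $2\mu F_\eindex$.

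Establishing this per-epoch descent bound is the main obstacle, and it is shared with Theorem~\ref{thm:dgrab:smooth}. Using Assumption~\ref{ass:smoothness}, I would expand $\loss$ along the $\workerexamples$ inner updates of epoch $\eindex$ and isolate the idealized full-gradient step from two error sources. The dominant source is the deviation of the \emph{coordinated} prefix sums $\sum_{\exindex\le k}\sum_\windex\big(\nabla\loss^\windex(\weights;\perm_\windex(\exindex)) - \nabla\loss(\weights)\big)$ from zero --- precisely the parallel-herding objective~(\ref{equ:paraherding:objective}). Theorem~\ref{statement:alweiss}, applied to the flattened paired differences processed by $\mathsf{PairBalance}$, bounds the signed sums of those differences by $\tilde A$ with high probability, and the reordering step (Algorithm~\ref{alg:reorder}) converts this into a $\tilde A$ bound on the parallel-herding objective itself; Assumptions~\ref{ass:inner-deviation} and~\ref{ass:outer-deviation} then turn it into a constant $C \propto \tilde A^2(\varsigma+\sigma)^2\workerexamples/\workers^2$. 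The crucial $1/\workers^2$ factor --- the origin of the linear speedup over \cgrab{} --- arises because $\mathsf{PairBalance}$ assigns signs across all $\workers\workerexamples/2$ worker-flattened pairs simultaneously, so the herding guarantee controls the worker-\emph{averaged} prefix sum rather than each worker separately. The second source is the within-epoch drift of $\weights_\eindex^\exindex$ from $\weights_\eindex^1$, which smoothness bounds and which contributes only lower-order $\sigma^2$ terms. The subtlety requiring the most care is that the signs driving $\mathsf{PairBalance}$ are computed from \emph{stale} epoch-$\eindex$ gradients while the convergence argument tracks the current iterates; smoothness is what bridges this gap, and carrying the constants through cleanly is the delicate bookkeeping.

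With the recursion in hand, I would unroll it over $\eindex = 1,\dots,\epochs$, bounding the geometric sum by an exponentially decaying initialization term $e^{-\Theta(\alpha\workerexamples\mu\epochs)}F_1$ plus a polynomially accumulating error term of order $\alpha^2 C/(\workerexamples\mu)$. The final step is to pick $\alpha$ to balance these two contributions, and this is exactly where the Lambert-$W$ function enters: setting $\alpha = 2\tilde W/(\epochs\workerexamples\mu)$ makes the contraction factor scale like $e^{-\Theta(\tilde W)}$, and the defining identity $\tilde W e^{\tilde W} = (\workers\workerexamples\epochs)^2 C_3$ of $\tilde W = W_0((\workers\workerexamples\epochs)^2 C_3)$ converts $e^{-\Theta(\tilde W)}$ into a factor of order $\tilde W/(\workers\workerexamples\epochs)^2$, matching the scale of the accumulated error so that both final terms are $\tilde O((\workers\workerexamples\epochs)^{-2})$. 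I would close by checking that the stated lower bound on $\epochs$ forces this $\alpha$ below the smoothness threshold $\tfrac{1}{16 L_{2,\infty}(2\workerexamples + \tilde A/\workers)}$ required for the descent lemma to hold, so that the recursion is valid at every epoch.
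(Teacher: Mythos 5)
Your proposal follows essentially the same route as the paper's own proof: the descent lemma under $L_{2,\infty}$-smoothness combined with the P.L.\ inequality to obtain the contraction $F_{\eindex+1} \le \rho F_\eindex + \text{(herding error)}$ with $\rho = 1 - \alpha\workerexamples\mu/2$, the $\tilde A$-bound from Theorem~\ref{statement:alweiss} feeding through the pair-balancing lemma to give the per-epoch error $\propto \alpha^3 \workerexamples L_{2,\infty}^2(\varsigma+\sigma)^2\tilde A^2/\workers^2$, geometric unrolling, and finally the Lambert-$W$ choice of $\alpha$ that equates $e^{-\tilde W}$ with $\tilde W/((\workers\workerexamples\epochs)^2 C_3)$ subject to the lower bound on $\epochs$ that keeps $\alpha$ under the smoothness threshold. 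The only detail your sketch compresses is that the within-epoch drift bound $\Delta_\eindex^2$ contains $\|\nabla f(\weights_\eindex)\|_2^2$ terms that the paper cancels by reserving half of the descent term rather than spending it all on the P.L.\ contraction (its Lemmas~\ref{lem:jens-pl} and~\ref{lem:Delta-2}), which is exactly the ``delicate bookkeeping'' you anticipated.
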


We prove Theorems~\ref{thm:dgrab:smooth} and~\ref{thm:dgrab:PL} in the Appendix. 
Together, they show that \dgrab{} exhibits a linear speedup in the number of workers $\workers$ over \grab~\citep{lu2022grab}'s convergence rates ($\tilde{O}((\workerexamples\epochs)^{-2/3})$ and $\tilde{O}((\workerexamples\epochs)^{-2})$, respectively).\footnote{For centralized \grab, the total number of examples $\examples=\workerexamples$ and $\workers=1$.} %in the number of workers $\workers$, 
under both smoothness and the P.L. condition. 
Further, \dgrab's convergence rate of $\tilde{O}((\workers \workerexamples \epochs)^{-2})$ is faster than many previous rates,\footnote{These exclusively focus on the P.L. case, so we compare \dgrab{} to them under the same condition.\looseness=-1} such as the high probability bound of $\tilde{O}((\workers \workerexamples)^{-1}T^{-2})$ for \dshuffle{} in \citet{yun2021minibatch}. %and the bound of $\tilde{O}((\workers \epochs)^{-2}\workerexamples^{-1})$ that uses a synchronized shuffling trick. %which requires bounded component-wise outer deviation. COOPER: cut this because it isn't clear what this means without more explanation 

%\vspace{-.2cm}
\section{\dgrab{} in Practice: Distributed and Simulation Experiments}\label{sec:experiments}
%\vspace{-.2cm}

We next verify \dgrab's accelerated convergence on a variety of empirical tasks.\footnote{Our GitHub repository is \href{https://github.com/GarlGuo/CD-GraB}{https://github.com/GarlGuo/CD-GraB}.} 
For ease of comparison, we follow the experimental plan from the original \grab{} paper,\footnote{Following \citet{lu2022grab}, for our LSTM experiment on  WikiText-2, we set the embedding dimension to 32. We note that we can improve perplexity if we set the dimension higher.} and add some additional large-scale logistic regression experiments. 
We also run an ablation study to isolate the effects of different improvements in \dgrab. We do this because online $\mathsf{PairBalance}$ exhibits performance benefits that are separate from parallelism --- namely, removing the need for gradient centering with a stale mean and allowing for higher learning rates (Section~\ref{sec:dgrab:solution}).\footnote{\grab{} can also implement online $\mathsf{PairBalance}$, in place of $\mathsf{Balance}$~\citep{lu2021general} (Appendix).\looseness=-1} 

% \todo{ **Transformers**: We also include results for GPT-2 on WikiText-103 in the Appendix (which we also could not afford to train for more epochs). We note in the conclusion why we did not focus on transformers. Models like LLMs today are pre-trained typically using only a **single epoch** and for a small number of epochs for fine-tuning (this is beginning to change in industry, but unaffordable for us). The benefits of CD-GraB (i.e., getting the theory-backed benefits of achieving the herding bound) only come in the **multi-epoch** setting, and we cannot afford the pretraining costs of a LLM for a lot of epochs.
% }

%\todo{memory usage table -- see overleaf}

%\todo{More discussion of minibatching engineering; microbatch}

% because we only have 4 availabel GPU, and NCCL assumes 1 gpu per worker

\vspace{-.2cm}
\begin{figure*}[t]
  \centering
  \begin{minipage}{1.05\linewidth}
      \hspace{-.5cm}
    \includegraphics[width=\columnwidth]{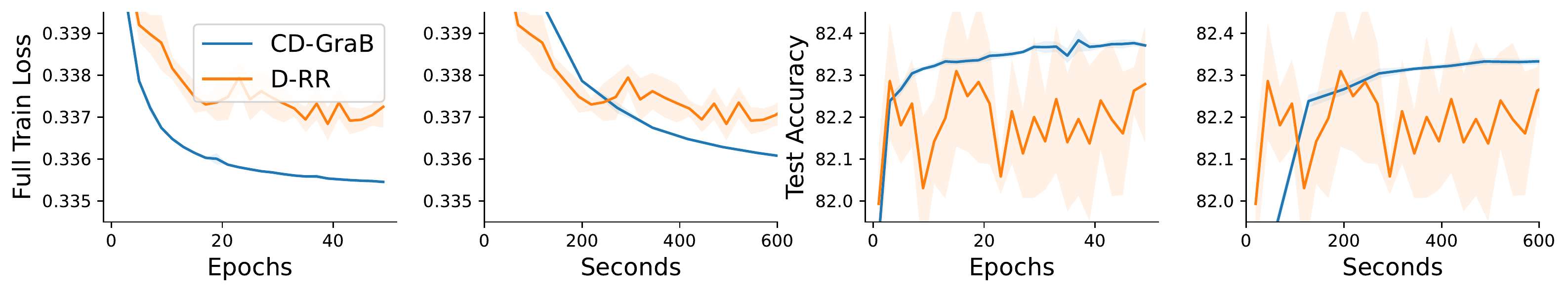}
      \vspace{-.15cm}
      \subcaption{Logistic regression on mortgage application (NY 2017 subset)~\citep{cooper2023variance}}
      \label{fig:exp:ny}
  \end{minipage}
  \begin{minipage}{1.05\linewidth}
    \hspace{-.5cm}
    \includegraphics[width=\columnwidth]{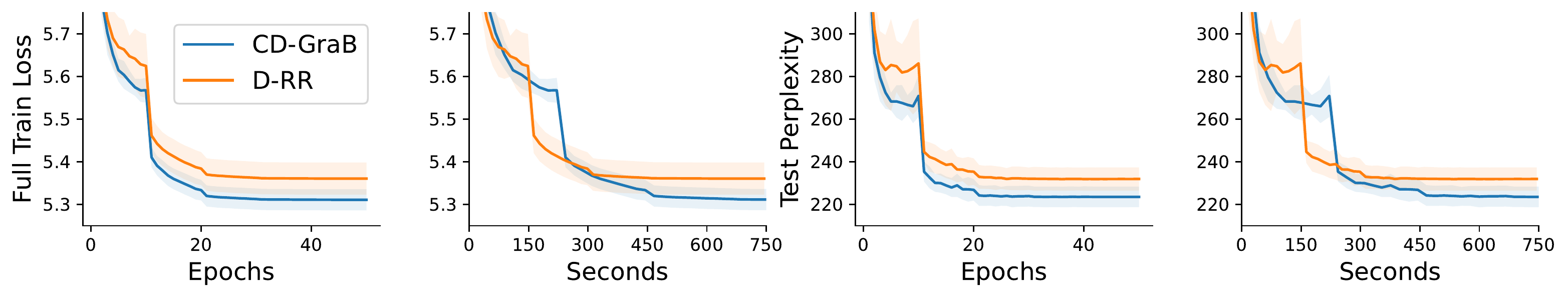}
      \vspace{-.15cm}
      \subcaption{LSTM~\citep{hochreiter1997long} on WikiText-2~\citep{merity2017regularizing}}
      \label{fig:exp:wiki}
    \end{minipage}
    \begin{minipage}{1.05\linewidth}
        \hspace{-.5cm}
    \includegraphics[width=\columnwidth]{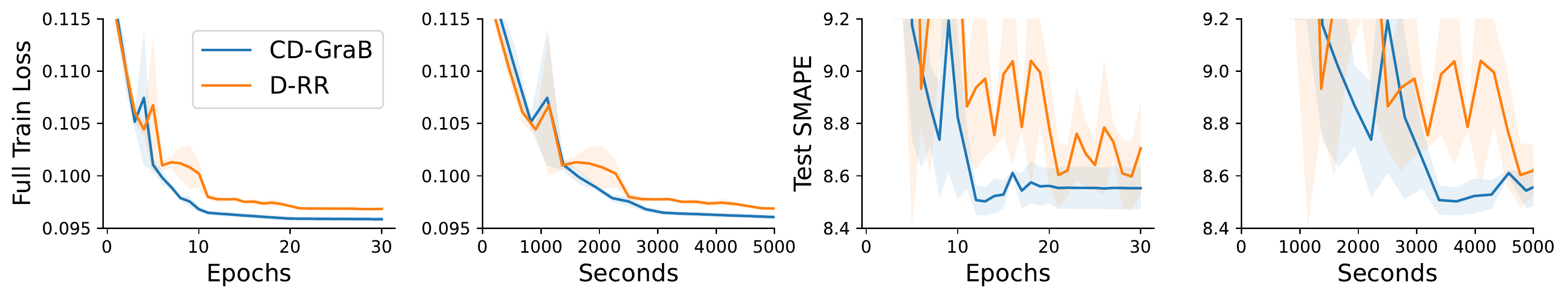}
      \vspace{-.15cm}
      \subcaption{Autoregressive MLP on M4 Weekly~\citep{MAKRIDAKIS202054}}
      \label{fig:exp:m4}
  \end{minipage}
  \caption{Convergence of \dgrab{} in comparison to \dshuffle. For each experiment, we show train loss over epochs and time (\textbf{left} of each subfigure) and test performance over epochs and time (\textbf{right} of each subfigure). We run at least 3 random seeds, and plot the mean $\pm$ STD.} %and standard deviation.}
  \label{fig:exp}
  \vspace{-.6cm}
\end{figure*}

\vspace{.1cm}
\custompar{Evaluating \dgrab's convergence speedup} We use the following three tasks for evaluating distributed training efficiency: logistic regression on a large-scale mortgage application (New York 2017 subset, 244,107 examples with 18 features)~\citep{cooper2023variance} (Figure~\ref{fig:exp:ny}), Long Short-Term Memory (LSTM)~\citep{hochreiter1997long} on the WikiText-2 dataset~\citep{merity2017regularizing} (Figure~\ref{fig:exp:wiki}), and autoregressive Multi-Layer Perceptron (MLP) on the M4 Weekly dataset~\citep{MAKRIDAKIS202054} (Figure~\ref{fig:exp:m4}). %For the mortgage application task, we use a random 80/20 train/test split, and for M4 weekly and WikiText-2 tasks, we use the predefined train-test split. 
We measure the loss incurred on the entire training set (Full Train Loss) and task-appropriate test metrics during evaluation, with respect to both the number of epochs and wall-clock time. Regarding test metrics, we measure test accuracy for the mortgage application, perplexity for WikiText-2, and SMAPE for M4. Additional details regarding the datasets, models, and test metrics can be found in the Appendix. 

For all three tasks, we use a single 128 GiB memory machine with 4 NVIDIA GeForce RTX 2080 Ti GPUs. For the mortgage application and WikiText-2 (Figures~\ref{fig:exp:ny} and~\ref{fig:exp:wiki}), we launch $\workers=4$ workers (processes), where each worker runs on one GPU. For the M4 task, we launch $\workers=32$ workers, where each of the 4 GPUs hosts 8 process workers. We use NCCL as the distributed communication backend~\cite{nccl} for the mortgage application and WikiText-2 tasks, and GLOO~\cite{gloo} as the distributed communication backend for the M4 task. 

As shown in Figure~\ref{fig:exp}, we compare \dgrab{}'s convergence to the standard distributed-training example-ordering method: random reshuffling (D-RR). From all  subfigures in Figure~\ref{fig:exp}, we observe that \dgrab{} outperforms the \dshuffle{} baseline significantly and consistently: \dgrab{} exhibits better training loss and test metrics, measured against both the number of epochs and wall-clock time.
We also note that the results for \dgrab{} are much smoother than for \dshuffle. 
This is likely due to the variance of stochastic gradients during training, which \dgrab{} reduces as a side-effect (so, too, does \grab, in comparison to RR). 
For smoother \dshuffle{} results, we can reduce the learning rate (Appendix). 
\dgrab{} allows for the use of a larger learning rate, which accelerates training while preserving the final model's performance. 

\custompar{Ablation simulation study: the importance of coordination at large scale} \dgrab{} has several design benefits over the original centralized \grab{} algorithm~\citep{lu2022grab}: coordinating parallel workers' specific permutations using $\mathsf{PairBalance}$ on the server (Algorithm~\ref{alg:dgrab}) and removing the dependency on a stale mean (Section~\ref{sec:cgrab}), which enables the ability to using larger learning rates reliably (Section~\ref{sec:dgrab:solution}). Clearly, not all of these benefits come directly from distributing training. For example, being able to use larger learning rates, is a side effect of our solution to develop \dgrab, not our main contribution. Therefore, we run a simulation ablation study to disentangle the relative importance of each of \dgrab's efficiency benefits over \grab. To do so, we compare the convergence of \dgrab{} to two additional baselines in the distributed setting, beyond \dshuffle: 
%One crucial design aspect of \dgrab{} is that the workers coordinate their local permutations using $\mathsf{PairBalance}$ on the server side, as shown in Algorithm~\ref{alg:dgrab:server}. This gives rise to a natural question: \textit{How does \dgrab{} compare to the case where each worker independently runs \grab{}, with no coordination?}. 
%In this experiment, we answer this affirmatively by extending the mortgage application to larger scales, and additionally compare the convergence of \dgrab{} with two other baselines: 
(1) \textbf{ID-\grab{} (Bal)}, where each independent worker runs \grab{} locally using $\mathsf{RandomizedBalance}$ (subroutine in Algorithm~\ref{alg:pairbalance}) to perform gradient vector balancing; (2) \textbf{ID-\grab{} (PairBal)}, where each independent worker runs \grab{} locally using $\mathsf{PairBalance}$.

Figure~\ref{fig:nodes} summarizes the results, with  convergence curves  for $\workers\in\{4,8,16,32,64\}$ workers training LeNet %\cite{lecun1998gradient} 
on CIFAR-10. %~\cite{krizhevsky2009learning}. 
We choose this task and architecture to cohere with the experiments done in the original \grab{} paper. For these experiments, we denote $B$ to be the \emph{aggregated} minibatch across all the workers, which refers to the number of stochastic examples used for an overall optimization step; each worker thus has a subset of this minibatch --- an equivalently-sized subset of $B$ examples.\footnote{For example, if we have 4 workers with an aggregated minibatch size of 32, each worker would compute their respective local gradients with 8 examples, and then all-reduce these gradients to obtain the aggregated minibatch gradient for all 32 examples for the optimization step. We discard $\examples \bmod B$ %($B$ is the aggregated minibatch size) 
examples at random to ensure $\workerexamples$ examples per worker.} %The aggregated minibatch size refers to the number of stochastic examples used for an overall optimization step. For example, if we have 4 workers with an aggregated minibatch size of 32, each worker would compute their respective local gradients with 8 examples, and then all-reduce these gradients to obtain the aggregated minibatch gradient for all 32 examples for the optimization step.}%, to ensure workers have the same number of examples $\workerexamples$.} 
We make two main observations. First, when scaling up training with more workers, \dgrab{} converges increasingly faster than the no-coordination-ordering methods \textbf{ID-\grab{} (Bal)} and \textbf{ID-\grab{} (PairBal)}. This result aligns with our theory and intuition that, when the number of workers $\workers$ increases, the parallel herding bound (\ref{equ:paraherding:objective}) will increase linearly if there is no coordination. Second, as we scale up to larger $\workers$, the convergence curves of \textbf{ID-\grab{} (Bal)} and \textbf{ID-\grab{} (PairBal)} gradually approach the curve for \dshuffle: %This indicates that, 
at larger scales, herding-based example ordering will be no better than  randomly permuting the dataset. Both observations give strong evidence that coordination   %in \dgrab, 
(i.e., running online $\mathsf{PairBalance}$ on the server %side 
to coordinate per-worker permutations) is critical for accelerating training.

% As mentioned above and in Section~\ref{sec:dgrab:solution}, the online $\mathsf{PairBalance}$ vector balancing algorithm (Algorithm~\ref{alg:pairbalance}) that we use to solve the parallel herding objective (\ref{equ:paraherding:objective}) has benefits beyond being naturally parallelizable (Section~\ref{sec:dgrab:algo}). By operating on pairs of example gradients, there is no need to perform gradient centering with a stale mean, as is done in \grab; this facilitates using larger learning rates $\alpha$ to afford greater scalability (Section~\ref{sec:dgrab:issues}). 
% As a result of these multiple benefits of $\mathsf{PairBalance}$, 

% MLP on M4
% Interested in the convergence rate of \dgrab{} compared with other sorting algorithms on different numbers of workers, we ablate the number of workers for the LeNet on CIFAR-10 task and make a convergence plot of full train loss \& test accuracy from 4 to 64 workers as shown in Figure~\ref{fig:nodes}. We observe that D-GraB is robust to different numbers of workers, and when we have more workers, the performance gaps between D-GraB and Independent \textsf{Balance} (I-B) and Independent \textsf{PairBalance} (I-PB) become larger. Therefore, our D-GraB is suitable for data parallelism compared with other sorters. 

\begin{figure}[!t]
\vspace{-.2cm}
  \centering
    \includegraphics[width=\columnwidth]{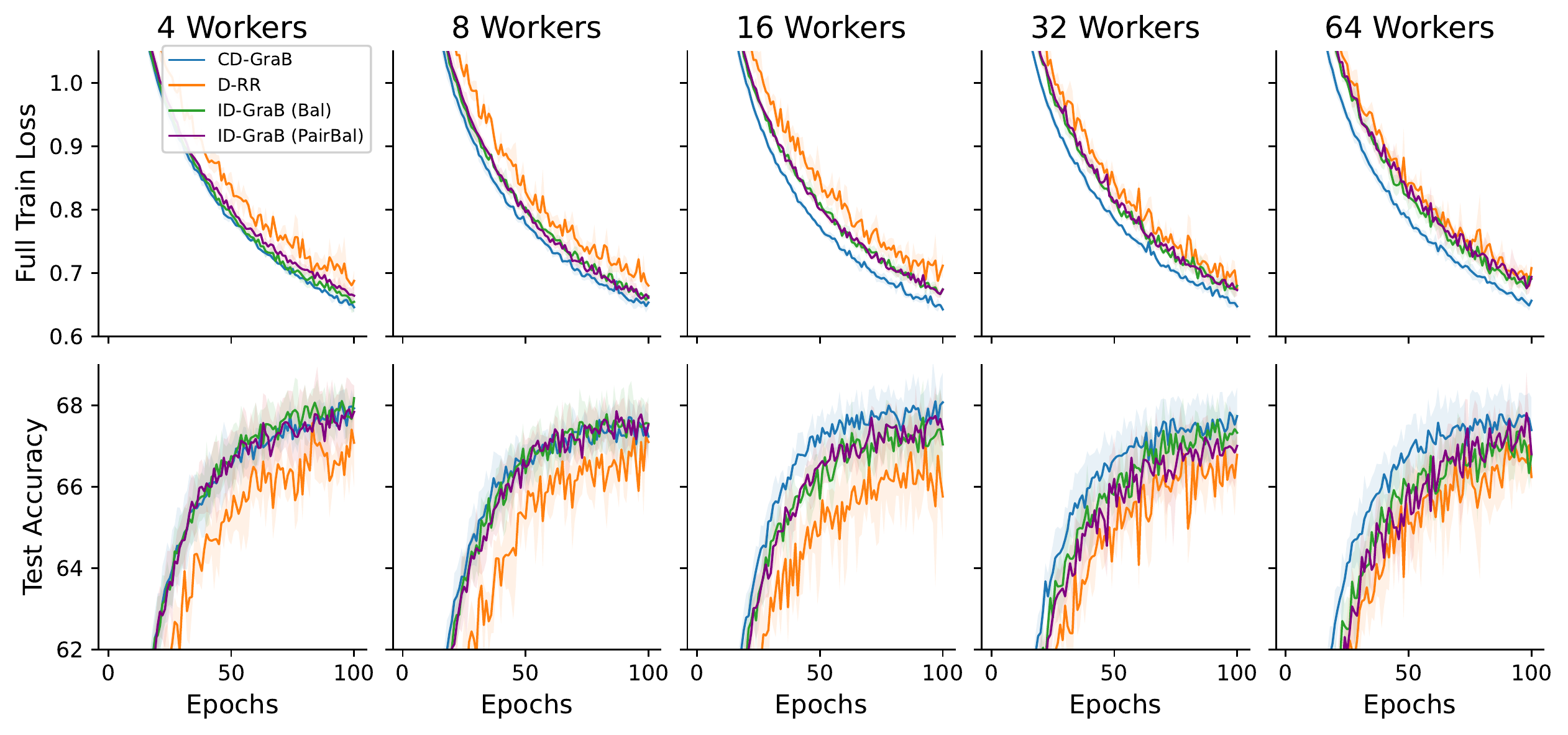}
    \vspace{-.5cm}
    \caption{Convergence for \dgrab, \dshuffle, ID-\grab{} (Bal), and ID-\grab{} (PairBal) training LeNet on CIFAR-10, with $\workers \in \{4, 8, 16, 32, 64\}$ workers. For each, the aggregated minibatch size per update is 64.\looseness=-1} %example gradients.}
  \label{fig:nodes}
  \vspace{-.55cm}
\end{figure}

% \textbf{Ablation study: the empirical parallel herding bound of \dgrab{}} \todo{Wentao} % MLP on M4

% \begin{figure}[!h]
%   \centering
%     \includegraphics[width=0.45\columnwidth]{example-image}~
%     \caption{Empirical parallel herding bound of \dgrab{} on autoregressive MLP on M4 task. More details can be found in Appendix~\ref{appendix:parallel_herding}.}
%   \label{fig:nodes}
% \end{figure}

% \textbf{Ablation study: impact from high learning rates} \todo{Wentao} % MLP on M4

% \begin{figure}[!h]
%   \centering
%     \includegraphics[width=0.45\columnwidth]{example-image}~
%     \caption{Impact from different learning rates on \dgrab{} and D-RR on autoregressive MLP on M4 task. More details can be found in Appendix~\ref{appendix:learning_rate}.}
%   \label{fig:nodes}
% \end{figure}

We note that all of these experiments use SGD, since both the theoretical results of the original \grab{} paper and our results for \dgrab{} here are for SGD. 
In the Appendix, we additionally include results for training GPT-2 on WikiText-103, for which we use AdamW as the optimizer. We find that \cgrab{} with AdamW works in practice; however, our theory results do not directly apply to these experiments. %which is why we place them in the Appendix.
We additionally include results on memory usage in the Appendix, which show that \dgrab{} results in negligible overhead in practice.

\vspace{-.3cm}
\section{Conclusion and Future Work: Toward an Order Server Architecture}\label{sec:conclusion}
\vspace{-.2cm}

% \todo{**Other optimizers**:  It would indeed be interesting to study example ordering for Adam and other optimizers (both in the single node and distributed cases). These contributions are for other papers, not the present one. For example, single-node guarantees for Adam would be a different way to extend the prior  work above, but that is not the direction we took in building on this promising area. (This is also why our experiments in the main paper involve **SGD**, not other optimizers; see our Appendix for additional experiments).}

% \todo{Data heterogeneity is common in federated learning. We state in footnote 5 that our formulation is not designed for federated learning; it is not in scope. If we were to allow for such data organization, we would need to assume non-global communication per iteration or additional constraints on how global communication occurs.}

%\todo{Other ways to approximate gradients -- do better with memory usage.}

We elevate the benefits of provably faster, permutation-based example ordering to the contemporary ML distributed-training setting. We focus %our efforts 
on reformulating the online \textbf{Gra}dient \textbf{B}alancing algorithm (\grab)~\citep{lu2022grab} because, even though it is the provably optimal permutation-based example-ordering method~\citep{cha2023tighter}, it is limited by design to \emph{centralized} settings (Section~\ref{sec:dgrab:issues}). To overcome these limitations, we redesign \grab's herding and balancing framework to account for parallel workers: A \emph{parallel herding} objective, which we solve with an online $\mathsf{PairBalance}$ subroutine, based on key insights from kernel thinning~\citep{dwivedi2021kernel, dwivedi2022generalized, barp2022targeted}. $\mathsf{PairBalance}$ operates on ordered \emph{pairs} of vectors to do \emph{balancing}, which enables our full-stack, low-overhead, \emph{\textbf{C}oordinated} and \emph{\textbf{D}istributed} online \dgrab{} algorithm. %The parallel workers produce example-specific vector gradients and send them to the server, which executes $\mathsf{PairBalance}$ on adjacent vectors to determine worker-specific example permutations for the next epoch.\looseness=-1 
We give a full specification of our online \dgrab{} algorithm (Section~\ref{sec:dgrab:algo}), provide convergence rate guarantees regarding its speedups on both 1) smooth non-convex and 2) P.L. objectives (Section~\ref{sec:theory}), and verify these speedups in practice on %a variety of 
single-node distributed tasks and a simulated ablation study (Section~\ref{sec:experiments}). 

Both our theory and experiments demonstrate that \dgrab{} really shines when there are multiple training epochs (Appendix). This is another reason that we do not emphasize experiments involving fine-tuning pre-trained models like GPT-2, as fine-tuning can be achieved in just a couple of epochs. As noted above, it is also more common to train such models using optimizers from the Adam family. In future work, we intend to extend the theory on \grab{} and \dgrab{} to such optimizers, which would make the results on optimal, permutation-based example ordering more useful for base-model pre-training. 

Pre-training from scratch would demonstrate the tremendous power of \dgrab{} to scale to very large models; however, we did not have the training budget to perform such  experiments for the present work. Further, to truly exercise the benefits of \dgrab{} in such large-scale settings, future work should investigate moving beyond the single-node setup that we present. %in this paper. 
Notably, to train larger models, %we could implement multi-node distributed ML system based on the parameter server architecture~\cite{li2014ps}, however, 
our results suggest %the potential for 
a novel distributed training architecture. The ordering operation performed by the server (Algorithm~\ref{alg:dgrab:server}) is \emph{not} very latency sensitive; the server has the duration of the entire epoch $\eindex$ to compute the new 
permutations for the next, $\eindex + 1$ epoch. Given this relaxed latency requirement, and the success of our algorithmic results, %in this paper, 
it would be an exciting direction for future ML-systems research to invest in building an \emph{Order Server} architecture. Such an architecture, which could be composed with traditional parameter servers, would afford the scalability benefits of \dgrab{} to a host of massive-scale ML applications. 

% \todo{Systems considerations: Future additional systems work should address implementation optimizations, edge cases, the engineering particulars of minibatching and CD-GraB (also see the original GraB paper’s Appendix for a sketch of how such an implementation would work), full system-level profiling of our method in comparison to other balancing strategies, detailed HPO, etc. **Such analysis would be significantly more practically useful for such a fully-implemented ML systems paper, which is not what our present paper is about.**
% }

% of  which coordinates efficient example ordering across nodes in a distributed, 

%\color{blue}
%suggest worth investment to build larger system, multiple distributed nodes, coordinated by  an  Order Server that coordinates efficient example ordering across distributed system; compose with parameter server.
%\color{black}

\section*{Acknowledgments}

A. Feder Cooper is supported by Christopher De Sa's NSF CAREER grant. Yucheng Lu is supported by Meta Ph.D. Fellowship. We also acknowledge a gift from SambaNova Systems.

\bibliography{references}
\bibliographystyle{plainnat}

%\iffalse
\newpage
\appendix
\onecolumn

\section{Glossary}\label{app:sec:gloassary}

\begingroup
\setlength{\tabcolsep}{8pt} % Default value: 6pt
\renewcommand{\arraystretch}{1.1} % Default value: 1
\begin{table}[H]
\small
\begin{center}
      \centering
        \begin{tabular}{p{0.12\linewidth}p{0.05\linewidth}p{0.7\linewidth}}
\toprule
\textbf{Term/Symbol} && \textbf{Explanation} \\
\midrule
\cgrab & & Centralized online Gradient Balancing algorithm. We use this term to refer to the centralized algorithm developed in the original \grab{} paper (\citet{lu2022grab}). \\\midrule
\dgrab & & Coordinated and distributed online Gradient Balancing algorithm. We use this term to refer to the algorithm that constitutes our main contribution. \\\midrule
ID-\grab & & Independent and distributed online gradient balancing. We implement this for our ablation study, to compare with coordinated and distributed \grab. There are two variants: One which uses the original \grab{} paper's online $\mathsf{Balance}$ algorithm (ID-\grab{} (Bal)), and one which implements our online $\mathsf{PairBalance}$ algorithm (ID-\grab{} (PairBal)).\\\midrule
\shuffle & & Random reshuffling algorithm. We use this to refer to its centralized variant.\\\midrule
\dshuffle & & Distributed random reshuffling algorithm.\\\midrule
\so & & Shuffle Once algorithm.\\\midrule
$\dataex$ && Data-example vector; we do not use this in the math in the main paper, but do refer to examples in our schematic description for $\mathsf{PairBalance}$ ordering in Figure~\ref{fig:diagram}.\\\midrule
$\ex$ & &  Vector (for illustration under the herding context). For \cgrab{} and \dgrab, these are gradients. \\\midrule
$\barex$ & & The average vector (for illustration under the herding context).\\\midrule
$\exj$ & & The $\exindex$-th component of a vector (for illustration under the herding context).\\\midrule
$\exij$ & & The $\exindex$-th component of the gradient on worker $\windex$ (for illustration under our parallel herding framework).\\\midrule
$\weights$ & & Parameters / model-weights vector.\\\midrule
$\loss$ & & Loss function. \\\midrule
$\nabla\loss(\weights)$ & & Global loss gradient. \\\midrule
$\nabla\loss^\windex(\weights)$ & & Local $\windex$-th worker's loss gradient. \\\midrule
$\nabla\loss^\windex(\weights; \exindex)$ & & Local $\windex$-th worker's, $\exindex$-th example's loss gradient. \\\midrule
$\perm$ & & A permutation; we study permutation-based example orderings. \\\midrule
$\epochs$ & & Number of epochs.\\\midrule
$\eindex$ & & Index for iterating over $\epochs$ epochs.\\\midrule
$\workers$ & & Number of workers (in this paper, workers are processes, potentially on different GPUs but on the same node). $\workers=1$ in the centralized setting.\\\midrule
$\windex$ & & Index for iterating over $\workers$ workers .\\\midrule
$\workerexamples$ & & Number of training-data examples per worker; equivalent to $\frac{\examples}{\workers}$.\\\midrule
$\examples$ & & Number of total training-data examples. $\examples=\workerexamples$ in the centralized setting.\\\midrule
$\exindex$ & & Index for iterating over examples.\\\midrule
$\g$ & & Gradient, taken with respect to the model weights $\weights$ and data examples $\dataex$.\\\midrule
$\wexgrad$ & & Gradient associated with the $\exindex$-th data example $\dataex$ on worker $\windex$.\\\midrule
$\sgn$ & & A sign, either $+1$ or $-1$; related to the signed herding problem.\\\midrule
$\sgn_\exindex^\windex$ & & A sign, either $+1$ or $-1$, computed according to the $\exindex$-th example gradient $\wexgrad$ for worker $\windex$; to be associated with the example $\dataex_\exindex$ when determining a permutation ordering using Algorithm~\ref{alg:reorder}.\\
\bottomrule
\end{tabular}
\end{center}
\end{table}

\newpage

\section{Additional Details on the \dgrab{} Algorithm and online $\mathsf{PairBalance}$}\label{app:sec:details}

In this Appendix, we provide more details on related work and our contributions. To start, we give a unified description of our online \dgrab{} algorithm with prior work on herding, vector balancing, and kernel thinning (Appendix~\ref{app:sec:details:contribution}), some more details on \citet{alweiss2021discrepancy} that we elide in the main paper due to space constraints (Appendix~\ref{app:sec:alweiss}), conceptual details on implementing \dgrab{} with a parameter server (Appendix~\ref{app:sec:ps}), and implementing our improved balancing algorithm (online $\mathsf{PairBalance}$) in a centralized fashion to get additional improvements for \grab{} (Appendix~\ref{app:sec:central}). 

\subsection{Distinguishing our contributions}\label{app:sec:details:contribution}

We summarize our contributions in relation to prior work in a concise format. This kind of presentation would not be easily understandable without the appropriate background and context that we provide in the paper. This is why present it here, in the Appendix, so that (ideally) this is seen by the reader after finishing the main paper.

We emphasize that it is prior work that:

\begin{itemize}[topsep=0pt, leftmargin=.5cm]
    \item Formulates the herding objective and solves it with vector balancing~\citep{harvey2014near, welling2009herding} (Algorithm~\ref{alg:reorder}).
    \item Leverages ideas from herding and vector balancing (above) in an optimization setting to do permutation-based example ordering~\citep{lu2022grab}.
    \item Observes and proves that it is possible to solve the herding objective in $\tilde{O}(1)$ by only examining differences on pairs of examples (the overarching idea of $\mathsf{PairBalance}$~\citep{dwivedi2021kernel}, which relies on the online $\mathsf{RandomizedBalance}$ subroutine~\citep{alweiss2021discrepancy}; see Algorithm~\ref{alg:pairbalance}). 
\end{itemize}

Our contributions are to bring together all of this prior work in a novel way. We 

\begin{itemize}[topsep=0pt, leftmargin=.5cm]
    \item Translate the herding and balancing framework to the parallel setting via defining a parallel herding objective (\ref{equ:paraherding:objective}).
    \item Leverage prior work on herding in an optimization setting~\citep{lu2022grab} so that we can do parallel herding in an optimization setting (Section~\ref{sec:dgrab}).
    \item Execute \emph{online} pair balancing on a server (Algorithm~\ref{alg:pairbalance} on a running sum, Figure~\ref{fig:diagram}), i.e., do pair balancing in a streaming and asynchronous (rather than blocking) fashion from gradient vectors produced on distributed workers (Algorithm~\ref{alg:dgrab}), on the flattened sequenced of paired-difference gradients (Section~\ref{sec:dgrab:solution}); this leads to an improvement over \grab, which relies on a stale mean.
\end{itemize}

\subsection{More details on $\mathsf{RandomizedBalance}$ from \citet{alweiss2021discrepancy}} \label{app:sec:alweiss}

In the subroutine for $\mathsf{RandomizedBalance}$ in Algorithm~\ref{alg:pairbalance}, we elide details about how the probability $p$ is computed exactly as in \citet{alweiss2021discrepancy}. We provide a more complete specification in Algorithm~\ref{app:alg:alweiss} written in terms of a single input vector (which, for us, is the vector containing the difference between adjacent gradients). Note that the difference here is in the use of a required parameter, constant upper bound $w$, which is used to compute the probability $p$. For clarity of presentation in the subroutine in Algorithm~\ref{alg:pairbalance}, we have set $w=1$.
\citet{alweiss2021discrepancy} sets this threshold differently, which we still elide for simplicity. 

\begin{algorithm}[h]
\caption{Probabilistic Balancing with Logarithm Bound [\citet{alweiss2021discrepancy}]}\label{app:alg:alweiss}
    \begin{algorithmic}[1]
    \Statex \textbf{require:} parameter $w$, used to compute probability
    \Statex \textbf{input:} current running sum $\vr$ vector, vector $\vz_{\text{diff}}$
    \If{$|\langle \vr, \vz_{\text{diff}} \rangle |>w$ or $\norm{\vr}_\infty>w$}
           \State \textbf{Fail}
    \EndIf
    \State \textbf{compute:} $p\leftarrow \frac{1}{2} - \frac{\langle \vr,\vz_{\text{diff}}\rangle}{2w}$
    \State \textbf{compute:} $\sgn \leftarrow +1$ \hspace{.2em} with probability \hspace{.2em} $p$; 
    \Statex  \hspace{4.25em}$\sgn\leftarrow -1$ \hspace{.2em} with probability \hspace{.2em} $1-p$
    \State \textbf{update:}  $\vr\leftarrow \vr+\sgn\vz_{\text{diff}}$
    \State \textbf{return:}  $\sgn$, $\vr$
    \end{algorithmic}
\end{algorithm}

 In practice, we actually do not use $\mathsf{RandomizedBalance}$ in our online $\mathsf{PairBalance}$. We use the deterministic, greedy-ordering algorithm from the original \citet[Algorithm 5]{lu2022grab} paper: 

 \begin{algorithm}[h]
\caption{Balancing without normalization [\citet{lu2022grab}]}\label{app:alg:greedy}
    \begin{algorithmic}[1]
    \Statex \textbf{input:} current running sum $\vr$ vector, vector $\vz_{\text{diff}}$
    \State \textbf{if } $\norm{\vr + \vz_{\text{diff}}} < \norm{\vr - \vz_{\text{diff}}}$ \textbf{ then } $\sgn \leftarrow +1$ \textbf{ else } $\sgn \leftarrow -1$
    \State \textbf{update:}  $\vr\leftarrow \vr+\sgn\vz_{\text{diff}}$
    \State \textbf{return:}  $\sgn$, $\vr$
    \end{algorithmic}
\end{algorithm}

Note that, unlike \citet{alweiss2021discrepancy} (Algorithm~\ref{app:alg:alweiss}), Algorithm~\ref{app:alg:greedy} from \citet{lu2022grab} cannot end up in a failure state. 

In \citet{alweiss2021discrepancy}, Theorem 1.1 proves the $\tilde{O}(1)$ probabilistic bound for Algorithm~\ref{app:alg:alweiss} (See Theorem~\ref{statement:alweiss}) for a restatement of this result in terms of our work). Corollary 7 of \citet{dwivedi2021kernel} re-proves this result (which they mislabel as \citet{alweiss2021discrepancy}, Theorem 1.2, see \citet[Appendix R, p. 69]{dwivedi2021kernel}). They improve the constants and have a less conservative setting of the thresholds $w$. The proof is also very short and elegant, by relying on their Theorem 3. 

\subsection{Implementing \dgrab{} with a parameter server}\label{app:sec:ps}

For our implementation of \dgrab{}, we use a parameter server architecture~\citep{li2014ps}. For our purposes, this just entails computing the average gradient (used to update the model on all workers) on the server side. That is, the server (other than determining the ordering for the next epoch) also has the function of aggregating gradient information (in this case, a simple mean) to send back to the workers. 

We have the server compute the average $\exindex$-th gradient for illustrative purposes. We could, instead, implement the computation the average gradient as an all-reduce operation, in which each worker broadcasts their gradients to all other workers, so that they can each locally compute the average gradient to update their local models. We implement \dgrab{} using a parameter server pattern to show that this is a plausible architecture to use with our coordinated and distributed example ordering algorithm. We could also implement a full parameter server system, for which the server also coordinates global model updates.

If we kept everything in our implementation the same and switched to all-reduce, then we would no longer be following a parameter server paradigm. In this case, the server would just function to determine example orders. It is this kind of paradigm that suggests the abstraction of an \emph{order server}, which we mention briefly in Section~\ref{sec:conclusion}: A server whose sole responsibility is coordinating worker information to determine example ordering.

In future work, we intend to explore a host of architectural possibilities --- of building a full system that incorporates both traditional parameter server aspects with our new abstraction of an order server. For example, we could have parameter servers and order servers work in tandem in a distributed system to perform model training. To move beyond the single-node implementation we present in this paper, we intend to investigate the benefits and trade-offs associated with such design decisions in an actual implemented system.

\subsection{Centralized online $\mathsf{PairBalance}$} \label{app:sec:central}

In Section~\ref{sec:dgrab:algo}, we provide a schematic diagram of how online  $\mathsf{PairBalance}$ works for a distributed implementation using a parameter server (Figure~\ref{fig:diagram}). We also claim in Section~\ref{sec:dgrab}~that online $\mathsf{PairBalance}$ can be applied to the original centralized \grab{} algorithm for improved empirical performance. We provide a schematic here, in Figure~\ref{app:fig:centralizedpairbal} (analogous to Figure~\ref{fig:diagram}), for online $\mathsf{PairBalance}$ for centralized \grab. 
\newpage

\begin{figure}[t!]
\centering
\includegraphics[width=.95\linewidth]{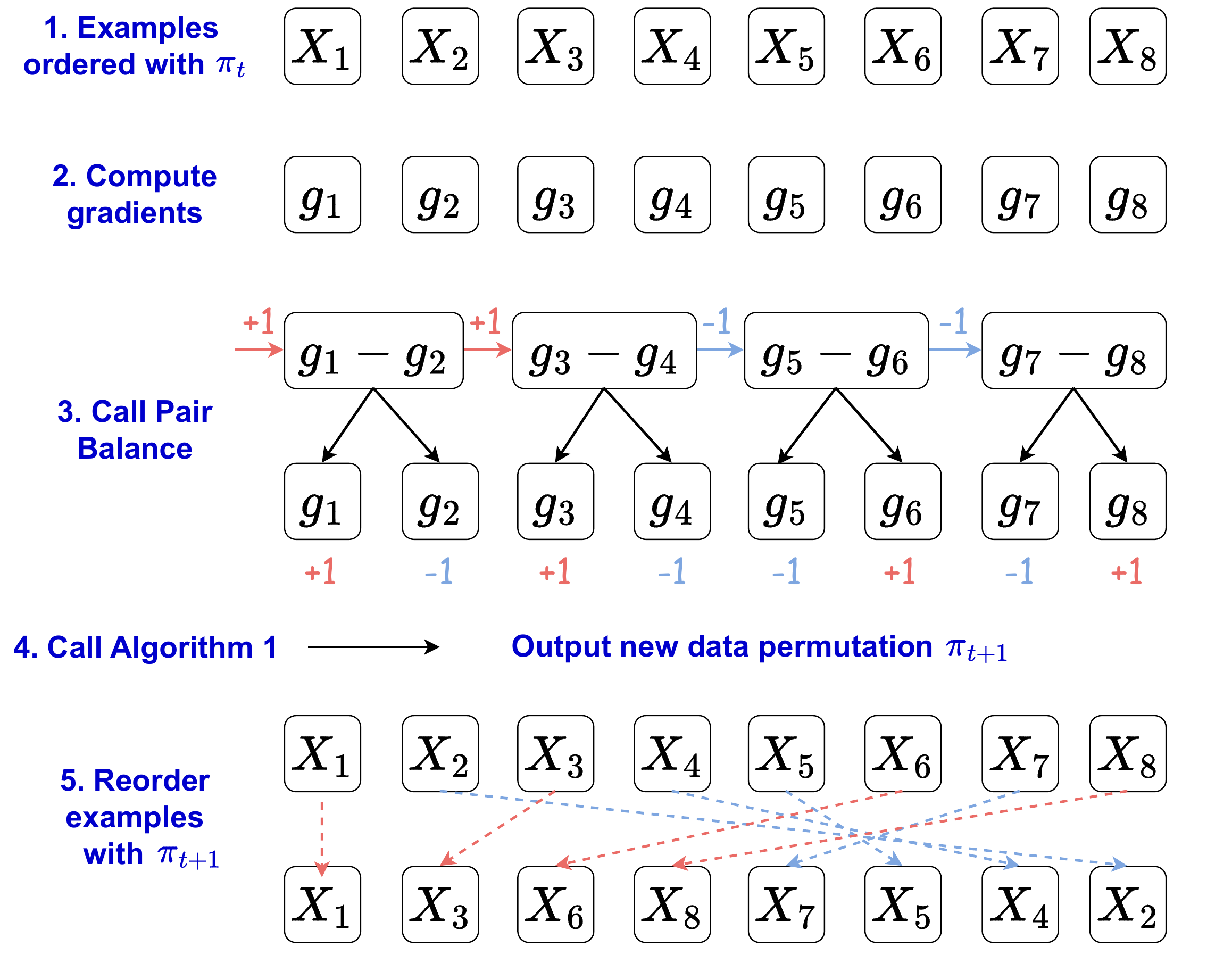}
\caption{Schematic representation of online  $\mathsf{PairBalance}$ for centralized \grab.}
\label{app:fig:centralizedpairbal}
\end{figure}

We also provide empirical results comparing \grab's $\mathsf{Balance}$ routine to the online $\mathsf{PairBalance}$ routine that we instead use in this work.  We observe that both $\mathsf{PairBalance}$ and $\mathsf{Balance}$ would have similar convergence rates under centralized settings, and both outperform \shuffle. 

This experiment justifies the uses of $\mathsf{PairBalance}$ even in centralized learning settings. $\mathsf{PairBalance}$ theoretically tolerates higher learning rates and, as we will justify in Appendix~\ref{sec:appendix-memory}, is more memory-efficient than $\mathsf{Balance}$. In short,   $\mathsf{PairBalance}$ an excellent substitute for $\mathsf{Balance}$ when running \grab. 

\begin{figure}[!h]
    \centering
    \includegraphics[width=\linewidth]{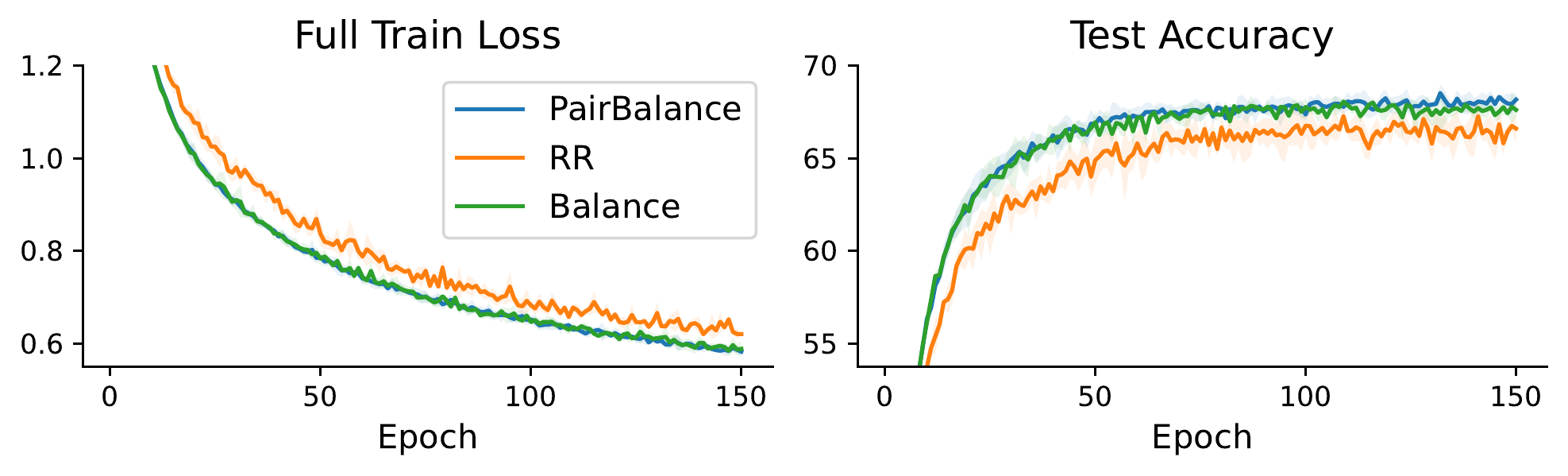}
    \caption{Convergence for centralized online $\mathsf{PairBalance}$ on LeNet on CIFAR-10. We use the identical set of hyperparameters ($\alpha$ = 1e-3, weight decay = 1e-2, momentum = 0.9, $B = 64$) as in the scaling experiments as in Figure~\ref{fig:nodes}.}
    \label{fig:centralized_pair_bal_convergence}
\end{figure}

\section{Proof Results}

We present supporting results, which we use to prove the main results presented in Section~\ref{sec:theory}. First, we show how to analyze the parallel herding bound in terms of a single step over the server-side $\mathsf{PairBalance}$ algorithm (Appendix~\ref{app:sec:proof:herding}). We then include some additional observations/notation (Appendix~\ref{app:proof:note}), which we use in the remaining intermediate results. We prove some intermediate results about how much the loss can change over the course of one epoch, assuming smoothness (Appendix~\ref{app:proof:smooth}) and bounded gradient variance and heterogeneity (Appendix~\ref{app:proof:other}). We combine these results to get one more intermediate result about the maximum the loss  can change on average over many epochs (Appendix~\ref{app:proof:last}), which we then use altogether to prove the two theorems that we present in the main paper (Appendix~\ref{app:thm:proof}). 

\subsection{Analyzing the parallel herding bound}\label{app:sec:proof:herding}

In the main paper, we cover how \dgrab{} runs on both the worker- and server-side. In this section, we dive deeper into the example-ordering part of \dgrab{}, and demonstrate in theory how server-side online $\mathsf{PairBalance}$ reduces the parallel herding bound (\ref{equ:paraherding:objective}), as formulated in Section~\ref{sec:dgrab}. We conclude this section by presenting Lemma~\ref{lem:pair-balance}, which shows server-side $\mathsf{PairBalance}$ is able to iteratively reduce the parallel herding bound.

To begin, we formalize our illustration over a group of vectors (since vector balancing, including $\mathsf{PairBalance}$, does not inherently involve an optimization context until we use it in our online setting on gradients). 
Without loss of generality, we assume that the $\examples$ examples are divided evenly among the $\workers$ workers %(i.e., $\examples\bmod\workers = 0$) 
and that $\workerexamples$ is even.
That is, we consider that we are 
given a set of vectors $\exij \in \R^d$ for $\windex \in [\workers]$ and $\exindex \in [\workerexamples]$ evenly located on $\workers$ workers (i.e., $\workerexamples=\frac{\examples}{\workers}$), where $\exij$ denotes the $\exindex$-th vector located on the $\windex$-th worker. Now denote $\perm_{\windex}$ as the original permutation of the vectors on worker $\windex$. Consider running Algorithm~\ref{alg:dgrab:vector:server} on the server side over these $\examples$ vectors. 

\begin{figure}[h!]
\vspace{-.5cm}
\begin{algorithm}[H]
\caption{Server-side $\mathsf{PairBalance}$ over a set of vectors (one step)}\label{alg:dgrab:vector:server}
\begin{algorithmic}[1]
    \Statex \textbf{require:} $\workers$ workers, $\workerexamples \coloneqq \frac{\examples}{\workers}$ vectors per worker
    \Statex \textbf{input:} initial permutations for all the workers $\{\perm_{\windex}\}_{\windex=1}^\workers$
    \State \textbf{initialize:} new permutations for all the workers $\{\perm'_{\windex}\}_{\windex=1}^\workers$
    \State  \textbf{initialize:} running partial sum $\vh=\bm{0}$
    \State  \textbf{initialize:} new indices front (left) pointer $\{l_i=1\}_{\windex=1}^\workers$
    \State  \textbf{initialize:} new indices back (right) pointer $\{r_i=1\}_{\windex=1}^\workers$
    \For{ example $j \coloneqq 1 \ldots \workerexamples$}
        \For{worker $\windex \coloneqq 1 \ldots \workers$}
            \If{$\exindex\bmod 2 = 0$} \hspace{.25cm} $\rhd$ If at an even index, i.e., can examine a full pair of examples
                \State $\vh,\; \wexsignprev,\; \wexsign \leftarrow \mathsf{PairBalance}(\vh,\; 
                \vz_{\exindex-1}^{\windex}, \vz_{\exindex}^{\windex})$
                \If{$\wexsignprev=+1$}
                    \State $\perm'_{\windex}(l_\windex)=\exindex-1$; \hspace{.5em} $l_\windex=l_\windex+1$ \hspace{.25cm} $\rhd$ Append first in pair to the front/left
                    \State $\perm'_{\windex}(r_\windex)=\exindex$; \hspace{.5em} $r_\windex=r_\windex-1$ \hspace{.25cm} $\rhd$ Append second in pair to the right/back
                \Else
                    \State $\perm'_{\windex}(l_\windex)=\exindex$; \hspace{.5em} $l_\windex=l_\windex+1$ \hspace{.25cm} $\rhd$ Append second in pair to the left/front
                    \State $\perm'_{\windex}(r_\windex)=\exindex-1$; \hspace{.5em} $r_\windex=r_\windex-1$ \hspace{.25cm} $\rhd$ Append first in pair to the right/back
                \EndIf
            \EndIf
        \EndFor
    \EndFor
    \State \textbf{output:} new permutations for all $\workers$ workers $\{\perm'_{\windex}\}_{\windex=1}^\workers$
\end{algorithmic}
\end{algorithm}
\vspace{-.4cm}
\caption{One-step $\mathsf{PairBalance}$ algorithm on the server side to solve the parallel herding problem (\ref{equ:paraherding:objective}). This algorithm can be seen as a prototype for Algorithms~\ref{alg:dgrab:workers} and~\ref{alg:dgrab:server}, without the optimization context.}
\vspace{-.1cm}
\end{figure}

It follows, in the following Lemma~\ref{lem:pair-balance}, that we can get the parallel herding bound with the output permutations $\{\perm'_\windex\}_{\windex=1}^\workers$ from Algorithm~\ref{alg:dgrab:vector:server}:

\begin{lemma}
\label{lem:pair-balance}
Suppose that we have a set of vectors $\exij \in \R^d$ for all $\windex, \windex' \in [\workers]$ and for all $\exindex, \exindex' \in [\workerexamples]$ that satisfies
\begin{align*}
%\textstyle
    \norm{\sum_{\windex=1}^\workers \sum_{\exindex=1}^\workerexamples   \exij}_\infty \le c_1 \hspace{2em}\text{and}\hspace{2em}
    \norm{\vz_{\windex',\exindex'} - \frac{1}{\workers\workerexamples}\sum_{\windex=1}^\workers \sum_{\exindex=1}^\workerexamples  \exij}_\infty \le c_2 %\quad\forall \windex' \in [\workers], \;\; \exindex' \in [\workerexamples]
\end{align*}

for some constants $c_1>0$ and $c_2>0$.
If we run Algorithm~\ref{alg:dgrab:vector:server} over these vectors, then, for any $\delta>0$, it holds with probability at least $1-\delta$ that 
\begin{align*}
\max_{l \in [\workerexamples]}  \norm{\sum_{\windex=1}^\workers\sum_{\exindex=1}^l \ex_{\windex,\perm'_\windex(\exindex)}}_\infty \le & \;\; \frac{1}{2}\max_{l \in [\workerexamples]}\norm{\sum_{\windex=1}^\workers \sum_{\exindex=1}^{l} \ex_{\windex,\perm_\windex(\exindex)}}_\infty  + c_1 + \tilde{A}c_2,
\end{align*}
where $\tilde{A}$ comes from Theorem~\ref{statement:alweiss}.
\end{lemma}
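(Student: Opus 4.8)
The plan is to follow the template of the Harvey--Samadi balancing lemma (the guarantee accompanying Algorithm~\ref{alg:reorder}), but to exploit the pairing structure of Algorithm~\ref{alg:dgrab:vector:server} to force the per-worker prefix sums to line up at a \emph{common} index $l$, which is exactly what the parallel objective~(\ref{equ:paraherding:objective}) demands. First I would set up notation for one pass of the server routine: for each worker $\windex$ and each pair $k\in[\workerexamples/2]$ (the examples $2k-1$ and $2k$ under the input order $\perm_\windex$), write the pair average $\vm_k^\windex=\tfrac12(\ex_{\windex,\perm_\windex(2k-1)}+\ex_{\windex,\perm_\windex(2k)})$ and the pair difference $\vy_k^\windex=\ex_{\windex,\perm_\windex(2k-1)}-\ex_{\windex,\perm_\windex(2k)}$, and let $\sgn_k^\windex\in\{\pm1\}$ be the sign that $\mathsf{PairBalance}$ assigns to the first element of the pair. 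Then the two elements of the pair are exactly $\vm_k^\windex\pm\tfrac{\sgn_k^\windex}{2}\vy_k^\windex$, with the ``$+$'' element carrying sign $+1$ and the ``$-$'' element carrying sign $-1$.

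Second, I would record the combinatorial shape of the output permutation $\perm'_\windex$. Because every pair deposits its $+1$-element at the advancing front pointer and its $-1$-element at the retreating back pointer, the front half of $\perm'_\windex$ holds the $+1$-elements in pair order $k=1,\dots,\workerexamples/2$, while the back half holds the $-1$-elements in reverse pair order. Consequently a length-$l$ prefix splits into two regimes: for $l\le\workerexamples/2$ it is the sum of the $+1$-elements of pairs $1,\dots,l$; for $l>\workerexamples/2$ it is the \emph{entire} vector set minus the $-1$-elements of pairs $1,\dots,\workerexamples-l$ (the untouched tail of the back half). In each regime, summing $\vm_k^\windex\pm\tfrac{\sgn_k^\windex}{2}\vy_k^\windex$ over the relevant pairs and over all workers separates the prefix into a ``mean part'' and a ``signed-difference part.''

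Third, I would bound each part. The mean part is, up to the factor $\tfrac12$, a genuine prefix sum of the \emph{input} order over all workers up to an even index $\le\workerexamples$ (since $\sum_\windex\sum_{k\le a}\vm_k^\windex=\tfrac12\sum_\windex\sum_{\exindex\le 2a}\ex_{\windex,\perm_\windex(\exindex)}$), so its $L_\infty$ norm is at most $\tfrac12\max_{l}\norm{\sum_\windex\sum_{\exindex=1}^{l}\ex_{\windex,\perm_\windex(\exindex)}}_\infty$. The ``total'' term, which appears only in the $l>\workerexamples/2$ regime, is the full sum $\sum_\windex\sum_\exindex\ex_{\windex,\exindex}$, bounded by $c_1$ through the first hypothesis. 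The signed-difference part $\tfrac12\sum_\windex\sum_{k\le a}\sgn_k^\windex\vy_k^\windex$ is precisely a prefix of the \emph{flattened} stream of signed pair-differences that $\mathsf{PairBalance}$ feeds into $\mathsf{RandomizedBalance}$ (the routine visits all workers at each pair index before advancing), so Theorem~\ref{statement:alweiss} applies directly. Since each $\vy_k^\windex$ has magnitude controlled by the second hypothesis --- centering by $\barex$ cancels in the difference, giving $\norm{\vy_k^\windex}\le 2c_2$ --- feeding the normalized differences into $\mathsf{RandomizedBalance}$ gives, with probability at least $1-\delta$, a signed-prefix bound of $\tilde{A}$ times the norm scale, which after the $\tfrac12$ prefactor contributes $\tilde{A}c_2$. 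Summing the three pieces yields $\tfrac12\max_l\norm{\,\cdots}_\infty+c_1+\tilde{A}c_2$; the $c_1$ term is genuinely absent when $l\le\workerexamples/2$, so taking the maximum over $l$ is safe.

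I expect the main obstacle to lie in Steps two and three: checking that the pairing makes the signed-difference sum a true \emph{prefix} of the single stream processed by $\mathsf{RandomizedBalance}$, so that Theorem~\ref{statement:alweiss} can be invoked verbatim. This hinges on the worker-inner, pair-outer traversal in Algorithm~\ref{alg:dgrab:vector:server} and on the fact that each pair contributes exactly one $+1$ and one $-1$ element, which is what produces the perfectly balanced front/back split and aligns all workers at a common prefix index $l$. A secondary, bookkeeping-level subtlety is the norm conversion needed to pass from the $L_\infty$ bound $c_2$ on centered vectors to the $L_2$-normalized inputs required by Theorem~\ref{statement:alweiss}; I would handle this by normalizing the differences before balancing and carrying the scale factor through, so that the final constant lands exactly on $\tilde{A}c_2$.
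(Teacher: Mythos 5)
Your proposal is correct and follows essentially the same route as the paper's proof: the paper likewise splits each per-worker prefix into pair sums plus signed pair differences (its $\kappa_{\windex,K}$ and $\upsilon_{\windex,K}$, matching your mean part and signed-difference part), invokes Theorem~\ref{statement:alweiss} on the flattened worker-inner/pair-outer stream of differences whose norms are bounded by $2c_2$ via the same centering cancellation, and handles the same two prefix regimes, with $c_1$ entering only when the prefix crosses into the reversed negative half. Even the subtlety you flag --- converting the $L_\infty$ bound $c_2$ into the normalization Theorem~\ref{statement:alweiss} requires --- is handled in the paper exactly as you propose, by scaling the differences and carrying the factor through to land on $\tilde{A}c_2$.
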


Lemma~\ref{lem:pair-balance} shows that $\mathsf{PairBalance}$ reduces the parallel herding objective (\ref{equ:paraherding:objective}) towards a constant (invariant to $n$) at each step. This implies that, if we repeatedly call $\mathsf{PairBalance}$ on a given permutation, it will return a permutation that guarantees the parallel herding bound to be $\tilde{O}(1)$.

\begin{proof}
We prove this lemma by defining the following auxiliary sequence of pair differences, as in Section~\ref{sec:dgrab:solution}

\[
    \vy_{\workerexamples\cdot(k-1)+\windex} = \ex_{\windex,\perm_\windex(2k-1)} - \ex_{\windex,\perm_\windex(2k)}, \;\; \forall k\in[\workerexamples/2],
\]

which we also can refer to as $\{\vy_\exindex\}_{\exindex=1}^{\workers\workerexamples/2}$.

We also leverage Theorem~\ref{statement:alweiss}, which we reprint below for clarity of presentation:

\setcounter{theorem}{0}

% \begin{statement}
%     For the $\mathsf{RandomizedBalance}$ subroutine in Algorithm~\ref{alg:pairbalance}~\citep{alweiss2021discrepancy}, there exists a constant $\tilde{A}=\tilde{O}(1)$ such that for any input vectors $\{\vz_\exindex\}_{\exindex=1}^\examples$ with $\norm{\vz_\exindex}_2 \le 1$, $\mathsf{RandomizedBalance}$ outputs a sequence of signs $\{s_\exindex\}_{\exindex=1}^\examples\in \{-1,1\}$ that fulfill 
%     $\textstyle \max_{k\in[\examples]}\norm{\sum\nolimits_{\exindex=1}^k s_\exindex \left( \vz_\exindex - \bar{\vz} \right)}_{\infty} \le \tilde{A}, \hspace{1em}\text{with} \hspace{1em}\bar{\vz}=\frac{1}{\examples}\sum\nolimits_{i=1}^{\examples}\vz_i.$
% \end{statement}

\begin{theorem}[\textbf{Corollary 7, \citet{dwivedi2021kernel}}]
    Consider any vectors $\{\vz_\exindex\}_{\exindex=1}^\examples$ ($\exj \in \R^d$) with $\norm{\vz_\exindex}_2 \le 1$ supplied as input to the $\mathsf{RandomizedBalance}$ subroutine in Algorithm~\ref{alg:pairbalance}. 
    Then for any $\delta > 0$, with probability at least $1 - \delta$, $\mathsf{RandomizedBalance}$ outputs a sequence of signs $\{s_\exindex\}_{\exindex=1}^\examples\in \{-1,1\}$ that satisfy $\textstyle \max_{k\in[\examples]}\norm{\sum\nolimits_{\exindex=1}^k s_\exindex\vz_\exindex}_{\infty} \le \tilde{A}$, where 
    $\tilde{A}=\sqrt{2\log(\frac{4d}{\delta})\log(\frac{4N}{\delta})}=\tilde{O}(1)$.

    % There exists a constant $\tilde{A}=\sqrt{2\log(\frac{4d}{\delta})\log(\frac{4N}{\delta})}=\tilde{O}(1)$ such that, with probability $1 - \delta$, $\mathsf{RandomizedBalance}$ outputs a sequence of signs $\{s_\exindex\}_{\exindex=1}^\examples\in \{-1,1\}$ that satisfy $\textstyle \max_{k\in[\examples]}\norm{\sum\nolimits_{\exindex=1}^k s_\exindex\vz_\exindex}_{\infty} \le \tilde{A}.$ 

    % For the $\mathsf{RandomizedBalance}$ subroutine in Algorithm~\ref{alg:pairbalance},%~\citep{alweiss2021discrepancy}, 
    % there exists a constant $\tilde{A}=\tilde{O}(1)$ such that for any input vectors $\{\vz_\exindex\}_{\exindex=1}^\examples$ ($\exj \in \R^d$) with $\norm{\vz_\exindex}_2 \le 1$, $\mathsf{RandomizedBalance}$ outputs a sequence of signs $\{s_\exindex\}_{\exindex=1}^\examples\in \{-1,1\}$ that fulfill 
    % $\textstyle \max_{k\in[\examples]}\norm{\sum\nolimits_{\exindex=1}^k s_\exindex\vz_\exindex}_{\infty} \le \tilde{A}.$ \hspace{1em}\text{with} \hspace{1em}\bar{\vz}=\frac{1}{\examples}\sum\nolimits_{i=1}^{\examples}\vz_i.$
\end{theorem}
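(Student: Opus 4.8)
This statement is the self-balancing-walk discrepancy bound of Alweiss--Liu--Sawhney~\citep{alweiss2021discrepancy}, sharpened to the product-of-logarithms constant by \citet{dwivedi2021kernel}; accordingly my plan is to reconstruct the exponential-potential (Freedman-type) argument that underlies it rather than to invoke it as a black box. Write $\vr_k = \sum_{i=1}^k s_i \vz_i$ for the running sum maintained by $\mathsf{RandomizedBalance}$, and let $\mathcal{F}_{k-1}$ be the $\sigma$-algebra generated by $s_1,\dots,s_{k-1}$. The one structural fact I would extract from the sign law $p=\tfrac{1}{2}(1-\langle\vr_{k-1},\vz_k\rangle)$ is the \emph{self-correcting drift} $E[s_k\mid\mathcal{F}_{k-1}] = -\langle\vr_{k-1},\vz_k\rangle$: whenever the current sum aligns with the incoming vector, the next sign is biased so as to cancel that alignment. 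This is the mechanism that keeps $\|\vr_k\|_\infty$ from growing like a free random walk.

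The engine of the proof is the coordinatewise exponential potential $\Phi_k = \sum_{j=1}^d\bigl(e^{\lambda(\vr_k)_j}+e^{-\lambda(\vr_k)_j}\bigr)$ for a parameter $\lambda>0$ fixed at the end. First I would compute the one-step conditional expectation exactly. Using $E[e^{\pm\lambda s_k(\vz_k)_j}\mid\mathcal{F}_{k-1}] = \cosh(\lambda(\vz_k)_j)\mp\langle\vr_{k-1},\vz_k\rangle\sinh(\lambda(\vz_k)_j)$ and collecting both signs per coordinate yields
\[
E[\Phi_k\mid\mathcal{F}_{k-1}] = 2\sum_j\cosh(\lambda(\vz_k)_j)\cosh(\lambda(\vr_{k-1})_j) - 2\langle\vr_{k-1},\vz_k\rangle\sum_j\sinh(\lambda(\vz_k)_j)\sinh(\lambda(\vr_{k-1})_j).
\]
The first sum is the diffusive growth and the second is the negative drift supplied by the self-correction.

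The heart of the argument --- and the step I expect to be the main obstacle --- is converting this identity into a (near-)supermartingale inequality $E[\Phi_k\mid\mathcal{F}_{k-1}]\le\Phi_{k-1}$. The clean analytic tool is the coordinatewise sub-Gaussian bound $\cosh(x)-q\sinh(x)\le\exp(-qx+\tfrac12 x^2)$, valid once $|q|\le 1$, which would give $E[e^{\lambda(\vr_k)_j}\mid\mathcal{F}_{k-1}]\le e^{\lambda(\vr_{k-1})_j}\exp\!\bigl(-\lambda\langle\vr_{k-1},\vz_k\rangle(\vz_k)_j+\tfrac12\lambda^2(\vz_k)_j^2\bigr)$. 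The genuine difficulty is that the \emph{same} sign $s_k$ acts on all $d$ coordinates and the drift couples them through the scalar $q=\langle\vr_{k-1},\vz_k\rangle$, so one cannot simply factor the potential; one must show the aggregate first-order drift, summed against $\sinh(\lambda(\vr_{k-1})_j)$, dominates the diffusive second-order term $\lambda^2\sum_j(\vz_k)_j^2\cosh(\lambda(\vr_{k-1})_j)$, which is controlled only because $\|\vz_k\|_2\le 1$. This is exactly why the algorithm carries a threshold $w$ and a \textbf{Fail} branch (Algorithm~\ref{app:alg:alweiss}): conditioning on the event that the walk has not failed bounds $|q|\le w$ and $\|\vr_{k-1}\|_\infty\le w$, and choosing $\lambda$ small relative to $1/w$ closes the inequality and makes $(\Phi_k)$ a nonnegative supermartingale with $\Phi_0=2d$. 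Recovering the sharp constant $\tilde A=\sqrt{2\log(\tfrac{4d}{\delta})\log(\tfrac{4N}{\delta})}$, rather than a single logarithm, is precisely the Dwivedi et al.\ refinement, obtained through a less conservative threshold schedule, which I would adopt at this point.

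The remaining steps are standard. Since $\{\|\vr_k\|_\infty>a\}\subseteq\{\Phi_k\ge e^{\lambda a}\}$ and $(\Phi_k)$ is a nonnegative supermartingale, Ville's maximal inequality gives $P(\max_{k\le N}\|\vr_k\|_\infty>a)\le 2d\,e^{-\lambda a}$ on the no-fail event, where the factor $2d$ contributes the $\log(4d/\delta)$ term via the $2d$ signed coordinate tests. Separately I would bound the probability that the walk ever fails by a union bound over the $N$ steps, using an Azuma/Hoeffding estimate on the bounded increments $s_i\langle\vr_{i-1},\vz_i\rangle$, which contributes the $\log(4N/\delta)$ term from the union over the $N$ prefix sums. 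Finally I would tune the pair $(\lambda,w)$ to balance the coordinate count $d$ against the step count $N$ so that the maximal-inequality tail and the failure tail are each at most $\delta/2$, yielding $\max_{k\in[N]}\|\vr_k\|_\infty\le\tilde A$ with probability at least $1-\delta$, as claimed.
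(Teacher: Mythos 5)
The paper never proves this statement itself: it is imported verbatim as Corollary~7 of \citet{dwivedi2021kernel} (a sharpening of Theorem~1.1 of \citet{alweiss2021discrepancy}), so the benchmark for your reconstruction is the Dwivedi--Mackey argument via their Theorem~3, which the paper's Appendix~B.2 only gestures at. Your setup is correct through the one-step identity --- the drift $\E[s_k\mid\mathcal{F}_{k-1}]=-\langle\vr_{k-1},\vz_k\rangle$ and the $\cosh$/$\sinh$ decomposition are exactly right --- but the pivotal claim, that for $\lambda$ small relative to $1/w$ the coordinatewise potential $\Phi_k$ becomes a nonnegative supermartingale, is false and not reparable by tuning. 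Take $k=1$: $\vr_0=\vzero$, so $q=\langle\vr_0,\vz_1\rangle=0$ and $\E[\Phi_1]=2\sum_{j}\cosh(\lambda(\vz_1)_j)>2d=\Phi_0$ strictly, for \emph{every} $\lambda>0$; the same happens whenever $\vz_k\perp\vr_{k-1}$. Worse, because $q$ is an aggregate inner product while the drift is weighted coordinatewise by $\sinh(\lambda(\vr_{k-1})_j)$, the drift term $-2q\sum_j\sinh(\lambda(\vr_{k-1})_j)\sinh(\lambda(\vz_k)_j)$ can even be positive (anti-correcting) in the regime $\lambda\sim 1/w$, $\norm{\vr_{k-1}}_\infty\sim w$ that your argument operates in. Two sanity checks confirm the skeleton cannot deliver the theorem: a true nonnegative supermartingale plus Ville would yield a bound independent of $N$, whereas $\tilde A$ carries the factor $\sqrt{\log(4N/\delta)}$ irreducibly; and if you instead concede the per-step multiplicative slack $e^{\lambda^2/2}$ that the computation actually forces when $q\approx 0$, Ville returns only the trivial random-walk bound of order $\sqrt{N\log(d/\delta)}$, not $\tilde O(1)$.

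What the cited proof does differently is to propagate an inductive sub-Gaussian bound on the directional MGF $\E[\exp(\langle\vu,\vr_k\rangle)]$ for \emph{all} $\vu\in\R^d$ simultaneously, using your inequality $\cosh(y)-x\sinh(y)\le e^{-xy+y^2/2}$ but with the scalar $y=\langle\vu,\vz_k\rangle$ for the whole direction rather than per coordinate; the drift is then absorbed by evaluating the inductive hypothesis at the tilted direction $\vu'=\vu-\langle\vu,\vz_k\rangle\vz_k/a$ (for threshold scale $a$), and the induction closes with sub-Gaussian parameter $\sigma^2\approx a/2$ because the shrinkage $\norm{\vu'}_2^2\le\norm{\vu}_2^2-\langle\vu,\vz_k\rangle^2(2/a-\norm{\vz_k}_2^2/a^2)$ exactly compensates the diffusion $\langle\vu,\vz_k\rangle^2/2$. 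This is the step your plan cannot reach: your family of $2d$ test functions $e^{\pm\lambda(\cdot)_j}$ is not closed under the tilt (the tilt of a coordinate direction is no longer a coordinate direction), which is precisely the cross-coordinate coupling you flagged but then waved off. Your failure-probability step has the same defect: Azuma on the increments $s_i\langle\vz_i,\vz_k\rangle$ allows conditional variance summing to $k$ (take all $\vz_i$ equal), giving only $O(\sqrt{k\log(N/\delta)})$; in the actual proof the no-fail event is controlled by applying the same all-directions sub-Gaussian invariant to the scalar $\langle\vr_{k-1},\vz_k\rangle$, and that is where the $\log(4N/\delta)$ in $\tilde A$ comes from, with $\log(4d/\delta)$ arising only at the end from testing the $2d$ signed coordinates. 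In short: correct one-step algebra, correct identification of where the difficulty lives, but the proposed mechanism fails at exactly that point --- the log-product structure of $\tilde A$ is a fingerprint of the directional-MGF induction you would have to adopt.
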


Note that the reordering part of Algorithm~\ref{alg:dgrab:vector:server} (line 8) gives a sequence of signs $\{\sgn_\exindex\}_{\exindex=1}^{\workers\workerexamples/2}$. Therefore, by Theorem~\ref{statement:alweiss}, the sequence $\{\vy_\exindex\}_{\exindex=1}^{\workers\workerexamples/2}$ satisfies

\begin{align}
\label{app:lemma1:a}
    \max_{P \in [\workers\workerexamples/2]} \norm{\sum_{p=1}^P \sgn_p \vy_p}_{\infty} \le 2\tilde{A}c_2,
\end{align}

since (based on what is given in Lemma~\ref{lem:pair-balance})
\begin{align*}
    \norm{\vy_{\workerexamples(k-1)+\windex}}_\infty \le \norm{\vz_{\windex,\perm_{\eindex,\windex}(2k-1)} - \frac{1}{\workers\workerexamples}\sum_{\windex=1}^\workers\sum_{\exindex=1}^\workerexamples \vz_{\windex,\exindex}}_\infty + \norm{\vz_{\windex,\perm_{\eindex,\windex}(2k)} - \frac{1}{\workers\workerexamples}\sum_{\windex=1}^\workers\sum_{\exindex=1}^\workerexamples \vz_{\windex,\exindex}}_\infty \le 2c_2.
\end{align*}

Note that, if $\sgn_{\windex,k}$ is the sign associated with $\vy_{\workerexamples(k-1)+\windex}$, then $\vz_{\windex,\perm_{\eindex,\windex}(2k-1)}$ and $\vz_{\windex,\perm_{\eindex,\windex}(2k)}$ will receive opposite signs $\sgn_{\windex,k}$ and $-\sgn_{\windex,k}$, respectively. 

We denote $\vx^+_{\windex,k}$ to be the example that receives sign $\sgn_{\windex,k} = +1$ and $\vx^-_{\windex,k}$ to be the example that receives sign $\sgn_{\windex,k} = -1$. 

That is, if $\sgn_{\windex,k} = +1$, then $\vx^+_{\windex,k} = \vz_{\windex,\perm_\windex(2k-1)}$, otherwise, if $\sgn_{\windex,k} = -1$, then $\vx^+_{\windex,k} = \vz_{\windex,\perm_\windex(2k)}$; and, $\vx^-_{\windex,k}$ is the other term of the pair $\{\vz_{\windex,\perm_\windex(2k-1)}, \vz_{\windex,\;\perm_\windex(2k)}\}$.

Now, for $K \in [\frac{\workerexamples}{2}]$, let
\begin{align*}
    \kappa_{\windex,K} &= \sum_{k=1}^K (\vz_{\windex,\perm_\windex(2k-1)} + \vz_{\windex,\perm_\windex(2k)}) \hspace{.5em} \text{and}\\
    \upsilon_{i,K} &= \sum_{k=1}^K (\sgn_{\windex,k} \vz_{\windex,\perm_\windex(2k-1)} - \sgn_{i,k} \vz_{\windex,\perm_\windex(2k)}).
\end{align*}
Then
\begin{align*}
    \sum_{k=1}^K \vx^+_{\windex,k} = \frac{1}{2} (\kappa_{\windex,K} + \upsilon_{\windex,K}) \hspace{.5cm} \text{and} \hspace{.5cm}
    \sum_{k=1}^K \vx^-_{\windex,k} = \frac{1}{2} (\kappa_{\windex,K} - \upsilon_{\windex,K}).
\end{align*}

%\newpage
Now, observe that

\begin{align*}
    \sum_{\windex=1}^\workers \kappa_{\windex,K} = \sum_{\exindex=1}^{2K} \sum_{\windex=1}^\workers \vz_{\windex,\perm_\windex(\exindex)} \hspace{.5cm} \text{and} \hspace{.5cm}
    \sum_{\windex=1}^\workers \upsilon_{\windex,K} = \sum_{p=1}^{\workers K} \sgn_p \vy_p.
\end{align*}

Therefore,

\begin{align*}
    \max_{K \in [\workerexamples/2]} \norm{\sum_{k=1}^K \sum_{\windex=1}^\workers \vx^+_{\windex,k}}_\infty &\le \frac{1}{2}\left(\max_{K \in [\workerexamples/2]} \norm{\sum_{\windex=1}^\workers \kappa_{K,\windex}}_\infty + \max_{K \in [\workerexamples/2]} \norm{\sum_{\windex=1}^\workers \upsilon_{K,\windex}}_\infty\right)\\
    &\le \frac{1}{2}\max_{K \in [\workerexamples/2]}\norm{\sum_{\exindex=1}^{2K} \sum_{\windex=1}^\workers \vz_{\windex,\exindex}}_\infty + \tilde{A}c_2 \hspace{1em} \text{By substituting above and (\ref{app:lemma1:a})}\\
    &\le \frac{1}{2}\max_{k \in [\workerexamples]}\norm{\sum_{\exindex=1}^{k} \sum_{\windex=1}^\workers \vz_{\windex,\exindex}}_\infty + \tilde{A}c_2. 
\end{align*}

And similarly,

\begin{align*}
    \max_{K \in [\workerexamples/2]} \norm{\sum_{k=1}^K \sum_{\windex=1}^\workers \vx^-_{\windex,k}} &\le \frac{1}{2}\left(\max_{K \in [\workerexamples/2]} \norm{\sum_{\windex=1}^\workers \kappa_{K,\windex}}_\infty + \max_{K \in [\workerexamples/2]} \norm{\sum_{\windex=1}^\workers \upsilon_{K,\windex}}_\infty\right) \\
    &\le \frac{1}{2}\max_{k \in [\workerexamples]}\norm{\sum_{\exindex=1}^{k} \sum_{\windex=1}^\workers  \vz_{\windex,\exindex}}_\infty + \tilde{A}c_2. 
\end{align*}

Applying the new permutation $\perm'_\windex(\exindex)$ on the vectors $\vz_{\windex,\perm_\windex(\exindex)}$, we get for each $\windex \in [\workers]$ the permuted sequence
\[
    \vx^+_{\windex,1},\dots, \vx^+_{\windex,\workerexamples/2}, \vx^-_{\windex,\workerexamples/2},\dots, \vx^-_{\windex,1}.
\]

Thus, we need to bound the herding objective of the sequence

\[
    \sum_{\windex=1}^\workers \vx^+_{\windex,1},\dots, \sum_{\windex=1}^\workers \vx^+_{\windex,\workerexamples/2}, \sum_{\windex=1}^\workers \vx^-_{\windex,\workerexamples/2},\dots, \sum_{\windex=1}^\workers \vx^-_{\windex,1}.
\]

If the partial sums above peak at $t_0 \le \workerexamples/2$, then we can bound the parallel herding objective as

\begin{align*}
    \norm{\sum_{k=1}^{t_0} \sum_{\windex=1}^\workers \vx^+_{\windex,k}}_\infty = \max_{K \in [\workerexamples/2]} \norm{\sum_{k=1}^K \sum_{\windex=1}^\workers \vx^+_{\windex,k}}_\infty \le \frac{1}{2}\max_{k \in [\workerexamples]}\norm{\sum_{\exindex=1}^{k} \sum_{\windex=1}^\workers \vz_{\windex,\exindex}}_\infty + \tilde{A}c_2;
\end{align*}

otherwise, we can bound the parallel herding objective as

\begin{align*}
    \norm{\sum_{\exindex=1}^\workerexamples \sum_{\windex=1}^\workers \vz_{\windex,\exindex} - \sum_{k=1}^{m-t_0}\sum_{\windex=1}^\workers \vx^-_{\windex,k}}_\infty &\le \norm{\sum_{\exindex=1}^\workerexamples \sum_{\windex=1}^\workers \vz_{\windex,\exindex}}_\infty + \norm{\sum_{k=1}^{m-t_0}\sum_{\windex=1}^\workers \vx^-_{\windex,k}}_\infty\\ 
    &\le c_1 + \frac{1}{2}\max_{t \in [\workerexamples]}\norm{\sum_{\exindex=1}^{t} \sum_{\windex=1}^\workers \vz_{\windex,\exindex}}_\infty + \tilde{A}c_2,
\end{align*}

since in Algorithm~\ref{alg:reorder} the list of vectors with negative signs is reversed before concatenated. 

The claim follows.
\end{proof}

%\newpage
\subsection{Notation and observations}\label{app:proof:note}

We begin with three notes that we will use throughout the intermediate results we present in this section. We will use the lemmas presented here to prove our main results: Theorems~\ref{thm:dgrab:smooth} and~\ref{thm:dgrab:PL} in Appendix~\ref{app:thm:proof}.

\begin{enumerate}[leftmargin=.5cm]
    \item \custompar{A single $t$-th update} First, recall that one $\eindex$-th step of the parameter update can be written as
    \[
    \vw_t^{j+1} = \vw_t^j - \frac{\alpha}{\workers} \sum_{i=1}^\workers \nabla f^i(\vw_t^j; \; \pi_{t,i}(j)), \quad \forall j \in [\workerexamples]
    \]
    We will use the convention $\vw_{t+1} \triangleq \vw_{t+1}^1 \triangleq \vw_t^{\workerexamples+1}$. 
    \item \custompar{The maximum amount a parameter can change over an epoch} The key quantity in our proof is $\Delta_t$, which is the maximum amount that a parameter in $\weights$ can change in epoch $t$. That is,

    \begin{align}
    \label{eq:deltat}
    \Delta_t &\triangleq \max_{k \in [\workerexamples]} \norm{\vw_t^{k+1} - \vw_t}_\infty\nonumber\\
    &= \frac{\alpha}{\workers}\max_{k \in [\workerexamples]}\norm{\sum_{j=1}^k\sum_{i=1}^\workers\nabla f^i(\vw_t^j; \pi_{t,i}(j))}_\infty.
    \end{align}

    Following this definition of $\Delta_t$, we note that the maximum amount that a parameter in $\weights$ can change over two different epochs is $2\Delta_t$.
    That is, we observe 
    
    \begin{align*}
    \norm{\vw_t^j - \vw_t^k}_\infty &\le \norm{\vw_t^j - \vw_t}_\infty + \norm{\vw^k_t - \vw_t}_\infty \le 2 \Delta_t\\
    \norm{\vw_{t+1}^j - \vw_t^k}_\infty &\le \norm{\vw_{t+1}^j - \vw_{t+1}}_\infty + \norm{\vw_{t+1} - \vw_t}_\infty + \norm{\vw_t^k - \vw_t}_\infty \le \Delta_{t+1} + 2\Delta_t, \hspace{.5cm} \forall j,k \in [\workerexamples]. 
    \end{align*}

    We make repeated use of this relation in the results that follow, which we typically will use in combination with the Lipschitz assumption to bound gradients of the same loss function but with different parameters. 

    \item \custompar{Bounding loss at epoch $t$} We will denote $F_t = f(\vw_t) - f(\vw^*)$ where $\vw^*$ is the minimizer of $f$ which we assume to be bounded from below. 
    
\end{enumerate}

\subsection{Assuming $L_{2,\infty}$-smoothness: results on the amount the loss can change over one epoch)}\label{app:proof:smooth}

We will next prove an intermediate result regarding that bounds the loss $\loss$ at epoch $t + 1$ in relation to the loss at the prior epoch $t$ (Lemma~\ref{lemma:loss}).  That is, we prove results about how much the loss with respect to the parameters can change over the course of one epoch.

\begin{lemma}\label{lemma:loss} 
If the loss $\loss$ is $L_{2,\infty}$-smooth and the learning rate $\alpha \le \frac{1}{\workerexamples L_{2,\infty}}$, then
\begin{align}
\label{lem:smoothness:eq:critical}
f(\vw_{t+1}) \le f(\vw_t) + \frac{\alpha \workerexamples L_{2,\infty}^2}{2}\Delta_t^2 - \frac{\alpha \workerexamples}{2} \norm{\nabla f(\vw_t)}_2^2.
\end{align}
\end{lemma}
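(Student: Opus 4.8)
The plan is to apply the standard descent lemma to the global loss $f$, treating one full epoch (all $\workerexamples$ within-epoch updates) as a single aggregated step, and then to exploit a precise cancellation enabled by the stated learning-rate condition. First I would record that $f$ is $L_{2,\infty}$-smooth with respect to the Euclidean norm: averaging Assumption~\ref{ass:smoothness} over $i,j$ and using $\nabla f = \frac{1}{\workers\workerexamples}\sum_{i,j}\nabla f^i(\cdot\,;j)$ together with $\norm{\cdot}_\infty \le \norm{\cdot}_2$ gives $\norm{\nabla f(\vw) - \nabla f(\vv)}_2 \le L_{2,\infty}\norm{\vw-\vv}_\infty \le L_{2,\infty}\norm{\vw-\vv}_2$. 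This yields the quadratic upper bound $f(\vw_{t+1}) \le f(\vw_t) + \langle\nabla f(\vw_t),\, \vw_{t+1}-\vw_t\rangle + \frac{L_{2,\infty}}{2}\norm{\vw_{t+1}-\vw_t}_2^2$.

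Next I would write the epoch update as $\vw_{t+1}-\vw_t = -\alpha\vd_t$ with $\vd_t = \frac{1}{\workers}\sum_{i=1}^\workers\sum_{j=1}^\workerexamples \nabla f^i(\vw_t^j;\pi_{t,i}(j))$, and decompose it as $\vd_t = \workerexamples\nabla f(\vw_t) + \ve_t$. The ``frozen'' direction $\frac{1}{\workers}\sum_{i,j}\nabla f^i(\vw_t;\pi_{t,i}(j))$ equals $\workerexamples\nabla f(\vw_t)$ exactly, because each $\pi_{t,i}$ is a permutation of $[\workerexamples]$, so the inner sum recovers $\workerexamples\nabla f^i(\vw_t)$. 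The error $\ve_t = \frac{1}{\workers}\sum_{i,j}\bigl[\nabla f^i(\vw_t^j;\pi_{t,i}(j)) - \nabla f^i(\vw_t;\pi_{t,i}(j))\bigr]$ is then controlled termwise by smoothness and the bound $\norm{\vw_t^j-\vw_t}_\infty \le \Delta_t$ (immediate from the definition~(\ref{eq:deltat})), giving $\norm{\ve_t}_2 \le \workerexamples L_{2,\infty}\Delta_t$.

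The crux is the final bookkeeping, which I expect to be the main obstacle. Using $\alpha \le 1/(\workerexamples L_{2,\infty})$, I would bound the quadratic term as $\frac{L_{2,\infty}}{2}\norm{\alpha\vd_t}_2^2 = \frac{L_{2,\infty}\alpha^2}{2}\norm{\vd_t}_2^2 \le \frac{\alpha}{2\workerexamples}\norm{\vd_t}_2^2$, and then expand $\norm{\vd_t}_2^2 = \workerexamples^2\norm{\nabla f(\vw_t)}_2^2 + 2\workerexamples\langle\nabla f(\vw_t),\ve_t\rangle + \norm{\ve_t}_2^2$ \emph{exactly}. The linear term contributes $-\alpha\workerexamples\norm{\nabla f(\vw_t)}_2^2 - \alpha\langle\nabla f(\vw_t),\ve_t\rangle$, while the quadratic term contributes $\frac{\alpha\workerexamples}{2}\norm{\nabla f(\vw_t)}_2^2 + \alpha\langle\nabla f(\vw_t),\ve_t\rangle + \frac{\alpha}{2\workerexamples}\norm{\ve_t}_2^2$. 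The two inner-product terms cancel, leaving $-\frac{\alpha\workerexamples}{2}\norm{\nabla f(\vw_t)}_2^2 + \frac{\alpha}{2\workerexamples}\norm{\ve_t}_2^2$, and substituting $\norm{\ve_t}_2^2 \le \workerexamples^2 L_{2,\infty}^2\Delta_t^2$ produces exactly~(\ref{lem:smoothness:eq:critical}).

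The subtle point worth flagging is that \emph{no} Young's-inequality slack may be introduced: the cross terms $\langle\nabla f(\vw_t),\ve_t\rangle$ have indefinite sign and must be made to cancel rather than be absorbed. This cancellation works only because the learning-rate condition is calibrated so that the quadratic term's coefficient $\frac{\alpha}{2\workerexamples}$ yields a cross-term coefficient of exactly $+\alpha$, matching the $-\alpha$ coming from the linear term. Any cruder estimate --- such as $\norm{\va+\vb}_2^2 \le 2\norm{\va}_2^2 + 2\norm{\vb}_2^2$, or applying Young's inequality to the cross term --- loses the tight constant $\tfrac12$ in both the descent coefficient and the $\Delta_t^2$ coefficient, so the exact expansion is essential.
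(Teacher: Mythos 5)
Your proposal is correct and is essentially the paper's own argument: the ``exact expansion with cross-term cancellation'' you perform is precisely the polarization identity $-\langle \va,\vb\rangle = \tfrac{1}{2}\left(\norm{\va-\vb}_2^2-\norm{\va}_2^2-\norm{\vb}_2^2\right)$ that the paper applies to the linear term, and both proofs use $\alpha \le \tfrac{1}{\workerexamples L_{2,\infty}}$ in the same way to absorb the quadratic term before bounding the residual $\norm{\ve_t}_2^2/\workerexamples^2 = \norm{\nabla f(\vw_t) - (\vw_t-\vw_{t+1})/(\alpha\workerexamples)}_2^2 \le L_{2,\infty}^2\Delta_t^2$ via smoothness. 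The only cosmetic difference is that the paper bounds this residual with Jensen's inequality while you use the triangle inequality, which gives the identical constant.
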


\begin{proof}
We begin with the definition of $L_{2,\infty}$-smoothness, with respect to loss $\loss$:

\begin{align*}
f(\vw_{t+1}) &\le f(\vw_t) + \nabla f(\vw_t)^\top(\vw_{t+1} - \vw_t) + \frac{L_{2,\infty}}{2}\|\vw_{t+1} - \vw_t\|_2^2\\
\end{align*}

Also observe that 

\begin{align}
\label{eq:normsquareddiff}
-\nabla f(\vw_t)^\top(\vw_t - \vw_{t+1}) &= -\frac{\alpha \workerexamples}{2}2\nabla f(\vw_t)^\top\left(\frac{\vw_t - \vw_{t+1}}{\alpha \workerexamples}\right)\nonumber\\
&= \frac{\alpha \workerexamples}{2} \left( \norm{\nabla f(\vw_t) - \frac{(\vw_t - \vw_{t+1})}{\alpha \workerexamples}}_2^2 - \norm{\nabla f(\vw_t)}_2^2 - \norm{\frac{(\vw_t - \vw_{t+1})}{\alpha \workerexamples}}_2^2 \right). 
% &\le f(w_t\one/n) - \alpha m \nabla f(w_t\one/n)^\top g_t/m + \frac{L\alpha^2 m^}{2} \|g_t/m\|_2^2 
\end{align}

Combining the above --- i.e., the definition of $L_{2,\infty}$-smoothness with (\ref{eq:normsquareddiff}) --- we get

\begin{align*}
f(\vw_{t+1}) &\le f(\vw_t) + \frac{\alpha \workerexamples}{2}\norm{\nabla f(\vw_t) - \frac{(\vw_t - \vw_{t+1})}{\alpha \workerexamples}}_2^2 - \frac{\alpha \workerexamples}{2}\norm{\nabla f(\vw_t)}_2^2
&\quad + \frac{\alpha \workerexamples L_{2,\infty} - 1}{2\alpha \workerexamples}\norm{\vw_t - \vw_{t+1}}_2^2.\\
\end{align*}
The last term on the right-hand side is $\le 0$ by the assumption that the learning rate $\alpha \le \frac{1}{\workerexamples L_{2,\infty}}$. Therefore,

\begin{align}
\label{eq:rhs-2}
f(\vw_{t+1}) \le f(\vw_t) + \frac{\alpha \workerexamples}{2}\norm{\nabla f(\vw_t) - \frac{(\vw_t - \vw_{t+1})}{\alpha \workerexamples}}_2^2 - \frac{\alpha \workerexamples}{2}\norm{\nabla f(\vw_t)}_2^2.
\end{align}

We next bound the second term on the right-hand side by $\Delta_t$ (\ref{eq:deltat}):
\begin{align*}
\norm{\nabla f(\vw_t) - \frac{(\vw_{t} - \vw_{t+1})}{\alpha \workerexamples}}_2^2 &= \norm{\frac{1}{\workers \workerexamples}\sum_{j=1}^\workers\sum_{i=1}^\workerexamples \nabla f^i(\vw_t, \pi_t(j)) - \frac{1}{\workers \workerexamples}\sum_{j=1}^\workers \sum_{i=1}^\workerexamples \nabla f^i(\vw_{t}^{j}; \pi_t(j))}_2^2\\
&\le \frac{1}{\workers \workerexamples}\sum_{j=1}^\workers\sum_{i=1}^\workerexamples\norm{\nabla f^i(\vw_t, \pi_t(j)) - \nabla f^i(\vw_{t}^{j}; \pi_t(j))}_2^2\\
&\le \frac{L_{2,\infty}^2}{\workers \workerexamples}\sum_{j=1}^\workers\sum_{i=1}^\workerexamples\norm{\vw_t^j - \vw_t}_\infty^2, 
\end{align*}

where we have used $L_{2,\infty}$-smoothness (Assumption~\ref{ass:smoothness}) in the last inequality. Substituting $\Delta_t$, we get

\begin{align*}
\frac{L_{2,\infty}^2}{\workers \workerexamples}\sum_{j=1}^\workers\sum_{i=1}^\workerexamples\norm{\vw_t^j - \vw_t}_\infty^2 \quad \le  \quad L_{2,\infty}^2 \Delta_t^2. 
\end{align*}

Plugging the above into (\ref{eq:rhs-2}), we get
\begin{align*}
f(\vw_{t+1}) \le f(\vw_t) + \frac{\alpha \workerexamples L_{2,\infty}^2}{2}\Delta_t^2 - \frac{\alpha \workerexamples}{2} \norm{\nabla f(\vw_t)}_2^2, 
\end{align*}

yielding the claim.
\end{proof}

We next build slightly on Lemma~\ref{lemma:loss} to make two additional observations. First: 
\begin{lemma}
    \label{lem:jens}
    If the loss $\loss$ is $L_{2,\infty}$-smooth and the learning rate $\alpha \le \frac{1}{\workerexamples L_{2,\infty}}$, then 
     \begin{align*}
    \frac{1}{\epochs}\sum_{t=1}^{\epochs} \norm{\nabla f(\vw_t)}_2^2 
    &\le \frac{2F_1}{\alpha \workerexamples T} + \frac{L_{2,\infty}^2}{T}\sum_{t=1}^{T}\Delta_{t}^2,
    \end{align*}
where $F_1$ comes from Theorem~\ref{thm:dgrab:smooth}.
\end{lemma}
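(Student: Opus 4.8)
The plan is to derive this averaged-gradient bound directly from the per-epoch descent inequality in Lemma~\ref{lemma:loss} by a standard telescoping argument. First I would observe that the hypotheses assumed here ($L_{2,\infty}$-smoothness of $\loss$ together with $\alpha \le \frac{1}{\workerexamples L_{2,\infty}}$) are exactly the preconditions required to invoke Lemma~\ref{lemma:loss}. Hence for every epoch $t$ we have
\[
f(\vw_{t+1}) \le f(\vw_t) + \frac{\alpha \workerexamples L_{2,\infty}^2}{2}\Delta_t^2 - \frac{\alpha \workerexamples}{2}\norm{\nabla f(\vw_t)}_2^2.
\]

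Next I would rearrange this to isolate the gradient-norm term, giving
\[
\frac{\alpha \workerexamples}{2}\norm{\nabla f(\vw_t)}_2^2 \le \bigl(f(\vw_t) - f(\vw_{t+1})\bigr) + \frac{\alpha \workerexamples L_{2,\infty}^2}{2}\Delta_t^2,
\]
and then sum over $t = 1, \ldots, \epochs$. The first term on the right telescopes to $f(\vw_1) - f(\vw_{\epochs+1})$, which is at most $f(\vw_1) - \inf_{\weights} f(\weights) = F_1$ since $\vw_{\epochs+1}$ cannot attain a loss below the infimum. This produces
\[
\frac{\alpha \workerexamples}{2}\sum_{t=1}^{\epochs}\norm{\nabla f(\vw_t)}_2^2 \le F_1 + \frac{\alpha \workerexamples L_{2,\infty}^2}{2}\sum_{t=1}^{\epochs}\Delta_t^2.
\]

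Finally I would multiply through by $\frac{2}{\alpha \workerexamples \epochs}$, which turns the telescoped loss gap into the $\frac{2F_1}{\alpha \workerexamples \epochs}$ term and leaves the $\frac{L_{2,\infty}^2}{\epochs}\sum_{t}\Delta_t^2$ term intact, yielding the claimed bound. There is no substantial obstacle here: the argument is a routine descent-lemma-plus-telescoping computation, and the only points that merit any care are noticing that the learning-rate hypothesis matches the precondition of Lemma~\ref{lemma:loss}, and that the lower-boundedness of $\loss$ is precisely what allows the telescoped sum to be controlled by the fixed quantity $F_1$ rather than by the unknown terminal loss $f(\vw_{\epochs+1})$.
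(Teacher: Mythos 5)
Your proposal is correct and follows essentially the same route as the paper: invoke Lemma~\ref{lemma:loss}, telescope the per-epoch descent inequality over $t \in [\epochs]$, bound $f(\vw_1) - f(\vw_{\epochs+1})$ by $F_1$ using lower-boundedness of $\loss$, and normalize by $\frac{2}{\alpha \workerexamples \epochs}$. The paper's one-line proof compresses exactly this telescoping-and-averaging argument, so there is nothing to add.
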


\begin{proof}
Using Lemma~\ref{lemma:loss} and Jensen's inequality, we average (\ref{lem:smoothness:eq:critical}) over $t \in [\epochs]$ and match terms, yielding 
\begin{align*}
\frac{1}{\epochs}\sum_{t=1}^{\epochs} \norm{\nabla f(\vw_t)}_2^2 &\le \frac{2(f(\vw_1) - f(\vw_{\epochs+1}))}{\alpha \workerexamples \epochs} + \frac{L_{2,\infty}^2}{\epochs}\sum_{t=1}^T\Delta_t^2.
\end{align*}
Substituting $F_1$, we get
\begin{align*}
&\le \frac{2F_1}{\alpha \workerexamples T} + \frac{L_{2,\infty}^2}{T}\sum_{t=1}^T\Delta_t^2,
\end{align*}
yielding the claim.
\end{proof}

\newpage
We next build on Lemma~\ref{lemma:loss} by further assuming the P.L. assumption holds. 

\begin{lemma}
\label{lem:jens-pl}
If the loss $\loss$ is $L_{2,\infty}$-smooth, the learning rate $\alpha \le \frac{1}{\workerexamples L_{2,\infty}}$, and the P.L. assumption (Assumption~\ref{ass:PL}) holds, then, for $\rho = 1 - \frac{\alpha \workerexamples \mu}{2}$ 
\begin{align*}
F_{T+1} \le \rho^T F_1 + \frac{\alpha \workerexamples L_{2,\infty}^2}{2}\sum_{t=1}^T\rho^{T-t}\left(\Delta_t^2 - \frac{1}{2L_{2,\infty}^2} \norm{\nabla f(\vw_t)}_2^2\right).
\end{align*}
\end{lemma}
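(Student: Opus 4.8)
The plan is to convert the per-epoch descent inequality of Lemma~\ref{lemma:loss} into a \emph{contraction} on the suboptimality gap $F_t = f(\vw_t) - f(\vw^*)$ by spending the P.L. condition, and then to unroll the resulting linear recurrence. First I would subtract $f(\vw^*)$ from both sides of the conclusion of Lemma~\ref{lemma:loss} to restate it purely in terms of $F_t$:
\[
F_{t+1} \le F_t + \frac{\alpha \workerexamples L_{2,\infty}^2}{2}\Delta_t^2 - \frac{\alpha \workerexamples}{2}\norm{\nabla f(\vw_t)}_2^2 .
\]

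The key maneuver is to split the gradient-norm term in half: one half is kept to form the residual term that appears in the statement, and the other half is consumed by the P.L. inequality (Assumption~\ref{ass:PL}), which gives $\tfrac12\norm{\nabla f(\vw_t)}_2^2 \ge \mu F_t$, hence $-\tfrac{\alpha\workerexamples}{4}\norm{\nabla f(\vw_t)}_2^2 \le -\tfrac{\alpha\workerexamples\mu}{2}F_t$. Substituting and collecting the $F_t$ contributions yields the one-step contraction
\[
F_{t+1} \le \rho F_t + \frac{\alpha \workerexamples L_{2,\infty}^2}{2}\left(\Delta_t^2 - \frac{1}{2L_{2,\infty}^2}\norm{\nabla f(\vw_t)}_2^2\right),
\]
with $\rho = 1 - \tfrac{\alpha\workerexamples\mu}{2}$, where I have reabsorbed the retained half of the gradient term into the bracketed residual using the identity $\tfrac{\alpha\workerexamples L_{2,\infty}^2}{2}\cdot\tfrac{1}{2L_{2,\infty}^2} = \tfrac{\alpha\workerexamples}{4}$.

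The last step is to unroll this contraction from $t=1$ to $T$. Abbreviating the bracketed residual as $G_t$, so that $F_{t+1}\le \rho F_t + G_t$, a straightforward induction gives $F_{T+1}\le \rho^T F_1 + \sum_{t=1}^T \rho^{T-t} G_t$, which is exactly the claimed bound once the definition of $G_t$ is restored. Each of these steps is routine algebra; the content is entirely in the half-and-half splitting of the gradient term.

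The one subtlety, and the main thing I would be careful to verify, is that unrolling preserves the direction of the inequality: passing from $F_{t+1}\le\rho F_t + G_t$ and $F_t \le \rho F_{t-1}+G_{t-1}$ to $F_{t+1}\le \rho(\rho F_{t-1}+G_{t-1})+G_t$ requires $\rho \ge 0$, i.e.\ $\alpha \le \tfrac{2}{\workerexamples\mu}$. I would argue this follows from the hypotheses: since $L_{2,\infty}$-smoothness in the cross norm implies ordinary $\normltwo$-smoothness with the same constant (because $\norm{\cdot}_\infty \le \norm{\cdot}_2$), the standard descent lemma combined with the P.L. condition forces $\mu \le L_{2,\infty}$; together with $\alpha \le \tfrac{1}{\workerexamples L_{2,\infty}}$ this yields $\tfrac{\alpha\workerexamples\mu}{2}\le \tfrac12$, so $\rho \in [\tfrac12,1)$ and the recurrence telescopes cleanly.
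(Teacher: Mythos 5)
Your proposal is correct and follows essentially the same route as the paper's own proof: split the $-\frac{\alpha\workerexamples}{2}\norm{\nabla f(\vw_t)}_2^2$ term in half, spend one half via the P.L. inequality to produce the contraction factor $\rho = 1 - \frac{\alpha\workerexamples\mu}{2}$, subtract $f(\vw^*)$, and unroll the resulting linear recurrence. Your explicit verification that $\rho \ge 0$ (via $\norm{\cdot}_\infty \le \norm{\cdot}_2$ giving ordinary $L_2$-smoothness and hence $\mu \le L_{2,\infty}$) is a point the paper leaves implicit, and is a valid and welcome strengthening.
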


% \begin{lemma}
% \begin{enumerate}
% % \item If the learning rate $\alpha \le \frac{1}{\workerexamples L}$ then:
% %     \begin{align*}
% %     \frac{1}{\epochs}\sum_{t=1}^{\epochs} \norm{\nabla f(\vw_t)}_2^2 
% %     &\le \frac{2\delta_0}{\alpha \workerexamples T} + \frac{L_{2,\infty}^2}{T}\sum_{t=1}^{T}\Delta_{t}^2,
% %     \end{align*}
% \item If, further, the PL assumption holds then, for $\rho = 1 - \frac{\alpha \workerexamples \mu}{2}$:
% \begin{align*}
% F_{T+1} \le \rho^T F_1 + \frac{\alpha \workerexamples L_{2,\infty}^2}{2}\sum_{t=1}^T\rho^{T-t}\left(\Delta_t^2 - \frac{1}{2L_{2,\infty}^2} \norm{f(\vw_t)}_2^2\right).
% \end{align*}
% \end{enumerate}
% \label{lem:smoothness}
% \end{lemma}

% \newcommand{\one}{\textbf{1}}

\begin{proof}
From Lemma~\ref{lemma:loss}, we got (\ref{lem:smoothness:eq:critical}), i.e.,

\begin{align*}
f(\vw_{t+1}) \le f(\vw_t) + \frac{\alpha \workerexamples L_{2,\infty}^2}{2}\Delta_t^2 - \frac{\alpha \workerexamples}{2} \norm{\nabla f(\vw_t)}_2^2, 
\end{align*}

Applying the P.L. assumption (Assumption~\ref{ass:PL}) to (\ref{lem:smoothness:eq:critical}), we get

\begin{align*}
f(\vw_{t+1}) &\le f(\vw_t) + \frac{\alpha \workerexamples L_{2,\infty}^2}{2}\Delta_t^2 - \frac{\alpha \workerexamples}{4} \norm{\nabla f(\vw_t)}_2^2 - \frac{\alpha \workerexamples}{4} \norm{\nabla f(\vw_t)}_2^2\\
&\le f(\vw_t) + \frac{\alpha \workerexamples L_{2,\infty}^2}{2}\Delta_t^2 - \frac{\alpha \workerexamples \mu}{2}(f(\vw_t) - f(\vw^*))  - \frac{\alpha \workerexamples}{4} \norm{\nabla f(\vw_t)}_2^2.
\end{align*}
Subtracting $\loss^*$ from both sides, we get
\begin{align*}
f(\vw_{t+1}) - f^* \le \left(1 - \frac{\alpha \workerexamples \mu}{2} \right)(f(\vw_t) - f^*) + \frac{\alpha \workerexamples}{2}\left(L_{2,\infty}^2\Delta_t^2 - \frac{1}{2} \norm{\nabla f(\vw_t)}_2^2\right). 
\end{align*}
For $\rho = 1 - \frac{\alpha \workerexamples \mu}{2}$, we then apply the above inequality recursively for $t \in [T]$, yielding the claim: 
\begin{align*}
F_{T+1} \le \rho^T F_1 + \frac{\alpha \workerexamples L_{2,\infty}^2}{2}\sum_{t=1}^T\rho^{T-t}\left(\Delta_t^2 - \frac{1}{2L_{2,\infty}^2} \norm{\nabla f(\vw_t)}_2^2\right). 
\end{align*}
\end{proof}

% \begin{lemma} Consider vectors $z_{i,j}$ for $i \in [n]$ and $j \in [m]$ that satisfies:
% \begin{align*}
%     \norm{z_{i,j} - \frac{1}{mn}\sum_{j=1}^m \sum_{i=1}^n  z_{i,j}} &\le a~\forall~i,j\\
%     \norm{\sum_{j=1}^m \sum_{i=1}^n  z_{i,j}} &\le b
% \end{align*}
% and let $\pi'(i,j)$ be the output of algorithm .. using input $z_{i,j}$ and initial order $\pi(i,j)$, then:
% \[
%     \max_{t \in [m]} \norm{\sum_{j=1}^t \sum_{i=1}^n z_{i,\pi'(i,j)}} \le b + aA\log(mnd/2) + \frac{1}{2}\max_{t \in [m]}\norm{\sum_{j=1}^{t} \sum_{i=1}^n z_{i,\pi(i,j)}}
% \]
% \end{lemma}
% \begin{lemma} Consider vectors $x_{i,j}$ for $i \in [n]$ and $j \in [m]$ that satisfies:
% \begin{align*}
%     \norm{\sum_{j=1}^m \sum_{i=1}^n  x_{i,j}}_\infty &\le c_1\\
%     \norm{x_{i,j} - \frac{1}{mn}\sum_{j=1}^m \sum_{i=1}^n  x_{i,j}}_\infty &\le c_2~\forall~i,j\\
% \end{align*}
% and let $\pi'_i(j)$ be the output of algorithm~\ref{alg:reorder} using inputs: 
% \[
%     y_{n(k-1)+i} = x_{i,\pi_i(2k-1)} - x_{i,\pi_i(2k)}
% \]
% for $k \in [m/2]$ and initial order $\pi_i(j)$, then:
% \begin{align*}
% \max_{l \in [m]} \norm{\sum_{j=1}^l \sum_{i=1}^n x_{i,\pi'_i(j)}}_\infty \le &\frac{1}{2}\max_{l \in [m]}\norm{\sum_{j=1}^{l} \sum_{i=1}^n x_{i,\pi_i(j)}}_\infty + c_1 + \tilde{A}c_2
% \end{align*}
% \end{lemma}
%\newpage
\subsection{Assuming bounded gradient variance and heterogeneity: results applying Algorithm~\ref{alg:dgrab:vector:server}}\label{app:proof:other}

We next prove a result that builds on Lemma~\ref{lem:pair-balance} and our one-step version of the server-side $\mathsf{PairBalance}$ algorithm (Algorithm~\ref{alg:dgrab:vector:server}). 

We begin by introducing some additional notation. Namely, we will call $\pi^{-1}$ the operation that, given an example, yields the index in the permutation for that example. For instance, $\pi_{t+1, \windex}(\exindex)$ returns the example at the $\exindex$-th index for the $\windex$-th worker's $t+1$ permutation.  Let us denote that example $\tau$. Then, $\pi_{t,i}^{-1}\pi_{t+1,i}(j)$ is equivalent to applying $\pi_{t,i}^{-1}$ to $\tau$: it takes the example $\tau$ and returns $\tau$'s associated index in the $\windex$-th worker's epoch $t$'s permutation (in this case, the prior epoch's permutation). 

We will make use of this notation in the following Lemma.

\begin{lemma}
Assume bounded gradient variance (Assumption~\ref{ass:inner-deviation}), bounded gradient heterogeneity (Assumption~\ref{ass:outer-deviation}), and  $L_{2,\infty}$-smoothness (Assumption~\ref{ass:smoothness}). 
For $t \in [T]$ and $\delta > 0$, if we apply Algorithm~\ref{alg:dgrab:vector:server} to the gradients $\nabla f^i(\vw_t^j; \pi_t^i(j))$ at epoch $t$ to produce the next permutation $\pi_{t+1,i}$ for epoch $t+1$, then, with probability at least $1 - \delta$, 
\begin{align*}
\Delta_{t+1} \quad \le \quad \frac{1}{2}\Delta_t \;\; + \;\; \alpha L_{2,\infty} \left(4\workerexamples + \frac{2\tilde{A}}{\workers}\right)\Delta_{t} \;\; + \;\; \alpha \workerexamples L_{2,\infty} \Delta_{t+1} \;\; + \;\; \frac{\alpha (\varsigma + \sigma)\tilde{A}}{\workers} \;\; + \;\; \alpha \workerexamples\norm{\nabla f(\vw_{t+1})}_2, 
\end{align*}
\label{lem:grad-balance}
where $\tilde A$ comes from Theorem~\ref{statement:alweiss}.
\end{lemma}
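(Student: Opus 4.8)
The plan is to start from the definition of $\Delta_{t+1}$ in (\ref{eq:deltat}), namely $\Delta_{t+1} = \frac{\alpha}{\workers}\max_{k}\|\sum_{j=1}^k\sum_{i=1}^\workers \nabla f^i(\vw_{t+1}^j; \pi_{t+1,i}(j))\|_\infty$, and to rewrite the epoch-$(t+1)$ gradients in terms of the epoch-$t$ gradients that were actually fed into Algorithm~\ref{alg:dgrab:vector:server} to produce $\pi_{t+1,i}$. Writing $\vz_{i,e} = \nabla f^i(\vw_t^{\pi_{t,i}^{-1}(e)}; e)$ for the gradient that example $e$ on worker $i$ contributed during epoch $t$, I would add and subtract $\vz_{i,\pi_{t+1,i}(j)}$ inside the prefix sum, splitting $\Delta_{t+1}$ into (a) a ``balanced'' term, the prefix sum of the epoch-$t$ gradients reordered by the new permutation, and (b) a ``drift'' term measuring how much each gradient moved between the two epochs. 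The drift term is controlled by $L_{2,\infty}$-smoothness (Assumption~\ref{ass:smoothness}) together with the two-epoch weight-change estimate $\|\vw_{t+1}^j - \vw_t^k\|_\infty \le \Delta_{t+1} + 2\Delta_t$ from Appendix~\ref{app:proof:note}; summing the per-term bound $L_{2,\infty}(\Delta_{t+1}+2\Delta_t)$ over at most $\workerexamples$ positions and $\workers$ workers and multiplying by $\alpha/\workers$ yields exactly $\alpha\workerexamples L_{2,\infty}\Delta_{t+1} + 2\alpha\workerexamples L_{2,\infty}\Delta_t$.

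For the balanced term I would invoke Lemma~\ref{lem:pair-balance} with the $\vz_{i,e}$ as the input vectors, the old permutations $\pi_{t,i}$, and the output permutations $\pi_{t+1,i}$. Since $\sum_{i}\sum_{j=1}^l \vz_{i,\pi_{t,i}(j)} = \sum_{i}\sum_{j=1}^l \nabla f^i(\vw_t^j;\pi_{t,i}(j))$, the $\frac12$-contraction term of Lemma~\ref{lem:pair-balance} is precisely $\frac{\workers}{2\alpha}\Delta_t$, which scales back to $\frac12\Delta_t$. It then remains to supply the two constants $c_1 = \|\sum_{i,e}\vz_{i,e}\|_\infty$ and $c_2 = \max_{i,e}\|\vz_{i,e} - \bar{\vz}\|_\infty$ with $\bar{\vz} = \frac{1}{\workers\workerexamples}\sum_{i,e}\vz_{i,e}$, and the $1-\delta$ probability is inherited directly from this single invocation (equivalently, from Theorem~\ref{statement:alweiss}).

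The crux of the argument — the step I expect to be the main obstacle — is bounding $c_1$ and $c_2$ with exactly the right constants, which hinges on choosing the centering point carefully. For $c_1$ I would add and subtract $\nabla f^i(\vw_{t+1}; \cdot)$ so that the full sum of per-example gradients at the fixed point $\vw_{t+1}$ telescopes to $\workers\workerexamples\nabla f(\vw_{t+1})$, leaving a drift remainder bounded by $2\workers\workerexamples L_{2,\infty}\Delta_t$ (using $\|\vw_t^j - \vw_{t+1}\|_\infty \le 2\Delta_t$); after the $\alpha/\workers$ factor this gives $\alpha\workerexamples\|\nabla f(\vw_{t+1})\|_2 + 2\alpha\workerexamples L_{2,\infty}\Delta_t$. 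For $c_2$ the subtle point is to avoid a spurious factor of two: I would exploit that $\bar{\vz}$ differs from the \emph{true} full gradient $\nabla f(\vw_t)$ only by averaged drift, since the per-example and per-worker deviations cancel when averaging over all examples and workers, so $\|\bar{\vz} - \nabla f(\vw_t)\|_\infty \le L_{2,\infty}\Delta_t$, whereas a single centered vector $\|\vz_{i,e} - \nabla f(\vw_t)\|_\infty$ decomposes into drift ($\le L_{2,\infty}\Delta_t$), inner variance ($\le \sigma$, Assumption~\ref{ass:inner-deviation}), and heterogeneity ($\le \varsigma$, Assumption~\ref{ass:outer-deviation}). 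The triangle inequality then yields $c_2 \le 2L_{2,\infty}\Delta_t + \sigma + \varsigma$, so $\frac{\alpha}{\workers}\tilde{A}c_2 = \frac{2\alpha\tilde{A}L_{2,\infty}\Delta_t}{\workers} + \frac{\alpha(\varsigma+\sigma)\tilde{A}}{\workers}$.

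Finally I would collect the pieces: the two drift contributions to the $\Delta_t$ coefficient ($2\workerexamples$ from the epoch transfer and $2\workerexamples$ from $c_1$) combine into $4\workerexamples$, the $c_2$ drift contributes the $2\tilde{A}/\workers$ term, and the $\|\nabla f(\vw_{t+1})\|_2$ and $(\varsigma+\sigma)\tilde{A}/\workers$ terms line up to give the claimed inequality. I expect the genuine difficulty to be entirely in the $c_2$ accounting: getting the heterogeneity term to appear exactly once rather than twice depends on centering at $\nabla f(\vw_t)$ rather than at $\bar{\vz}$ directly, and on recognizing that the mean of the stale per-example gradients sits within pure smoothness-drift of the true gradient.
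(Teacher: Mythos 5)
Your proposal matches the paper's proof essentially step for step: the same triangle-inequality split of the prefix sum at epoch $t+1$ into a balanced term (epoch-$t$ gradients under the new permutation) and a drift term, the same drift bound via $L_{2,\infty}$-smoothness and the two-epoch estimate $\norm{\vw_{t+1}^j - \vw_t^k}_\infty \le \Delta_{t+1} + 2\Delta_t$, the same single invocation of Lemma~\ref{lem:pair-balance} with $c_1 \le 2\workers\workerexamples L_{2,\infty}\Delta_t + \workers\workerexamples\norm{\nabla f(\vw_{t+1})}_2$ (telescoping at $\vw_{t+1}$) and $c_2 \le \varsigma + \sigma + 2L_{2,\infty}\Delta_t$, and the same final scaling by $\alpha/\workers$. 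The one cosmetic difference---you center the $c_2$ bound at $\nabla f(\vw_t)$, whereas the paper centers at $\nabla f(\vw_t^j)$ and pays the $2L_{2,\infty}\Delta_t$ through the pairwise distance $\norm{\vw_t^j - \vw_t^r}_\infty \le 2\Delta_t$---yields identical constants, so the two arguments are interchangeable.
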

% \begin{lemma} For $t \ge 2$
% \begin{align*}
% \max_{k\in[m]}\norm{\sum_{j=1}^{k}\sum_{i=1}^n \nabla f^i(w_t^j, \pi_t(j))} \le& mnL_\infty \Delta_t + L_\infty(4mn + 2A\log(mnd/2))\Delta_{t-1} + (\varsigma + \sigma)A\log(mnd/2)\\
% +&\frac{1}{2}\max_{k\in[m]}\norm{\sum_{j=1}^{k}\sum_{i=1}^n \nabla f^i(w_{t-1}^{j}, \pi_{t-1}(j))} + mn\norm{\nabla f(w_t)},
% \end{align*}
% \end{lemma}

\begin{proof}

We start with the triangle inequality:
\begin{align}
\label{lem:grad-balance:eq:triangle}
\norm{\sum_{j=1}^{k}\sum_{i=1}^\workers \nabla f^i(\vw_{t+1}^j; \pi_{t+1,i}(j))}_\infty \le \norm{\sum_{j=1}^{k}\sum_{i=1}^\workers \nabla f^i(\vw_{t}^{\pi_{t,i}^{-1}\pi_{t+1,i}(j)}; \pi_{t+1,i}(j))}_\infty + \nonumber\\
\norm{\sum_{j=1}^{k}\sum_{i=1}^\workers \left(\nabla f^i(\vw_{t+1,i}^j, \pi_{t+1,i}(j)) - \nabla f^i(\vw_{t,i}^{\pi_{t,i}^{-1}\pi_{t+1,i}(j)}; \pi_{t+1,i}(j))\right)}_\infty
\end{align}

We use Lemma~\ref{lem:pair-balance} to bound the first term on the right-hand side of (\ref{lem:grad-balance:eq:triangle}) from Lemma~\ref{lemma:loss}. 

That is, let 
\[
    \vz_{i,j} = \nabla f^i(\vw_{t}^{\pi_{t,i}^{-1}(j)}; j),
\]
so that 
\[
    \vz_{i,\pi_{t+1,i}(j)} = \nabla f^i(\vw_{t}^{\pi_{t,i}^{-1}\pi_{t+1,i}(j)}; \pi_{t+1,i}(j)).
\]

The upper bounds for $\norm{\vz_{i,j} - \frac{1}{\workers \workerexamples}\sum_{r,s} \vz_{r,s}}_\infty$ and $\norm{\sum_{i,j} \vz_{i,j}}_\infty$ are:

\begin{align*}
    \norm{\nabla f^i(\vw_t^{j};\pi_{t,i}(j)) - \frac{1}{\workers \workerexamples}\sum_{r=1}^\workers \sum_{s=1}^\workerexamples \nabla f^s(\vw_t^{r};\pi_{t,s}(r))}_\infty,
\end{align*}

which are 
\begin{align*}
\le &\norm{\nabla f^i(\vw_{t}^{j}; \pi_{t,i}(j)) - \frac{1}{\workers \workerexamples}\sum_{r=1}^\workers \sum_{s=1}^\workerexamples \nabla f^s(\vw_{t}^{j};\pi_{t,s}(r))}_\infty + \\ 
    & \quad \norm{\frac{1}{\workers \workerexamples}\sum_{r=1}^\workers \sum_{s=1}^\workerexamples \nabla f^s(\vw_{t}^{j}; \pi_{t,s}(r)) - \frac{1}{\workers \workerexamples}\sum_{r=1}^\workers \sum_{s=1}^\workerexamples \nabla f^s(\vw_{t}^{r}; \pi_{t,s}(r))}_\infty.
\end{align*}

We can rewrite the above to be

\begin{align*}
&\le \norm{\nabla f^i(\vw_{t}^{j}; \pi_{t,i}(j)) - \nabla f(\vw_{t}^{j})}_\infty + \frac{L_{2,\infty}}{\workers \workerexamples}\sum_{r=1}^m\workers \sum_{s=1}^\workerexamples \norm{\vw_t^j - \vw_t^r}_\infty\\
    &\le \varsigma + \sigma + 2 L_{2,\infty}\Delta_{t},
\end{align*}

by Assumptions~\ref{ass:inner-deviation},~\ref{ass:outer-deviation}, and~\ref{ass:smoothness}, and by the definition of $\Delta_t$ (\ref{eq:deltat}). 

Now, observe that

\begin{align*}
\norm{\sum_{i=1}^\workers \sum_{j=1}^\workerexamples \nabla f^i(\vw_t^j;\pi_{t,i}(j))}_\infty \le \norm{\sum_{i=1}^\workers \sum_{j=1}^\workerexamples\nabla f^i(\vw_{t}^{j}; \pi_{t,i}(j)) - \sum_{i=1}^\workers\sum_{j=1}^\workerexamples\nabla f^i(\vw_{t+1};\pi_{t,i}(j))}_\infty +\\ \norm{\sum_{i=1}^\workers\sum_{j=1}^\workerexamples \nabla f^i(\vw_{t+1};\pi_{t,i}(j))}_\infty.
\end{align*}

By using the above, we can rewrite the right-hand side to be

\begin{align*}
&\le \sum_{i=1}^\workers\sum_{j=1}^\workerexamples L_{2,\infty}\norm{\vw_{t}^{j} - \vw_{t+1}}_\infty + \workers \workerexamples \norm{\nabla f(\vw_{t+1})}_\infty\\
&\le 2\workers \workerexamples L_{2,\infty}\Delta_{t} + \workers \workerexamples \norm{\nabla f(\vw_{t+1})}_2.
\end{align*}

Therefore, by Lemma~\ref{lem:pair-balance}, 

\begin{align*}
\max_{k \in [\workerexamples]}\norm{\sum_{j=1}^{k}\sum_{i=1}^\workers \nabla f^i(\vw_{t}^{\pi_{t,i}^{-1}\pi_{t+1,i}(j)}, \pi_{t+1,i}(j))}_\infty \le 
&\max_{k\in[\workerexamples]}\norm{\sum_{j=1}^{k}\sum_{i=1}^\workers \nabla f^i(\vw_{t}^{j}, \pi_{t,i}(j))}_\infty\\
+&2\workers \workerexamples L_{2,\infty}\Delta_{t} + \norm{\nabla f(\vw_{t+1})}_2 + (\varsigma + \sigma + 2L_{2,\infty} \Delta_{t}) \tilde{A}.
\end{align*}

The second term of the triangle inequality (\ref{lem:grad-balance:eq:triangle}) can be bounded as 

\begin{align*}
\norm{\sum_{j=1}^{k}\sum_{i=1}^\workers \left(\nabla f^i(\vw_{t+1,i}^j; \pi_{t+1,i}(j)) - \nabla f^i(\vw_{t,i}^{\pi_{t,i}^{-1}\pi_{t+1,i}(j)}; \pi_{t+1,i}(j))\right)}_\infty,
\end{align*}

which is

\begin{align*}
&\le \sum_{j=1}^k\sum_{i=1}^\workers \norm{\vw_{t+1,i}^j - \vw_{t,i}^{\pi_{t,i}^{-1}\pi_{t+1,i}(j)}}_\infty\\
&\le \workers \workerexamples L_{2,\infty} (\Delta_{t+1} + 2\Delta_{t}).
\end{align*}

Substituting these bounds into the right-hand side of the triangle inequality (\ref{lem:grad-balance:eq:triangle}), taking the max of both sides, and grouping terms, we get

\begin{align*}
\max_{k\in[\workerexamples]}\norm{\sum_{j=1}^{k}\sum_{i=1}^\workers \nabla f^i(\vw_{t+1,i}^j, \pi_{t+1,i}(j))}_\infty \le& \frac{1}{2}\max_{k\in[\workerexamples]}\norm{\sum_{j=1}^{k}\sum_{i=1}^\workers \nabla f^i(\vw_{t}^{j}, \pi_{t,i}(j))}_\infty\\
&+ L_{2,\infty}(4\workers \workerexamples + 2\tilde{A})\Delta_{t} + \workers \workerexamples L_{2,\infty} \Delta_{t+1} + (\varsigma + \sigma)\tilde{A}\\
&+ \workers \workerexamples \norm{\nabla f(\vw_{t+1})}_2.\\
\end{align*}

Multiplying both sides by $\frac{\alpha}{\workers}$ and using the definition of $\Delta_t$ (\ref{eq:deltat}), we get the claim.
\end{proof}
%\newpage
\subsection{Combining the prior intermediate results: proofs over multiple steps}\label{app:proof:last}

\begin{lemma} 
If the learning rate $\alpha \le \frac{1}{16 L_{2,\infty} (2\workerexamples + \tilde{A}/\workers)}$, then
\begin{align*}
\frac{1}{T}\sum_{t=1}^{T} \Delta_t^2 \le \frac{21\alpha^2(\varsigma + \sigma)^2\tilde{A}^2}{\workers^2} + \frac{9\alpha^2 \workerexamples^2 \sigma^2}{T} + 21\alpha^2 \workerexamples^2 \frac{1}{T}\sum_{t=1}^T\norm{\nabla f(\vw_t)}_2^2.
\end{align*}
\label{lem:Delta-1}
\end{lemma}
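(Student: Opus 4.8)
The plan is to convert the one-step estimate of Lemma~\ref{lem:grad-balance} into a genuine geometric contraction in $\Delta_t$, to treat the first epoch as a separate base case, and then to sum a squared version of that contraction. First I would substitute the learning-rate constraint $\alpha \le \frac{1}{16L_{2,\infty}(2\workerexamples + \tilde{A}/\workers)}$ into Lemma~\ref{lem:grad-balance}. Because this gives $\alpha L_{2,\infty}(4\workerexamples + 2\tilde{A}/\workers) \le \tfrac18$ and $\alpha\workerexamples L_{2,\infty} \le \tfrac1{16}$, the bound reduces to $\Delta_{t+1} \le (\tfrac12+\tfrac18)\Delta_t + \tfrac1{16}\Delta_{t+1} + \tfrac{\alpha(\varsigma+\sigma)\tilde{A}}{\workers} + \alpha\workerexamples\norm{\nabla f(\vw_{t+1})}_2$. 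Moving the $\Delta_{t+1}$ term to the left and multiplying by $\tfrac{16}{15}$ collapses this to the clean recursion
\begin{align*}
\Delta_{t+1} \le \tfrac23\Delta_t + u_{t+1}, \qquad u_{t+1} := \tfrac{16}{15}\Bigl(\tfrac{\alpha(\varsigma+\sigma)\tilde{A}}{\workers} + \alpha\workerexamples\norm{\nabla f(\vw_{t+1})}_2\Bigr),
\end{align*}
valid for every $t \in [T]$.

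The recursion presupposes that the permutation at epoch $t+1$ was produced by balancing epoch-$t$ gradients, so it cannot control $\Delta_1$, whose permutation is the arbitrary initial one; I would bound $\Delta_1$ directly. Writing out the prefix sum in the definition~(\ref{eq:deltat}) of $\Delta_1$ and inserting $\pm\nabla f^\windex(\vw_1)$ and $\pm\nabla f(\vw_1)$, the worker-mean deviations cancel since $\sum_{\windex}(\nabla f^\windex(\vw_1) - \nabla f(\vw_1)) = 0$; the per-example deviations are each bounded by $\sigma$ (Assumption~\ref{ass:inner-deviation}) and the weight-drift term $\nabla f^\windex(\vw_1^j;\cdot)-\nabla f^\windex(\vw_1;\cdot)$ by $L_{2,\infty}\Delta_1$ (Assumption~\ref{ass:smoothness}). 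Absorbing the self-referential $\alpha\workerexamples L_{2,\infty}\Delta_1 \le \tfrac1{16}\Delta_1$ contribution then yields $\Delta_1 \le \tfrac{16}{15}\alpha\workerexamples(\sigma + \norm{\nabla f(\vw_1)}_2)$.

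Next I would square the contraction using Young's inequality in the form $(x+y)^2 \le (1+\tfrac12)x^2 + 3y^2$, tuned so that $(1+\tfrac12)(\tfrac23)^2 = \tfrac23$, giving $\Delta_{t+1}^2 \le \tfrac23\Delta_t^2 + 3u_{t+1}^2$. Summing over $t = 1,\dots,T-1$, bounding $\sum_{t=1}^{T-1}\Delta_t^2 \le \sum_{t=1}^{T}\Delta_t^2$ on the right, and rearranging produces $\sum_{t=1}^T \Delta_t^2 \le 3\Delta_1^2 + 9\sum_{t=2}^T u_t^2$. Summing only to $T-1$ (rather than $T$) is what keeps every gradient index inside $\{1,\dots,T\}$ and avoids a stray $\norm{\nabla f(\vw_{T+1})}_2$. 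Finally I would expand $u_t^2$ and $\Delta_1^2$ with $(a+b)^2 \le 2a^2 + 2b^2$ and divide through by $T$.

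The only substantive work is the constant bookkeeping, and the delicate point is that the $\tfrac{\sigma^2}{T}$ term must arise solely from $3\Delta_1^2$: that piece contributes coefficient $6(\tfrac{16}{15})^2 \approx 6.83 \le 9$, whereas the heterogeneity and gradient terms pick up $9\cdot 2(\tfrac{16}{15})^2 = 18(\tfrac{16}{15})^2 \approx 20.5 \le 21$ from the geometric sum (the $\norm{\nabla f(\vw_1)}_2^2$ part of $\Delta_1^2$ merges into the gradient sum while staying under $21$). This is precisely why $\Delta_1$ is isolated rather than folded uniformly into the geometric sum, since a uniform treatment would give $21$ rather than $9$ on the $\sigma^2$ term. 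Matching the three constants to exactly $(21,\,9,\,21)$ is the main obstacle; the contraction and base-case steps are routine once the learning-rate bound is applied.
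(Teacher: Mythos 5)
Your proposal is correct and takes essentially the same route as the paper's own proof: both bound $\Delta_1$ separately via the same triangle-inequality argument (with only $\sigma$, not $\varsigma$, appearing), substitute the learning-rate constraint into Lemma~\ref{lem:grad-balance} to obtain a one-step contraction, square it, sum over $t=1,\dots,T-1$, and close the recursion with the factor $(1-2/3)^{-1}=3$ to land on the constants $(21,9,21)$. The only differences are bookkeeping choices --- you normalize to the clean recursion $\Delta_{t+1}\le\tfrac23\Delta_t+u_{t+1}$ before squaring and apply Young's inequality, whereas the paper squares first with the $(31/32)^2$ prefactor and uses the $(x+y)^2\le 2x^2+2y^2$ split, arriving at the same intermediate bound $\Delta_{t+1}^2\le\tfrac23\Delta_t^2+7\alpha^2(\varsigma+\sigma)^2\tilde{A}^2/\workers^2+7\alpha^2\workerexamples^2\norm{\nabla f(\vw_{t+1})}_2^2$.
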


\begin{proof}
First, we bound $\Delta_1^2$. 

We start with a series of triangle inequalities:
\begin{align*}
\frac{\alpha}{\workers}\norm{\sum_{j=1}^{k}\sum_{i=1}^\workers \nabla f^i(\vw_1^j, \pi_{1,i}(j))}_\infty \le & \frac{\alpha}{\workers}\norm{\sum_{j=1}^k \sum_{i=1}^\workers \nabla f^i(\vw_1^j, \pi_{1,i}(j)) - \sum_{j=1}^k \sum_{i=1}^\workers\nabla f^i(\vw_1, \pi_{1,i}(j))}_\infty\\
&+\frac{\alpha}{\workers}\norm{\sum_{j=1}^k\sum_{i=1}^\workers \left(\nabla f^i(\vw_1, \pi_{1,i}(j)) - \nabla f^i(\vw_1)\right)}_\infty + \alpha k \norm{\nabla f(\vw_1)}_\infty\\
\le& \frac{\alpha}{\workers}\sum_{j=1}^k\sum_{i=1}^\workers L_{2,\infty} \norm{\vw_1^j - \vw_1}_\infty + \alpha k \sigma + \alpha k \norm{\nabla f(\vw_1)}_2.
\end{align*}

We next take the max of both sides with respect to $k \in [\workerexamples]$:  

\begin{align*}
\Delta_1 &\le \alpha \workerexamples L_{2,\infty} \Delta_1 + \alpha \workerexamples \sigma + \alpha \workerexamples \norm{\nabla f(\vw_1)}_2\\ 
&\le (1/32) \Delta_1 + \alpha \workerexamples \sigma + \alpha \workerexamples \norm{\nabla f(\vw_1)}_2 \hspace{.5in} \text{(since} \hspace{.2cm} \alpha \le \frac{1}{32 \workerexamples L_{2,\infty}} \text{)} \\
&\le (32/31)\alpha \workerexamples \sigma + (32/31)\alpha \workerexamples \norm{\nabla f(\vw_1)}_2,
\end{align*}

Squaring both sides:

\begin{align}
\label{lem:Delta:eq:Delta_1}
\Delta_1^2 &\le 3\alpha^2 \workerexamples^2 \sigma^2 + 3\alpha^2 \workerexamples^2 \norm{\nabla f(\vw_1)}_2^2.
\end{align}

Now, we use Lemma~\ref{lem:grad-balance} to get the relationship between $\Delta_{t+1}$ and $\Delta_t$ for $t \in [T]$. 

Recall that 
% \begin{align*}
% \Delta_t &\le \alpha mL_\infty \Delta_t + \alpha (\varsigma + \sigma)\tilde{A}/n + \alpha L_\infty (4m + 2\tilde{A}/n)\Delta_{t-1} + 1/2\Delta_{t-1} + \alpha m\norm{\nabla f(w_t)}\\
% &\le (1/32)\Delta_t + \alpha(\varsigma + \sigma)A\log(mnd/2)/n + (1/8)\Delta_{t-1} + 1/2\Delta_{t-1} + \alpha\norm{\nabla f(w_t)},
% \end{align*}
\begin{align*}
\Delta_{t+1} &\le \frac{1}{2}\Delta_t + \alpha L_{2,\infty} \left(4 \workerexamples + \frac{2\tilde{A}}{\workers}\right)\Delta_{t} + \alpha \workerexamples L_{2,\infty} \Delta_{t+1} + \frac{\alpha (\varsigma + \sigma)\tilde{A}}{\workers} + \alpha \workerexamples \norm{\nabla f(\vw_{t+1})}_2\\
\end{align*}

Because $\alpha \le \frac{1}{16 L_{2,\infty} (2\workerexamples + \tilde{A}/\workers)}$, we can rewrite the above as

\begin{align*}
\Delta_{t+1} &\le \frac{1}{2}\Delta_t + (1/8)\Delta_{t} + (1/32) \Delta_{t+1} + \frac{\alpha (\varsigma + \sigma)\tilde{A}}{\workers} + \alpha \workerexamples \norm{\nabla f(\vw_{t+1})}_2. 
\end{align*}
%due to $\alpha \le \frac{1}{16 L_{2,\infty} (2\workerexamples + \tilde{A}/\workers)}$. 

Squaring both sides:

\begin{align*}
(31/32)^2\Delta_{t+1}^2 &\le \frac{1}{2}\Delta_{t}^2 + 2\left((1/8)\Delta_{t} + \frac{\alpha(\varsigma + \sigma)\tilde{A}}{\workers} +  \alpha \workerexamples \norm{\nabla f(\vw_{t+1})}_2\right)^2\\
&\le \frac{1}{2}\Delta_{t}^2 + (6/8^2)\Delta_{t}^2 + \frac{6\alpha^2(\varsigma + \sigma)^2\tilde{A}^2}{\workers^2} +  6\alpha^2 \workerexamples^2 \norm{\nabla f(\vw_{t+1})}_2^2,
\end{align*}
so that
\begin{align}
\begin{split}
\label{lem:Delta:eq:critical}
\Delta_{t+1}^2 &\le (32/31)^2(1/2 + 6/8^2)\Delta_t^2 + \frac{(32/31)^2 6\alpha^2(\varsigma + \sigma)^2\tilde{A}^2}{\workers^2} +  (32/31)^2 6\alpha^2 \workerexamples^2 \norm{\nabla f(\vw_{t+1})}_2^2\\
&\le (2/3)\Delta_t^2 + \frac{7\alpha^2(\varsigma + \sigma)^2\tilde{A}^2}{\workers^2} +  7\alpha^2 \workerexamples^2 \norm{\nabla f(\vw_{t+1})}_2^2.
\end{split}
\end{align}

We next sum (\ref{lem:Delta:eq:critical}) over $t \in [T-1]$ and add (\ref{lem:Delta:eq:Delta_1}):
\begin{align*}
\Delta_1^2 + \sum_{t=2}^T \Delta_t^2 &\le (2/3)\sum_{t=2}^T \Delta_{t-1}^2 + \frac{(T-1)7\alpha^2(\varsigma + \sigma)^2\tilde{A}^2}{\workers^2} + 3\alpha^2 \workerexamples^2 \sigma^2 + 7\alpha^2 \workerexamples^2 \sum_{t=1}^T\norm{\nabla f(\vw_t)}_2^2\\
\frac{1}{T}\sum_{t=1}^T \Delta_t^2 &\le (2/3)\frac{1}{T}\sum_{t=1}^T \Delta_t^2 + \frac{7\alpha^2(\varsigma + \sigma)^2\tilde{A}^2}{\workers^2} + \frac{3\alpha^2 \workerexamples^2 \sigma^2}{T} + 7\alpha^2 \workerexamples^2 \frac{1}{T}\sum_{t=1}^T\norm{\nabla f(\vw_t)}_2^2\\
&\le \frac{21\alpha^2(\varsigma + \sigma)^2\tilde{A}^2}{\workers^2} + \frac{9\alpha^2 \workerexamples^2 \sigma^2}{T} + 21\alpha^2 \workerexamples^2 \frac{1}{T}\sum_{t=1}^T\norm{\nabla f(\vw_t)}_2^2,
\end{align*}
yielding the claim. 
\end{proof}

We next build on Lemma~\ref{lem:Delta-1}. 

\begin{lemma} 
If $\alpha \le \frac{2}{9 \workerexamples \mu}$, then, for $\rho = 1 - \frac{\alpha \workerexamples \mu}{2}$, 
\begin{align*}
\sum_{t=1}^T \rho^{T-t} \Delta_t^2
&\le 12\rho^{T-1}\alpha^2 \workerexamples^2 \sigma^2 + \frac{28\rho\alpha^2 (\varsigma + \sigma)^2\tilde{A}^2}{(1-\rho)\workers^2} + \frac{1}{2L_{2,\infty}^2}\sum_{t=1}^T\rho^{T-t}\norm{\nabla f(\vw_t)}_2^2.
\end{align*}
\label{lem:Delta-2}
\end{lemma}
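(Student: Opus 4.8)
The plan is to convert the one-step recurrence established \emph{inside} the proof of Lemma~\ref{lem:Delta-1} into a $\rho$-discounted (geometric) sum, exactly paralleling how Lemma~\ref{lem:jens-pl} discounts the loss recursion. Concretely, I would reuse the two facts already proved there: the per-epoch contraction (\ref{lem:Delta:eq:critical}), $\Delta_{t+1}^2 \le \tfrac{2}{3}\Delta_t^2 + \tfrac{7\alpha^2(\varsigma+\sigma)^2\tilde{A}^2}{\workers^2} + 7\alpha^2\workerexamples^2\norm{\nabla f(\vw_{t+1})}_2^2$, and the base case (\ref{lem:Delta:eq:Delta_1}), $\Delta_1^2 \le 3\alpha^2\workerexamples^2\sigma^2 + 3\alpha^2\workerexamples^2\norm{\nabla f(\vw_1)}_2^2$. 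Write $S = \sum_{t=1}^T \rho^{T-t}\Delta_t^2$ for the quantity to be bounded.

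First I would multiply (\ref{lem:Delta:eq:critical}) by $\rho^{T-(t+1)}$ and sum over $t=1,\dots,T-1$, then add $\rho^{T-1}\Delta_1^2$ to recover all of $S$ on the left. Re-indexing the homogeneous part gives $\tfrac{2}{3}\sum_{t}\rho^{T-t-1}\Delta_t^2 = \tfrac{2}{3\rho}\sum_{t}\rho^{T-t}\Delta_t^2 \le \tfrac{2}{3\rho}S$ (after dropping the nonnegative boundary term $\Delta_T^2$). This produces a self-referential inequality $S \le \tfrac{2}{3\rho}S + (\text{constant, heterogeneity, and gradient terms})$, which I can solve for $S$ precisely when $\tfrac{2}{3\rho}<1$. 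This is the crux of the argument: the hypothesis $\alpha \le \tfrac{2}{9\workerexamples\mu}$ forces $\rho = 1 - \tfrac{\alpha\workerexamples\mu}{2} \ge \tfrac{8}{9}$, so $\tfrac{2}{3\rho} \le \tfrac{3}{4}<1$ and the geometric discount $\rho$ dominates the $\tfrac{2}{3}$ contraction rate. Rearranging to $S\bigl(1-\tfrac{2}{3\rho}\bigr) \le \cdots$ and dividing leaves a clean overall factor $\tfrac{3\rho}{3\rho-2} \le 4$.

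It then remains to collect and bound the three source terms. The $\sigma^2$ contribution comes only from the base case and retains its $\rho^{T-1}$ weight, yielding $12\rho^{T-1}\alpha^2\workerexamples^2\sigma^2$ after multiplying the coefficient $3$ by $4$. The heterogeneity term carries a geometric series $\sum_{k=0}^{T-2}\rho^k \le \tfrac{1}{1-\rho}$, producing the $\tfrac{\alpha^2(\varsigma+\sigma)^2\tilde{A}^2}{(1-\rho)\workers^2}$ term with its numerical constant. For the gradient term I would absorb the base-case piece into the running sum (using $3 \le 7$) to get $\tfrac{3\rho}{3\rho-2}\cdot 7\alpha^2\workerexamples^2\sum_t\rho^{T-t}\norm{\nabla f(\vw_t)}_2^2$, and then invoke the \emph{second} learning-rate constraint inherited from Lemma~\ref{lem:Delta-1}, namely $\alpha \le \tfrac{1}{16L_{2,\infty}(2\workerexamples+\tilde{A}/\workers)} \le \tfrac{1}{32\workerexamples L_{2,\infty}}$, so that $4\cdot 7\alpha^2\workerexamples^2 = 28\alpha^2\workerexamples^2 \le \tfrac{1}{2L_{2,\infty}^2}$. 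This final reduction is deliberate: the coefficient $\tfrac{1}{2L_{2,\infty}^2}$ is engineered to cancel exactly the $-\tfrac{1}{2L_{2,\infty}^2}\norm{\nabla f(\vw_t)}_2^2$ term of Lemma~\ref{lem:jens-pl} when the two results are combined to prove Theorem~\ref{thm:dgrab:PL}.

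The obstacle here is bookkeeping rather than conceptual: one must track the index shift in the self-referential sum (and the discarded boundary term $\Delta_T^2$), confirm $\tfrac{2}{3\rho}<1$ so the division is legitimate, and verify that the loose factor $\tfrac{3\rho}{3\rho-2}\le 4$ together with the two learning-rate constraints reproduces exactly the stated constants $12$, $28$, and $\tfrac{1}{2L_{2,\infty}^2}$. I would scrutinize the heterogeneity constant most carefully, since among the three it is the tightest against the worst-case $\rho = \tfrac{8}{9}$ (one should keep the true recurrence coefficient $(32/31)^2\cdot 6$ rather than the rounded $7$ if the margin is thin).
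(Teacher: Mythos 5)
Your proposal is correct and is essentially the paper's own proof: you discount the same two inequalities (\ref{lem:Delta:eq:Delta_1}) and (\ref{lem:Delta:eq:critical}) from Lemma~\ref{lem:Delta-1} by powers of $\rho$, use $\alpha \le \frac{2}{9\workerexamples\mu}$ to get $\rho \ge \frac{8}{9}$ so that the self-referential coefficient $\frac{2}{3\rho} \le \frac{3}{4}$, solve for the discounted sum with overall factor $\frac{3\rho}{3\rho-2}\le 4$ (the paper equivalently shifts indices term-by-term via $(2/3)\rho^{T-t}\Delta_{t-1}^2 \le (3/4)\rho^{T-(t-1)}\Delta_{t-1}^2$ and moves $\frac{3}{4}\sum_t\rho^{T-t}\Delta_t^2$ to the left), and close the gradient term with $28\alpha^2\workerexamples^2 \le \frac{1}{2L_{2,\infty}^2}$ using the learning-rate bound inherited from Lemma~\ref{lem:Delta-1}. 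Your closing caveat is exactly on target: with the rounded constants the heterogeneity term comes out as $\frac{28}{1-\rho}$ rather than the stated $\frac{28\rho}{1-\rho}$ --- the paper's own proof has the same slip, silently bounding $\sum_{t=1}^T\rho^{T-t}$ by $\frac{\rho}{1-\rho}$ instead of $\frac{1}{1-\rho}$ --- and recovering the extra factor of $\rho$ indeed requires keeping the unrounded coefficients $(32/31)^2(1/2+6/8^2)$ and $(32/31)^2\cdot 6$ from Lemma~\ref{lem:Delta-1}, precisely the fix you identify.
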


\begin{proof}
Recall (\ref{lem:Delta:eq:Delta_1}) from Lemma~\ref{lem:Delta-1}:

\begin{align*}
\Delta_1^2 &\le 3\alpha^2 \workerexamples^2 \sigma^2 + 3\alpha^2 \workerexamples^2 \norm{\nabla f(\vw_1)}_2^2.
\end{align*}

We multiply each term $\Delta_t$ with $\rho^{T-t}$ for $t \in [T]$ and get 

\begin{align}
\label{eq:delta1'}
\rho^{T-1}\Delta_1^2 &\le \rho^{T-1}3\alpha^2 \workerexamples^2 \sigma^2 + \rho^{T-1}3\alpha^2 \workerexamples^2 \norm{\nabla f(\vw_1)}_2^2. 
\end{align}

Similarly, recall (\ref{lem:Delta:eq:critical}) from Lemma~\ref{lem:Delta-1}, 

\begin{align*}
\Delta_{t+1}^2 &\le (2/3)\Delta_t^2 + \frac{7\alpha^2(\varsigma + \sigma)^2\tilde{A}^2}{\workers^2} +  7\alpha^2 \workerexamples^2 \norm{\nabla f(\vw_{t+1})}_2^2,
\end{align*}

for which we also multiply  each term $\Delta_t$ with $\rho^{T-t}$ for $t \in [T]$, and get

\begin{align}
\begin{split}
\rho^{T-t}\Delta_{t}^2 &\le (2/3)\rho^{T-t}\Delta_{t-1}^2 + \rho^{T-t}\frac{7\alpha^2 (\varsigma + \sigma)^2\tilde{A}^2}{\workers^2} + \rho^{T-t}7\alpha^2 \workerexamples^2 \norm{\nabla f(\vw_t)}_2^2\\
&\le (3/4)\rho^{T-(t-1)}\Delta_{t-1}^2  + \rho^{T-t}\frac{7\alpha^2 (\varsigma + \sigma)^2\tilde{A}^2}{\workers^2} + \rho^{T-t}7\alpha^2 \workerexamples^2 \norm{\nabla f(\vw_t)}_2^2,\quad \forall t \in \{2,\dots,T\}, 
\end{split}
\label{eq:deltacritical'}
\end{align}

where we have used $\alpha \le \frac{2}{9\workerexamples\mu}$ so that $\rho = 1 - \frac{\alpha \workerexamples \mu}{2} \ge (2/3)(4/3)$. 

Next, we sum the bounds in (\ref{eq:delta1'}) and (\ref{eq:deltacritical'}) for $\rho^{T-t} \Delta_t$ for all $t \in [T]$, and we get

\begin{align*}
\rho^{T-1}\Delta_1^2  + \sum_{t=2}^T \rho^{T-t} \Delta_t^2 &\le \frac{3}{4}\sum_{t=2}^T\rho^{T-(t-1)}\Delta_{t-1}^2 +  \rho^{T-1}3\alpha^2 \workerexamples^2 \sigma^2 + \sum_{t=1}^T\rho^{T-t}\frac{7\alpha^2 (\varsigma + \sigma)^2\tilde{A}^2}{\workers^2} + \\
& \quad 7\alpha^2 \workerexamples^2 \sum_{t=1}^T\rho^{T-t}\norm{\nabla f(\vw_t)}_2^2.
\end{align*}

We can rewrite the right-hand side as
\begin{align*}
&\le \frac{3}{4}\sum_{t=1}^T\rho^{T-t}\Delta_{t}^2 +  \rho^{T-1}3\alpha^2 \workerexamples^2 \sigma^2 + \frac{7\rho\alpha^2 (\varsigma + \sigma)^2\tilde{A}^2}{(1-\rho)\workers^2} + 7\alpha^2 \workerexamples^2 \sum_{t=1}^T\rho^{T-t}\norm{\nabla f(\vw_t)}_2^2\\
&\le 12\rho^{T-1}\alpha^2 \workerexamples^2 \sigma^2 + \frac{28\rho\alpha^2 (\varsigma + \sigma)^2\tilde{A}^2}{(1-\rho)\workers^2} + 28\alpha^2 \workerexamples^2 \sum_{t=1}^T\rho^{T-t}\norm{\nabla f(\vw_t)}_2^2.\\
\end{align*}

Lastly, we use $\alpha \le \frac{1}{\sqrt{56} \workerexamples L_{2,\infty}}$ to get:
\begin{align*}
\sum_{t=1}^T \rho^{T-t} \Delta_t^2
&\le 12\rho^{T-1}\alpha^2 \workerexamples^2 \sigma^2 + \frac{28\rho\alpha^2 (\varsigma + \sigma)^2\tilde{A}^2}{(1-\rho)\workers^2} + \frac{1}{2L_{2,\infty}^2}\sum_{t=1}^T\rho^{T-t}\norm{\nabla f(\vw_t)}^2.\\
\end{align*}
\end{proof}

\subsection{Proof of Theorems~\ref{thm:dgrab:smooth} and~\ref{thm:dgrab:PL}}\label{app:thm:proof}

Using the Lemmas above, we next prove our main results, presented in Section~\ref{sec:theory}.

\begin{proof}[Proof of Theorem \ref{thm:dgrab:smooth}]
The given learning rate $\alpha$ satisfies the constraints of Lemma~\ref{lem:jens} and Lemma~\ref{lem:Delta-1}. 

Therefore, 
\begin{align*}
\frac{1}{T}\sum_{t=1}^{T} \norm{\nabla f(\vw_t)}_2^2 &\le \frac{2F_1}{\alpha \workerexamples T} + L_{2,\infty}^2\left(\frac{21(\alpha(\varsigma + \sigma)\tilde{A})^2}{\workers^2} + \frac{9(\alpha \workerexamples \sigma)^2}{T} + 21\alpha^2 \workerexamples^2 \frac{1}{T}\sum_{t=1}^T\norm{\nabla f(\vw_t)}_2^2\right)\\
&\le \frac{4F_1}{\alpha \workerexamples T} + \frac{42L_{2,\infty}^2(\alpha(\varsigma + \sigma)\tilde{A})^2}{\workers^2} + \frac{18L_{2,\infty}^2(\alpha \workerexamples \sigma)^2}{T},\\
\end{align*}
due to $\alpha \le \frac{1}{\sqrt{42} \workerexamples L_{2,\infty}}$. 

We next derive the convergence rate. Let $\Gamma = \frac{42(L_{2,\infty}(\varsigma + \sigma)\tilde{A})^2}{\workers^2} + \frac{18L_{2,\infty}^2 \workerexamples^2 \sigma^2}{T}$. Then,
\begin{align*}
\frac{1}{T}\sum_{t=1}^{T} \norm{\nabla f(\vw_t)}_2^2 \le \frac{4F_1}{\alpha \workerexamples T} + \Gamma \alpha^2.
\end{align*}
We then set $\alpha \le \left(\frac{4F_1}{\workerexamples \Gamma T}\right)^{1/3}$. So we will have $\alpha = \min\left\{\frac{1}{16L_{2,\infty} (2\workerexamples + \tilde{A}/\workers)}, \left(\frac{4F_1}{\workerexamples\Gamma T}\right)^{1/3}\right\}$ or
\begin{align*}
\frac{1}{\alpha} = \max\left\{16 L_{2,\infty} (2\workerexamples + \tilde{A}/\workers), \left(\frac{4F_1}{\workerexamples\Gamma T}\right)^{-1/3}\right\}.
\end{align*}
Substitute $\alpha$:
\begin{align*}
\frac{1}{T}\sum_{t=1}^{T} \norm{\nabla f(\vw_t)}_2^2 &\le \frac{4F_1}{\workerexamples T} \left\{16 L_{2,\infty} (2\workerexamples + \tilde{A}/\workers) + \left(\frac{4F_1}{\workerexamples\Gamma T}\right)^{-1/3}\right\} + \Gamma \left(\frac{4F_1}{\workerexamples\Gamma T}\right)^{2/3}\\
&\le \left(\frac{4F_1}{\workerexamples T}\right)^{2/3}\Gamma^{1/3} + \frac{64F_1 L_{2,\infty} (2 + \tilde{A}/(\workers \workerexamples))}{T}\\
&\le \left(\frac{4F_1}{\workerexamples T}\right)^{2/3}\left(\frac{(\sqrt{42}L_{2,\infty}(\varsigma + \sigma)\tilde{A})^{2/3}}{\workers^{2/3}} + \frac{(\sqrt{18}L_{2,\infty} \workerexamples \sigma)^{2/3}}{T^{1/3}}\right)\\
&\quad + \frac{64 F_1 L_{2,\infty} (2 + \tilde{A}/(\workers \workerexamples))}{T}\\
&\le \frac{(4\sqrt{42} F_1 L_{2,\infty}(\varsigma + \sigma)\tilde{A})^{2/3}}{(\workers \workerexamples T)^{2/3}} + \frac{(72 F_1 L_{2,\infty}\sigma)^{2/3}}{T}\\
&\quad + \frac{64F_1 L_{2,\infty} (2 + \tilde{A}/(\workers \workerexamples))}{T},\\
\end{align*}
Since $(4\sqrt{42})^{2/3} < 9$, the above is

\begin{align*}
&\le \frac{9(F_1 L_{2,\infty}(\varsigma + \sigma)\tilde{A})^{2/3}}{(\workers \workerexamples T)^{2/3}} + \frac{(72 F_1 L_{2,\infty}\sigma)^{2/3} + 64F_1 L_{2,\infty} (2 + \tilde{A}/(\workers \workerexamples))}{T}, 
\end{align*}

in which the leading term (slowest in terms of $T$) is $\tilde{O}((\workers \workerexamples T)^{-2/3})$, proving the claim.
\end{proof}

\begin{proof}[Proof of Theorem \ref{thm:dgrab:PL}]
With the P.L. assumption (Assumption~\ref{ass:PL}), we use Lemma~\ref{lem:jens-pl} and Lemma~\ref{lem:Delta-2}. We show that their constraints are satisfied later) to get 
\begin{align*}
F_{T+1} &\le \rho^T F_1 + \frac{\alpha \workerexamples L_{2,\infty}^2}{2}\sum_{t=1}^T\rho^{T-t}\left(\Delta_t^2 - \frac{1}{2L_{2,\infty}^2} \norm{\nabla f(\vw_t)}_2^2\right)\\
&\le \rho^T F_1 + \frac{\alpha \workerexamples L_{2,\infty}^2}{2}\left(12\rho^{T-1}\alpha^2 \workerexamples^2 \sigma^2 + \frac{28\rho\alpha^2 (\varsigma + \sigma)^2\tilde{A}^2}{(1-\rho)\workers^2}\right)\\
&\le \rho^{T} F_1 + \rho^{T-1}6\alpha^3 \workerexamples^3 L_{2,\infty}^2\sigma^2 + \frac{28\rho\alpha^3 \workerexamples L_{2\infty}^2 (\varsigma + \sigma)^2\tilde{A}^2}{\alpha \workerexamples \mu \workers^2}\\
&\le \rho^{T} F_1 + \rho^{T} 7 \alpha^3 \workerexamples^3 L_{2,\infty}^2\sigma^2 + \frac{28\rho\alpha^3 \workerexamples L_{2\infty}^2 (\varsigma + \sigma)^2\tilde{A}^2}{\alpha \workerexamples \mu \workers^2}\\
&\le \rho^{T} (F_1 + \sigma^2/L_{2,\infty}) + \frac{28\alpha^2 L_{2,\infty}^2 (\varsigma + \sigma)^2\tilde{A}^2}{\mu \workers^2}\\
&\le (F_1 + \sigma^2/L_{2,\infty})\exp(- T\alpha \workerexamples \mu/2) + \frac{28\alpha^2 L_{2,\infty}^2 (\varsigma + \sigma)^2\tilde{A}^2}{\mu \workers^2},
\end{align*}
where we have further constrained $\alpha \le \frac{2}{9 \workerexamples \mu}$ so that $\rho \le 9/8$ in the forth inequality and  $\alpha \le \frac{1}{7^{1/3} \workerexamples L_{2,\infty}}$ in the fifth inequality. By setting the derivative w.r.t $\alpha$ of the RHS to 0, the minimizer $\alpha$ under the constraint that $0 < \alpha \le \min\left\{\frac{2}{9 \workerexamples \mu}, \frac{1}{16 L_{2,\infty}(2 \workerexamples + \tilde{A}/\workers)}\right\}$ (required by the lemmas) is:
\begin{align*}
\alpha = \frac{2}{T \workerexamples \mu} W_0(T^2 \workers^2 \workerexamples^2 C_3),
\end{align*}
as long as 
\begin{align*}
T &\ge 1 + \frac{2}{\workerexamples \mu}\max\{(9/2)\workerexamples \mu, 16 L_{2,\infty}(2\workerexamples+\tilde{A}/\workers)W_0(T^2\workers^2\workerexamples^2C_3)\}\\
&= 10 + \frac{1}{\mu}32 L_{2,\infty}(2+\tilde{A}/(\workers \workerexamples))W_0(T^2\workers^2\workerexamples^2C_3),
\end{align*}
where $C_3 = \frac{(F_1+\sigma^2/L_{2,\infty})\mu^2}{224L_{2,\infty}^2(\varsigma + \sigma)^2\tilde{A}^2}$. 

What we did here was to set $T$ just large enough so that the minimizer $\alpha$ is the same with or without the constraint. 

Denoting $\tilde{W} = W_0(T^2m^2n^2C_3) = \tilde{O}(1)$, we get 
\begin{align*}
F_{T+1} &\le  \frac{(F_1 + \sigma^2/L_{2,\infty})\tilde{W}}{T^2\workers^2\workerexamples^2C_3} + \frac{112L_{2,\infty}^2(\varsigma + \sigma)^2\tilde{A}^2\tilde{W}^2}{T^2\workers^2\workerexamples^2\mu^3}\\
&\le \frac{1}{T^2\workers^2\workerexamples^2}\left(\frac{(F_1 + \sigma^2/L_{2,\infty})\tilde{W}}{\tilde{C}_3} + \frac{112L_{2,\infty}^2(\varsigma + \sigma)^2\tilde{A}^2\tilde{W}^2}{\mu^3}\right),\\
\end{align*}
which shows rate the convergence rate in the P.L. case is $\tilde{O}((\workers \workerexamples T)^{-2})$.
\end{proof}
\definecolor{codegreen}{rgb}{0,0.6,0}
\definecolor{codegray}{rgb}{0.5,0.5,0.5}
\definecolor{codepurple}{rgb}{0.58,0,0.82}
\definecolor{backcolour}{rgb}{0.95,0.95,0.92}

\lstdefinestyle{mystyle}{
    backgroundcolor=\color{backcolour},   
    commentstyle=\color{codegreen},
    keywordstyle=\color{magenta},
    numberstyle=\tiny\color{codegray},
    stringstyle=\color{codepurple},
    basicstyle=\ttfamily\footnotesize,
    breakatwhitespace=false,         
    breaklines=true,                 
    captionpos=b,                    
    keepspaces=true,                 
    numbers=left,                    
    numbersep=5pt,                  
    showspaces=false,                
    showstringspaces=false,
    showtabs=false,                  
    tabsize=2
}

\lstset{style=mystyle}

\section{Experiment Details}

Here we provide more extensive details on our empirical results. This includes background information on our experimental setup in the main paper (Appendix~\ref{sec:appendix-model-dataset}), an additional simulation experiment on pre-training and fine-tuning Tiny GPT-2 (Appendix~\ref{appendix-gpt2}), and an additional simulation experiment that investigates \dgrab{} with different learning rates (Appendix~\ref{appendix-lr}). Our source codes can be found \href{https://github.com/GarlGuo/CD-GraB}{here}.

\subsection{Additional details on setup for main paper experiments}\label{sec:appendix-model-dataset}

\subsubsection{Distributed experiments}

We provide additional details on the experiments shown in Figure~\ref{fig:exp}.

\custompar{Hardware and software} We use a single machine with 128 GiB memory, 1 CPU, and 4 Nvidia GeForce 2080ti GPUs for the HMDA mortgage application, M4, and WikiText-2 tasks. We first discard the remainder $\examples \mod B$, and then randomly partition $\workerexamples$ to each worker. Our experiments are all implemented with the PyTorch library. We release our code suite at [REDACTED].

\custompar{Datasets and models}

\begin{itemize}[topsep=0pt, leftmargin=.5cm]

    \item \textbf{Logistic regression on mortgage application (NY 2017 subset)}: The US Home Mortgage Disclose Act (HMDA) makes available US national data regarding mortgage applications, which has recently been packaged up for easy ML research use~\citep{cooper2023variance}. We use the binary classification version of the task, which classifies features as either ``grant loan'' or ``deny loan,'' for the New York (NY) 2017 subset of the dataset, which includes 244107 examples with 18 features. We model this problem using logistic regression, for which we first perform a random 80/20 train/test split on the raw dataset, and then we discard $\examples \mod B$ ($B$ is the aggregated minibatch size) examples to ensure that each worker receives exactly $\workerexamples$ examples. We use 1 worker per GPU, and in total we have $\workers=4$ workers, and use NCCL~\citep{nccl} as the distributed communication backend; $\workers=4, \; \workerexamples=48816, \; d=18, \; B = 16$. We report test accuracy as our evaluation metric. 
    % with SGD momentum (momentum coefficient 0.9) optimizer. 
     % We use the aggregated minibatch size $B=16$. 
    
    %$m = 4, n = 48816, d = 18, B = 16$

    \item \textbf{LSTM on WikiText-2}: We follow the settings in \citet{lu2022grab} and train a 2-layer LSTM with an embedding size of 32 and dropout set to 0. We use backpropagation through time, for which we set the sequence length to 35. We also adopt the word-vector-classifier-weight-sharing strategy inspired by \citet{inan2016tying}.  WikiText-2 \citep{stephen2017pointer} has 600 articles in the train set, with more than 2M tokens and 30K vocabulary; the validation and test sets each have 60 articles. We adapt our training script from \href{https://github.com/pytorch/examples/tree/main/word_language_model}{PyTorch's official Word Language Modeling Github repository}. We use 4 workers in total, with each GPU hosting 1 worker, and use NCCL as the distributed communication backend; $\workers = 4, \; \workerexamples= 3728, \; d = 1081760, \; B = 16$.  We report test perplexity as the evaluation metric, and we follow the HuggingFace's approach of computing perplexity as the exponentiated average negative log-likelihood of a sequence\footnote{\url{https://huggingface.co/docs/transformers/perplexity}}.
    
    \item \textbf{Autoregressive MLP on M4 Weekly Dataset}: We build a 3-layer autoregressive MLP with a hidden dimension of 64. We set input sequence length to be 20 and the output sequence length to be 6. M4 is a time series dataset composed of 100,000 time series for yearly, quarterly, monthly, weekly, daily and hourly data~\citep{MAKRIDAKIS202054}, which is drawn from a random sample of ForeDeCk database \citep{spiliotis2020forecasting}. We use the weekly data in our experiment. We use 32 workers, where each of the 4 GPUs hosts 8 process workers. We use GLOO as the distributed communication backend. $\workers = 32, \; \workerexamples = 3355, \; d = 5569, \; B = 32$.  We report test symmetric mean absolute percentage error (SMAPE) as the evaluation metric. We follow the formula of SMAPE in \citep{MAKRIDAKIS202054} as follows: 
\begin{equation*}
    \text{SMAPE} \triangleq \dfrac{2}{h} \sum_{t = n + 1}^{n + h} \dfrac{|Y_t - \hat{Y}_t|}{|Y_t| + |\hat{Y}_t|} * 100\%
\end{equation*}
where $Y_t$ is the reference time series value at timestep $t$, $\hat{Y}_t$ is the forecast time series value at timestep $t$, and $h$ is the forecasting horizon and $n$ is the number of datapoints. 

\end{itemize}

\custompar{Hyperparameter optimization} For all tasks, we tune the learning rate $\alpha$ for \dshuffle{} first, and then use the selected learning rate for \dgrab{}. Therefore, an performance improvement here implies we would have in-place substitution benefits via switching from \dshuffle{} to \dgrab{} with identical learning rate and experiment setups. We use SGD with momentum as the optimizer for all tasks. The hyperparameters for each task are as follows:

\begin{itemize}[topsep=0pt, leftmargin=.5cm]
    \item \textbf{Logistic regression on mortgage application (NY 2017 subset)}: $\alpha = $ 5e-3 $\in \{$1e-2, 5e-3, 1e-3$\}$, momentum: 0.9, weight decay: 0, $B$: 16.

    \item \textbf{LSTM on WikiText-2}: $\alpha = 5 \in \{$5, 10$\}$ and decays by 0.1 per 10 epochs, momentum: 0.9, weight decay: 0, $B$: 16. 

    \item \textbf{Autoregressive MLP on Weekly M4 Dataset} $\alpha = $ 1e-3 $\in \{$1e-2, 1e-3, 1e-4$\}$, momentum: 0.9, weight decay: 0, $B$: 32.
\end{itemize}

\subsubsection{Memory Overhead of \dgrab{} in LSTM on WikiText-2 Task} \label{sec:appendix-memory}

We profile the CUDA memory usage for the LSTM on WikiText-2 Task with \dgrab{} and \dshuffle{} to understand the memory overhead of both data permutation algorithms. This memory analysis is both task and implementation dependent, but still serves to illustrate the overarching point that \dgrab's memory overhead is not so significant. The additional overhead comes from two sources for \dgrab: communication and example sorting (Figure~\ref{fig:cuda-memory-lstm}). 

In more detail: In our LSTM experiment, each local worker will share its gradients with all other workers at every optimization step. To reduce the communication burden of \dgrab, we make each local worker function as an order server.\footnote{An ideal location for a dedicated order server is on a network node that has large input bandwidth and memory buffer to host all gradients while not blocking the normal optimization stages. Since we do not have enough computational resources to host a dedicated order server, we make each worker an order server.} The memory consumption of forward, backward, and optimizer states between \dgrab{} and \dshuffle{} should be (at least approximately) identical. 
The model size of LSTM is roughly 4 MiB. We use 4 workers, and as each worker (functioning as an order server) needs to \textit{all-gather} gradients, the memory overhead for \textit{all-gather} communication is roughly tensor\_size $\times$ \# workers = 4 MiB $\times$ 4 = 16 MiB for \dgrab (we observe 16.51 MiB in practice, Communication in Figure~\ref{fig:cuda-memory-lstm}), while \dshuffle{} only needs to \textit{all-reduce} the gradients (yielding no memory overhead; the communication buffer for \textit{all-reduce} is reusing the same gradient tensor). The $\mathsf{PairBalance}$ algorithm (Algorithm~\ref{alg:pairbalance}) internally needs a model-sized accumulator as the running sum $\vr$, and both computing inner product between $\vr$ and $\vg_1 - \vg_2$ and updating $\vr$ with $\vr + s \vc$ takes virtually no space with a memory-efficient implementation. Therefore, the memory consumption for $\mathsf{PairBalance}$ is still roughly 4 MiB (Data Sorter in Figure~\ref{fig:cuda-memory-lstm}). 
.

\begin{figure}[!ht]
\vspace{-.2cm}
  \centering
    \includegraphics[width=\columnwidth]{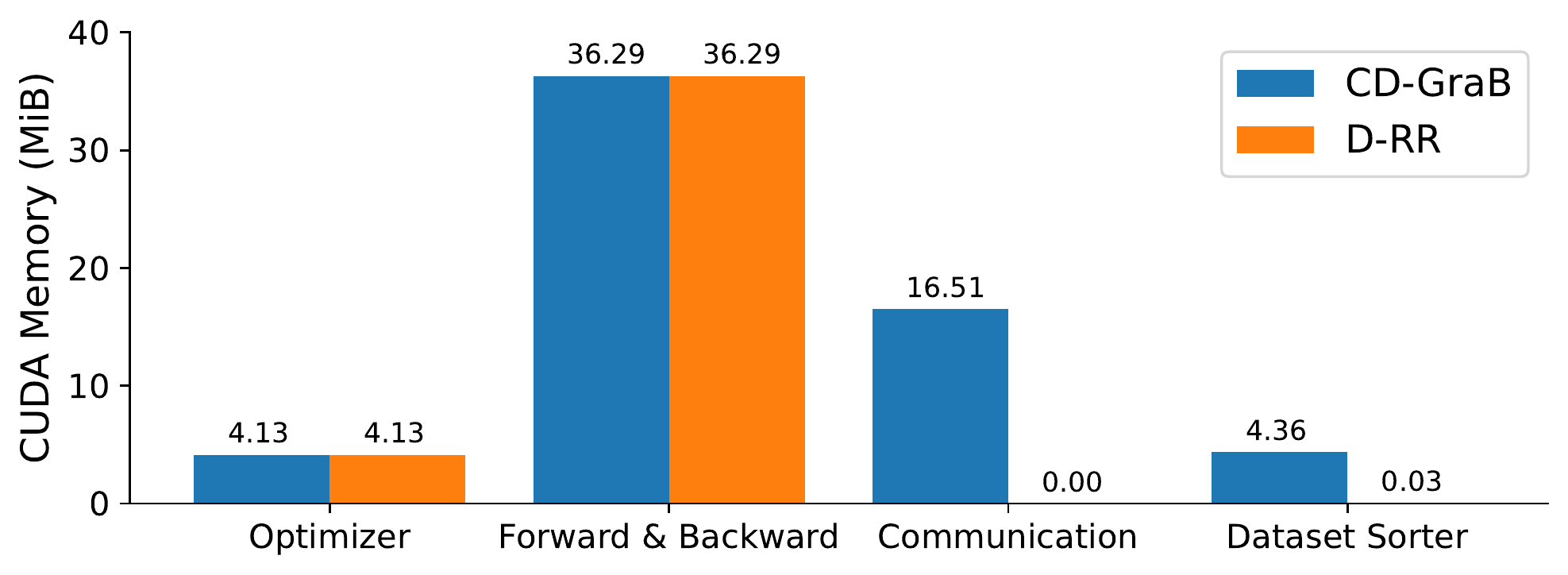}
    \caption{CUDA Memory Overhead of \dgrab{} and \dshuffle{} in LSTM on WikiText-2 Task. 
    \looseness=-1} 
  \label{fig:cuda-memory-lstm}
  % \vspace{-.75cm}
\end{figure}

The main memory overhead of \dgrab{} will be dominated by the communication buffer size on the order server side: the order server have to gather the gradient (differences) from all workers, and sequentially apply the $\mathsf{PairBalance}$ algorithm. This memory bottleneck would similarly be found in \grab{} as \grab{} also needs per-example gradients to perform $\mathsf{Balance}$ sequentially. 

A future algorithmic improvement to the general gradient balancing framework would be finding a balancing algorithm that does not need per-example gradients to achieve comparable convergence guarantees. However, we still notice that $\mathsf{PairBalance}$ is more memory-efficient than $\mathsf{Balance}$ as $\mathsf{Balance}$ needs to store 3 model-sized tensors: 1 for the balancing accumulator, 1 for running-average gradients for last epoch, and 1 for the running-average for current epoch. In constrast, $\mathsf{PairBalance}$ only needs 1 model-sized tensor as the balancing accumulator. 

\subsubsection{Simulated ablation study using LeNet on CIFAR-10 \label{appendix:workers}}

In the experiment shown on Figure~\ref{fig:nodes}, we select the same learning rate, momentum, and weight decay as the LeNet experiment in \citet{lu2022grab}. We use 3 different random seeds to control 3 different initialization and the randomness in random reshuffling. The aggregated minibatch size $B$ is 64 for all runs. We implement this ablation study by using 1 GPU with up to $m=64$ workers (processes). As above, we discard $\examples \mod B$ examples and partition the remaining examples evenly on each worker. %and partition the dataset into 64 folds and each algorithm would work on the partitioned dataset. 

$\alpha = $ 1e-3 $\in \{$1e-2, 5e-3, 1e-3, 5e-4, 1e-4$\}$, momentum: 0.9, weight decay: 1e-2, $B$: 64. 

We do not implement this via distributed environment due to the fact that we do not have access to 64 GPUs, but expect the simulation results to be a good reflection of the results we would obtain in a multi-GPU setting. %real distributed results.

\custompar{Parallel herding bound} We further investigate the empirical parallel herding bounds (\ref{equ:paraherding:objective}) for the LeNet experiment for the different ordering methods. We plot the results in Figure~\ref{fig:parallel-herding-bound-lenet}. We observe that as the number of workers increases, the empirical parallel herding bounds of both \textbf{ID-\grab{} (Bal)} and \textbf{ID-\grab{} (PairBal)} also increase, and eventually exhibit little difference with \dshuffle{}. \dgrab, in contrast, exhibits a consistently lower bound.

\begin{figure}[!ht]
\vspace{-.2cm}
  \centering
    \includegraphics[width=\columnwidth]{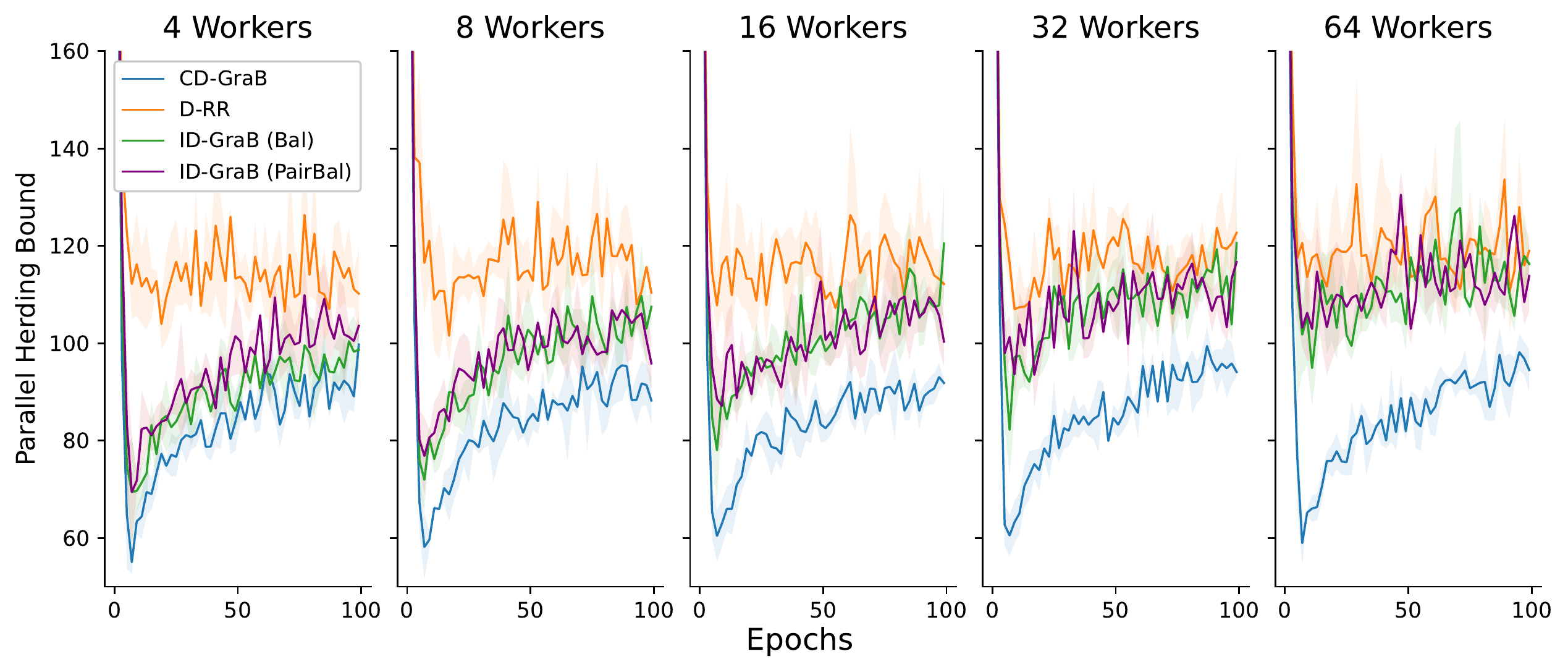}
    \vspace{-.5cm}
    \caption{Empirical parallel herding bounds of gradients for each algorithm in LeNet experiment. We plot the mean as the curve and standard deviation across 3 random seeds. %alues across random seeds as the curve, the standard deviation as the shaded area.
    \looseness=-1} 
  \label{fig:parallel-herding-bound-lenet}
  % \vspace{-.75cm}
\end{figure}

For comparison, we also run a simulation experiment on synthetic data to investigate the behavior of the parallel herding bound. We include these below, in Figure~\ref{fig:parallel-herding-bound-sim}. 

We randomly initialize 1 million random vectors $\exij$ from a uniform distribution between 0 and 1 with 16 dimensions as $\exij \sim \mathsf{Unif}(0, 1)^{16}$, and then we zero-center this set of 1 million vectors and normalize them to all have $L_2$ norm as 1. We then evenly partition this set of 1 million random vectors to $\{5, 10, 20, 50, 100 \}$ workers and run each example ordering algorithm. 

\begin{figure}[!t]
\vspace{-.2cm}
  \centering
    \includegraphics[width=\columnwidth]{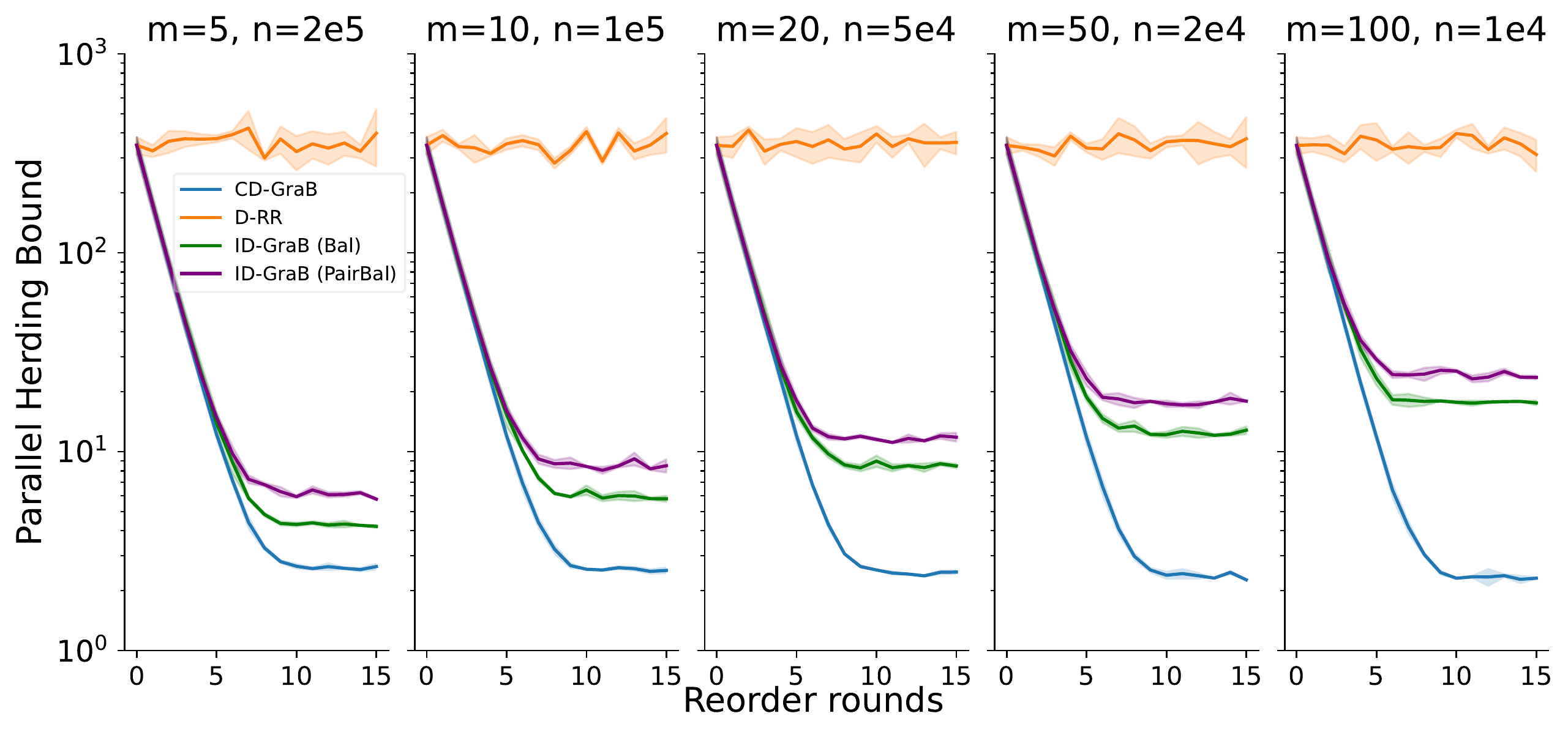}
    \caption{Parallel herding bounds for different example ordering algorithms on $\examples$=1 million random vectors. We use 3 random seeds, plot the mean and standard deviation across each random seed as the shaded area.%alues across random seeds as the curve, the standard deviation as the shaded area.
    \looseness=-1} 
  \label{fig:parallel-herding-bound-sim}
  \vspace{-.2cm}
\end{figure}

In Figure~\ref{fig:parallel-herding-bound-sim}, we run \dgrab, \dshuffle, \textbf{ID-\grab{} (Bal)}, \textbf{ID-\grab{} (PairBal)} on these random vectors, and compute the parallel herding bounds (\ref{equ:paraherding:objective}). From left to right in Figure~\ref{fig:parallel-herding-bound-sim}, we observe that as the number of workers $\workers$ increases, the parallel herding bound of \textbf{ID-\grab{} (Bal)}, \textbf{ID-\grab{} (PairBal)} becomes larger. This shows the importance of coordination when we have a large number of workers.

These results for random vectors cohere with our above results for LeNet on CIFAR-10.

%The followings are the details of our models \& datasets in the experiment section. 

% \todo{Put in Bert example to show why fine-tuning is not a great showcase of what we're doing}

% \todo{Put back in results on profiling, but make sure the numbers add to 100\% and that we are clear what we're saving on wrt D-GraB}

% \todo{full setup details -- simulation and distributed}
% \todo{meaning and interpretation of test metrics}
% \todo{HPO}

% BERT results. Finetune results
% BERT-tiny on wiki103

\subsection{An additional simulation experiment: pre-training and fine-tuning Tiny GPT-2}\label{appendix-gpt2}

We perform an end-to-end simulation experiment involving pre-training and fine-tuning Tiny GPT-2 on WikiText-103, which we document below. 

\subsubsection{Pre-training}

We adapt the training script from the \href{https://github.com/huggingface/transformers/blob/main/examples/pytorch/language-modeling/run_clm_no_trainer.py}{HuggingFace's PyTorch casual language modeling code} to train the GPT-2 architecture~\citep{radford2019language}. We set the maximum sequence length to 128 and token and positional embedding dimension to 128; use 2 hidden layers in the transformer encoder and 2 attention heads; and disable dropout. This model configuration corresponds to the following Python code snippet:

\begin{lstlisting}[language=Python]
from transformers import GPT2Config, GPT2LMHeadModel, GPT2Tokenizer

tokenizer = GPT2Tokenizer.from_pretrained('gpt2')
config = GPT2Config.from_pretrained('gpt2')
config.n_embd = 128
config.n_ctx = 128
config.n_layer = 2
config.n_head = 2
config.n_positions = 128
config.summary_first_dropout = 0
config.attn_pdrop = 0
config.resid_pdrop = 0
model = GPT2LMHeadModel(config)
\end{lstlisting}

We train our Tiny GPT-2 model from scratch on WikiText-103~\citep{stephen2017pointer}. WikiText-103 is a standard language modeling benchmark that has 28,475 articles in the train set, and 60 for both the validation and test sets, with more than 100M tokens and 267K vocabulary inside the train set. We use the original GPT-2 tokenizer, and use maximum sequence length 128. We note that this is much smaller than the default maximum sequence length for GPT-2, which is 1024, which was too large to use given our computational budget. Nevertheless, 128 is still a reasonable sequence length for the initial phrase of pre-training; BERT uses a sequence length of 128 for the first 90\% of pre-training steps to speedup the experiment~\cite{devlin2019bert}. We tune the learning rate for \dshuffle{} with the grid $\{$ 5e-3, 1e-3, 5e-4, 1e-4 $\}$ (the final learning rate is 5e-4), and use AdamW optimizer~\cite{loshchilov2017decoupled}. We use 3 random seeds. Before the training, we simulate 64 workers, and similarly divide the training dataset evenly across them by discarding $\examples \mod B$ examples. Our hyperparameter optimization space is listed below:

\custompar{Pretraining Hyperparameters} $\alpha = $5e-4 $\in \{$5e-3, 1e-3, 5e-4, 1e-4$\}$, weight decay: 1e-4, $B$: 64.

\begin{figure}[!ht]
\vspace{-.2cm}
  \centering
    \includegraphics[width=\columnwidth]{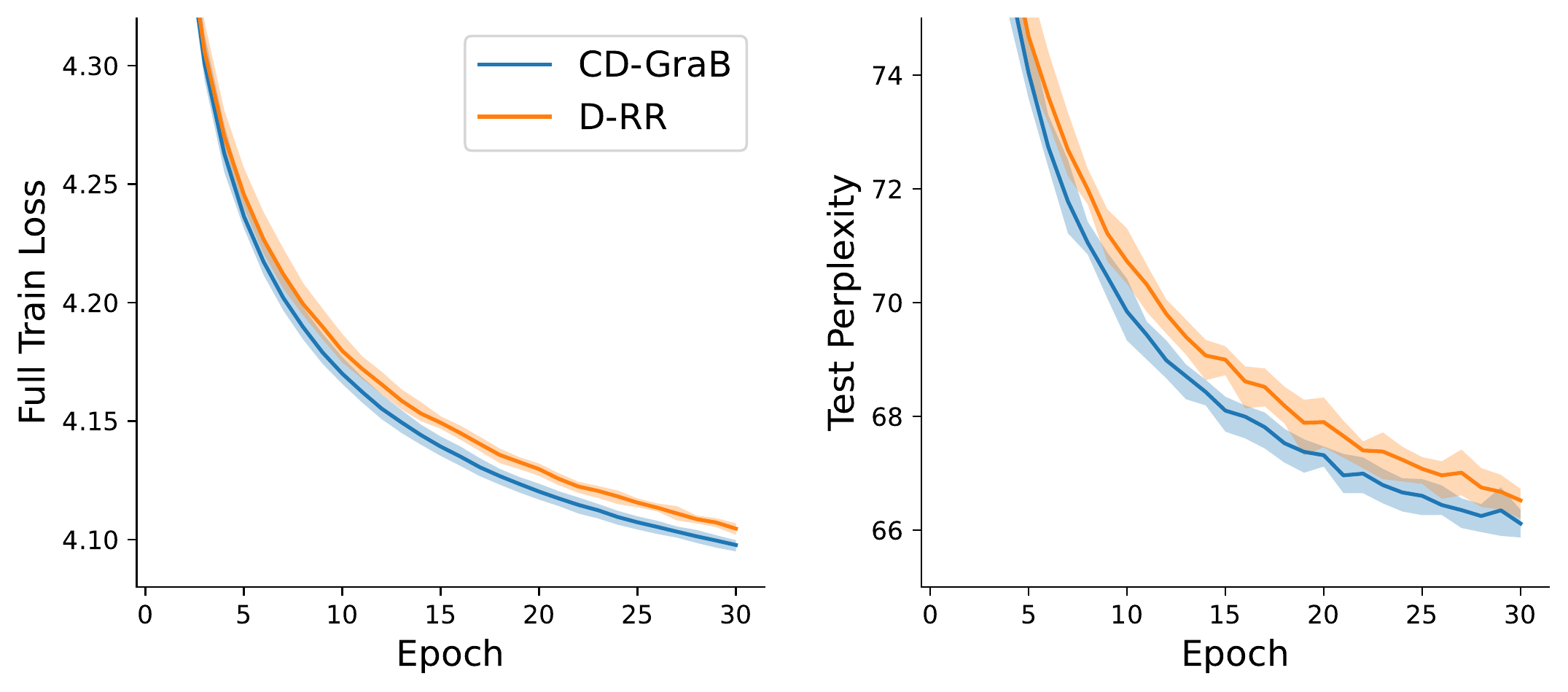}
    \vspace{-.5cm}
    \caption{Pre-training Tiny GPT-2 on WikiText-103 from scratch: Convergence for \dgrab{} and \dshuffle{} with $\workers = 64$ workers. The aggregated minibatch size per update is 64. We use 3 random seeds, and plot the mean vand standard deviation. %alues across random seeds as the curve, the standard deviation as the shaded area.
    \looseness=-1} 
  \label{fig:gpt2}
  % \vspace{-.75cm}
\end{figure}

We document convergence for pre-training in Figure~\ref{fig:gpt2}, and use test perplexity as our evaluation metric.

\subsubsection{Fine-tuning}

We then fine-tune the pre-trained Tiny GPT-2 model on downstream tasks. For each task, we load the pre-trained foundation model weights obtained at the end of 30 epochs of each example ordering algorithm after pretraining, and use the same example ordering algorithm to perform supervised fine-tuning. We focus on the largest 4 GLUE tasks~\cite{wang-etal-2018-glue}: MNLI, QQP, QNLI, and SST2. We tune the learning rate for \dshuffle{} with the AdamW optimizer, and for each run we report the best validation accuracy. We then take an average results of each run and summarize them in Table~\ref{tab:gpt2-glue}. Our training script is adapted from the \href{https://github.com/huggingface/transformers/blob/main/examples/pytorch/text-classification/run_glue_no_trainer.py}{HuggingFace's PyTorch GLUE fine-tuning example codes}.

\textbf{Fine-Tuning Hyperparameters}
\begin{itemize}[leftmargin=.5cm]
    \item \textbf{MNLI} $\alpha = $5e-4$ \in \{$5e-3, 1e-3, 5e-4, 1e-4$\}$, Weight decay: 1e-4, $B$: 32, epochs: 10, linear learning rate scheduler

     \item \textbf{QQP} $\alpha = $5e-4$ \in \{$5e-3, 1e-3, 5e-4, 1e-4$\}$, Weight decay: 1e-4, $B$: 32, epochs: 10, linear learning rate scheduler

     \item \textbf{QNLI} $\alpha = $5e-4$ \in \{$5e-3, 1e-3, 5e-4, 1e-4$\}$, Weight decay: 1e-4, $B$: 32, epochs: 10, linear learning rate scheduler

     \item \textbf{SST2} $\alpha = $5e-4$ \in \{$5e-3, 1e-3, 5e-4, 1e-4$\}$, Weight decay: 1e-4, $B$: 32, epochs: 10, linear learning rate scheduler
\end{itemize}
% $\alpha = $5e-4$ \in \{$5e-3, 1e-3, 5e-4, 1e-4, 5e-5, 1e-5$\}$, Weight decay: 1e-4, $B$: 32, epochs: 10, linear learning rate scheduler

\begin{table*}[!ht] 
 \centering 
 \setlength{\tabcolsep}{2pt}
 \small

 \begin{tabular}{c|ccccc}
    \toprule
    
    & \textbf{MNLI (Matched)} & \textbf{MNLI (Mismatched)}  & \textbf{QQP} & \textbf{QNLI} & \textbf{SST2}  
    \\
    \midrule
    
    \textbf{\dgrab}
    & 65.91 $\pm$ 0.46 \%
    & 64.36 $\pm$ 2.03 \%
    & 82.25 $\pm$ 0.21 \%
    & 62.11 $\pm$ 0.70 \%
    & 82.65 $\pm$ 0.39 \%
    \\

    \textbf{\dshuffle} 
    & 65.42 $\pm$ 0.36 \%
    & 63.93 $\pm$ 1.63 \%
    & 81.74 $\pm$ 0.33 \%
    & 61.87 $\pm$ 0.67 \%    
    & 82.68 $\pm$ 0.57 \%
    \\

    \bottomrule
    \end{tabular}

    \caption{GLUE fine-tuning datasets: Validation accuracy of \dgrab{} in comparison to  \dshuffle{}, reporting mean and standard deviation of best results for each run. There are 3 runs for each example ordering algorithm.}
    \label{tab:gpt2-glue}
    
\end{table*}

We include these fine-tuning results in part to support our claim in Section~\ref{sec:conclusion} that \dgrab{} exhibits its benefits more clearly when there are more training epochs. Our pre-training results suggest that \dgrab{} would confer benefits to pre-training large models over multiple epochs; however, \dgrab{} will not necessarily be useful for short runs of fine-tuning (as indicated in Table~\ref{tab:gpt2-glue}, for which the results for both ordering algorithms are effectively identical).

\newpage
\subsection{Ablation simulation study: The impact of learning rate $\alpha$}\label{appendix-lr}

In the experiment shown on Figure~\ref{fig:lenet-lr}, we select the same momentum and weight decay as the LeNet experiment for 3 random seeds as in Appendix~\ref{appendix:workers}. The aggregated minibatch size is still 64 for all runs, and we use 64 workers.

\begin{figure}[!h]
\vspace{-.2cm}
  \centering
    \includegraphics[width=\columnwidth]{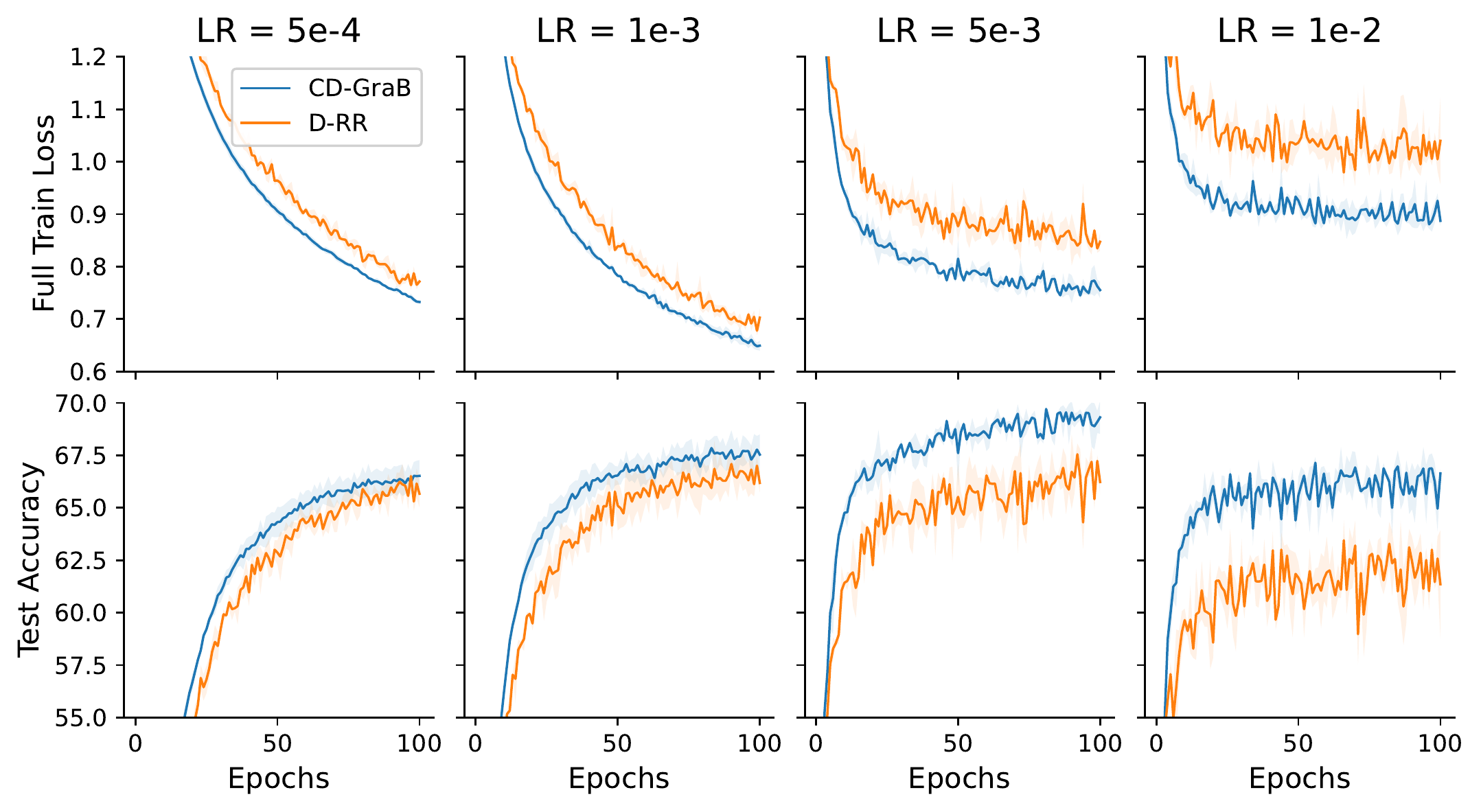}
    \vspace{-.5cm}
    \caption{Convergence for \dgrab, \dshuffle{}  training LeNet on CIFAR-10, with $\workers = 64$ workers. The aggregated minibatch size per update is 64. We use 3 random seeds, and plot the mean values across random seeds as the curve, the standard deviation as the shaded area.\looseness=-1} 
  \label{fig:lenet-lr}
  % \vspace{-.75cm}
\end{figure}

We find that when we increase the learning rate from 1e-3 to 1e-2, \dgrab{} still maintains relatively better performance than \dshuffle. The best learning rate for \dshuffle{} is 1e-3, in terms of achieving the best test accuracy. We did not tune the learning rate for \dgrab, and we expect that it is possible to use a higher learning rate and still maintain better empirical performance than \dshuffle{} and even faster convergence. We defer such empirical investigations to future work. Altogether, these preliminary empirical results confirm that it is possible to use higher learning rate for \dgrab{}, given that online $\mathsf{PairBalance}$ does not need to use a stale mean (Section~\ref{sec:dgrab:solution}), which would make larger learning rates perform poorly. %with less concerns on the stochastic gradient noises per each step to achieve even faster convergence in practices.

%\fi

\end{document}